\newcommand{\ALC}{\ensuremath{\mathcal{ALC}}}
\newcommand{\K}{\ensuremath{\mathcal{K}}}
\newcommand{\T}{\ensuremath{\mathcal{T}}}
\newcommand{\A}{\ensuremath{\mathcal{A}}}
\newcommand{\C}[1]{\ensuremath{\mathsf{C}({#1})}}
\newcommand{\nat}{\ensuremath{\mathbb{N}}}
\newcommand{\I}{\ensuremath{\mathcal{I}}}
\newcommand{\isdraft}{\boolean{true}} 
\newcommand{\markupdraft}[2]{
    \ifthenelse{\equal{#1}{display}}{#2}{}
    \ifthenelse{\equal{#1}{color}}{\color{#2}}{}
}
\newcommand{\newcolored}[3][]{{\markupdraft{color}{#2}#3}
    \ifthenelse{\equal{#1}{}}{}{\markupdraft{display}{{\color{yellow!70!black}[#1]}}}} 
\ifthenelse{\isdraft}{}{\renewcommand{\markupdraft}[2]{}}
\newcommand{\Set}[1]{\{{#1}\}}
\newtheorem{definition}{Definition}
\newtheorem{notation}{Notation}
\newtheorem{lemma}{Lemma}
\newtheorem{proposition}{Proposition}
\newtheorem{theorem}{Theorem}
\newtheorem{corollary}{Corollary}
\newtheorem{example}{Example}
\newtheorem{remark}{Remark}
\newtheorem{cas}{Case}
\newenvironment{ex.}{
        \medskip
        \noindent {\bf Example}}
        {\hfill$\Box$ \medskip}
\newcommand{\LLE}{\mbox{\sf LLE}}
\def\RLE{\mbox{\sf RLE}}
\def\Econ{\mbox{\sf E-Con}}
\def\ECM{\mbox{\sf E-CM}}
\def\EDR{\mbox{\sf E-DR}}
\def\RA{\mbox{\sf RS}}
\def\ECC{\mbox{\sf E-C-Cut}}
\def\ERC{\mbox{\sf E-R-Cut}}
\def\ROR{\mbox{\sf ROR}}
\def\LOR{\mbox{\sf LOR}}
\def\ERef{\mbox{\sf E-Reflexivity}}
\def\cal#1{\ensuremath\mathcal #1}
\def\presuper#1#2%
\begin{document}

\title{Explanatory relations in arbitrary logics based on satisfaction systems, cutting and retraction}

\author{Marc Aiguier$^1$, Jamal Atif$^2$, Isabelle Bloch$^3$ and Ram\'on Pino-P\'erez$^4$ \\ \ \\
1. MICS, Centrale Supelec, Universit\'e Paris-Saclay, France, \\ {\it \{marc.aiguier\}@centralesupelec.fr} \\
2. Universit\'e Paris-Dauphine, PSL Research University, CNRS, UMR 7243, \\ LAMSADE, 75016 Paris, France, \\ {\it jamal.atif@dauphine.fr} \\
3. LTCI, T\'el\'ecom ParisTech, Universit\'e Paris-Saclay, Paris, France, \\ {\it isabelle.bloch@telecom-paristech.fr} \\
4. Departemento de Matem\'aticas, Facultad de Ciencias, \\ Universidad de Los Andes, M\'erida 5101, Venezuela \\ {\it pino@ula.ve}}

\date{}

\maketitle
\begin{abstract}
The aim of this paper is to introduce a new framework for defining abductive reasoning operators based on a notion of retraction in arbitrary logics defined as satisfaction systems.  We show how this framework leads to the design of  explanatory relations  satisfying properties of abductive reasoning, and discuss its application to several logics. This extends previous work on propositional logics where retraction was defined as a morphological erosion. Here weaker properties are required for retraction, leading to a larger set of suitable operators for abduction for different logics. 
\end{abstract}

\noindent
{\small {\bf Keywords:} Explanatory relation, Retraction, Cutting, Satisfaction systems.}




\section{Introduction}

Since its introduction by Charles Peirce in~\cite{peirce1958collected}, abduction has motivated a large body of research in several scientific fields, e.g. philosophy of science, logics, law, artificial intelligence, to mention a few. Abduction, whatever the adopted view on its treatment, involves a background theory ($T$), an observation also called explanandum ($\varphi$), and an explanation ($\psi$). The observation may be seen as a surprising phenomenon that is  inconsistent with the background theory. It may also be consistent with the background theory but not directly entailed by this theory, which is the case considered in this paper. Several constraints can be imposed on the explanations and on the process of their production. One can allow changing the background theory, or not, consider as  non relevant explanations those that  entail the observation  on their own without engaging the background knowledge. Hence, several forms of abduction can be defined depending on the chosen criteria. Despite their divergence, most of these models agree to define abduction as an explanatory reasoning allowing us to infer the best explanation of an observation. This contributes to the field of explainable artificial intelligence. Explanatory relations, trying to model common sense and everyday reasoning, find applications in many domains, such as diagnosis~\cite{console1991,eiter1995}, forensics~\cite{han2011}, argumentation~\cite{booth2014a,booth2014b}, language understanding~\cite{hobbs2004}, image understanding~\cite{JA:SMC-14,YY:KI-15}, etc. (it is out of the scope of this paper to describe applications exhaustively). Then, as a form of inference, several works have studied rationality postulates that are more appropriate to govern the process of selecting the best explanations, e.g.~\cite{Flach96,PPU99}. From a computational point of view, a very large number of papers has tackled the definition of abductive procedures, mainly in propositional logics. An attractive approach, governed by what is called the AKM model, is based on semantic tableaux tailored for particular logics (e.g. propositional logics~\cite{Aliseda97}, first order and modal logics~\cite{MP93,MP95}), which was the basis for several extensions (e.g.~\cite{bienvenu2008,Britz17,eiter1995,halland2012}).  In these works, the explanatory reasoning process is split into two stages: (i) generating a set of hypotheses from the formulas that allow closing the open branches in the tableau constructed from $(T \cup \{\neg \varphi\})$, and (ii) selecting the preferred solutions from this plain set by considering some of the criteria discussed above. 

Our aim in this paper is to introduce a new framework for defining abductive reasoning operators in arbitrary logics in the framework of satisfaction systems. To this end, we propose on a new notion of cutting, from which operators of retraction are derived. We show that this framework leads to the design of explanatory relations satisfying the rationality postulates of abductive reasoning introduced in~\cite{PPU99} and adapted here to the proposed more general framework, and present applications in several logics. This extends previous work on abduction in propositional logics where retraction was defined as a morphological erosion~\cite{BL02,IB:arXiv-18,BPU04}, as well as abduction in description logics for image understanding~\cite{JA:SMC-14}. Here weaker properties are required for retraction, that allow defining a larger set of suitable operators for abduction for different logics. This approach is similar to the one proposed for revision in~\cite{AABH18}, where revision operators were defined from relaxation in satisfaction systems, and then instantiated in various logics.
An important feature of the proposed explanations based on retraction is that generation and selection steps are merged, or at least the set of generated hypotheses is reduced, thus facilitating the selection step.

The paper is organized as follows. In Section~\ref{institutions}, we recall the useful definitions and properties of satisfaction systems, and provide examples in propositional logic, Horn logic, first order logic, modal propositional logic and description logic. In Section~\ref{sec:expla} we introduce our first contribution, by defining a notion of cutting, from which explanations are then defined. In Section~\ref{sec:retraction}, we propose to define particular cuttings, based on retractions of formulas. Then in Section~\ref{applications}, we instantiate the proposed general framework in various logics.

\section{Satisfaction systems}
\label{institutions}




We recall here the basic notions of satisfaction systems needed in this paper. The presentation follows the one in
\cite{AABH18}, where we give a more complete presentation of satisfaction systems, including the properties and their proofs, that are omitted here.

\subsection{Definition and examples}
\label{sec:BasicDef}

\begin{definition}[Satisfaction system]
\label{satisfaction system} 

A {\bf satisfaction system} $\mathcal{R}=(Sen,Mod,\models)$ consists of
\begin{itemize}
\item a set $Sen$ of {\bf sentences},
\item a class $Mod$ of {\bf models}, and
\item a satisfaction relation $\models \subseteq Mod \times Sen$.
\end{itemize}
\end{definition}

 Let us note that the non-logical vocabulary, so-called {\em signature}, over which sentences and models are built, is not specified in Definition~\ref{satisfaction system}\footnote{The set of logical symbols is defined in each particular logic and does not depend on a theory.}. Actually, it is left implicit. Hence, as we will see in the examples developed in the paper, a satisfaction system always depends on a signature. 

\medskip
 \begin{example}
\label{examples of institutions}
The following examples of satisfaction systems are of particular importance in computer science and in the remainder of this paper.  
\begin{description}
\item[Propositional Logic (PL)] 
Given a set of propositional variables $\Sigma$, we can define the satisfaction system $\mathcal{R}_\Sigma = (Sen,Mod,\models)$ where $Sen$ is the least set
of sentences finitely built over propositional variables in $\Sigma$, the symbols $\top$ and $\bot$ (denoting tautologies and antilogies, respectively), and Boolean connectives in $\{\neg,\vee,\wedge,\Rightarrow\}$, $Mod$ contains all the  mappings $\nu:\Sigma\to\{0,1\}$ ($0$ and $1$ are the usual truth values), and the satisfaction relation $\models$ is the usual propositional satisfaction.
\item[Horn Logic (HCL)] A \emph{Horn clause} is a sentence of the form $\Gamma \Rightarrow
\alpha$ where $\Gamma$ is a finite (possibly empty) conjunction of propositional variables and $\alpha$
is a propositional variable. The satisfaction system of Horn clause logic is then defined as for {\bf PL} except that sentences are restricted to be conjunctions of Horn clauses.
\item[Modal Propositional Logic (MPL)] Given a set of propositional variables $\Sigma$, we can define the satisfaction system $\mathcal{R}_\Sigma = (Sen,Mod,\models)$ where 
\begin{itemize}
\item $Sen$ is the least set of sentences finitely built over propositional variables in $\Sigma$, the symbols $\top$ and $\bot$, Boolean connectives in $\{\neg,\vee,\wedge,\Rightarrow\}$, and modalities in $\{\Box,\Diamond\}$; 
\item $Mod$ contains all the Kripke models $(I,W,R)$ where $I$ is an index set, $W = (W^i)_{i \in I}$ is a family of functions from $\Sigma$ to $\{0,1\}$, and $R \subseteq I \times I$ is an accesibility relation;
\item the satisfaction of sentences by Kripke models, $(I,W,R) \models \varphi$, is defined by $(I,W,R) \models_i \varphi$ for every $i \in I$ where $\models_i$ is defined by structural induction on sentences as follows:
\begin{itemize}
\item $(I,W,R) \models_i p$ iff $p \in W^i$ for every $p \in \Sigma$,
\item Boolean connectives are handled as usual,
\item $(I,W,R) \models_i \Box \varphi$ iff $(I,W,R) \models_j \varphi$ for every $j \in I$ such that $(i,j) \in R$, and
\item $\Diamond \varphi$ is the same as $\neg \Box \neg \varphi$.
\end{itemize}
\end{itemize}
\item[First Order Logic (FOL) and Many-sorted First Order Logic] We detail here only the many-sorted variant of FOL, FOL being a particular case. Signatures are triplets $(S,F,P)$ where $S$ is a set
of sorts, and $F$ and $P$ are sets of function and predicate names respectively, both with arities in $S^\ast\times S$ and $S^+$ respectively ($S^+$ is the set of all non-empty sequences of elements in $S$ and $S^\ast=S^+\cup\{\epsilon\}$ where $\epsilon$ denotes the empty sequence). In the following, to indicate that a function name $f \in F$ (respectively a predicate name $p \in P$) has for arity $(s_1 \ldots s_n,s)$ (respectively $s_1 \ldots s_n$), we will note $f : s_1 \times \ldots \times s_n \to s$ (respectively $p : s_1 \times \ldots \times s_n$). \\ 
Given a signature $\Sigma=(S,F,P)$, we can define the satisfaction system $\mathcal{R}_\Sigma =(Sen,Mod,\models)$ where:
\begin{itemize}
\item $Sen$ is the least set of sentences built over atoms of the form $p(t_1,\ldots,t_n)$ where $p:s_1 \times \ldots \times s_n \in P$ and $t_i \in T_F(X)_{s_i}$ for every $i$, $1 \leq i \leq n$ ($T_F(X)_s$ is the term algebra of sort $s$ built over $F$ with sorted variables in a given set $X$) by finitely applying Boolean connectives in $\{\neg,\vee, \wedge, \Rightarrow \}$ and  quantifiers in $\{ \forall, \exists \}$. 
\item $Mod$ is the class of models $\mathcal{M}$ defined by a family $(M_s)_{s \in S}$ of non-empty sets (one for every $s \in S$), each one equipped with a function $f^{\cal M} : M_{s_1} \times \ldots \times M_{s_n} \rightarrow M_s$ for every $f:s_1 \times \ldots \times s_n \rightarrow s  \in F$ and with an n-ary relation $p^{\cal M} \subseteq M_{s_1} \times \ldots \times M_{s_n}$ for every $p:s_1 \times \ldots \times s_n \in P$. 
\item Finally, the satisfaction relation $\models$ is the usual first-order satisfaction. 
\end{itemize}
As for {\bf PL}, we can consider the logic {\bf FHCL} of first-order Horn Logic whose models are those of {\bf FOL} and sentences are restricted to be conjunctions of universally quantified Horn sentences (i.e. sentences of the form $\Gamma \Rightarrow \alpha$ where $\Gamma$ is a finite conjunction of atoms and $\alpha$ is an atom).
\item[Description logic (DL)] Signatures are triplets $(N_C,N_R,I)$ where $N_C$, $N_R$ and $I$ are nonempty pairwise disjoint sets where elements in $N_C$, $N_R$ and $I$ are called concept names, role names and individuals, respectively.  \\ Given a signature $\Sigma=(N_C,N_R,I)$, we can define the satisfaction system $\mathcal{R}_\Sigma =(Sen,Mod,\models)$ where:
\begin{itemize}
\item $Sen$ contains~\footnote{The description logic defined here is better known under the acronym $\mathcal{ALC}$.} all the sentences of the form $C \sqsubseteq D$, $x:C$ and $(x,y):r$ where $x,y \in I$, $r \in N_R$ and $C$ is a concept inductively defined from $N_C \cup \{ \top \}$ and binary and unary operators in $\{\_ \sqcap \_,\_ \sqcup \_\}$ and in $\{\neg\_,\forall r.\_,\exists r.\_\}$, respectively. 
\item $Mod$ is the class of models $\mathcal{I}$ defined by a set $\Delta^\mathcal{I}$ equipped for every concept name $A \in N_C$ with a set $A^\mathcal{I} \subseteq \Delta^\mathcal{I}$, for every relation name $r \in N_R$ with a binary relation $r^\mathcal{I} \subseteq \Delta^\mathcal{I} \times \Delta^\mathcal{I}$, and for every individual $x \in I$ with a value $x^\mathcal{I} \in \Delta^\mathcal{I}$. 
\item The satisfaction relation $\models$ is then defined as:
\begin{itemize}
\item $\mathcal{I} \models C \sqsubseteq D$ iff $C^\mathcal{I} \subseteq D^\mathcal{I}$,
\item ${\cal I} \models x:C$ iff $x^{\cal I} \in C^{\cal I}$,
\item ${\cal I} \models (x,y):r$ iff $(x^{\cal I},y^{\cal I}) \in r^{\cal I}$,
\end{itemize}
where $C^\mathcal{I}$ is the evaluation of $C$ in $\mathcal{I}$ inductively defined on the structure of $C$ as follows:
\begin{itemize}
\item if $C = A$ with $A \in N_C$, then $C^{\cal I} = A^{\cal I}$;
\item if $C = \top$ then $C^\mathcal{I} = \Delta^\mathcal{I}$;
\item if $C = C' \sqcup D'$ (resp. $C = C' \sqcap D'$), then $C^{\cal I} = C'^{\cal I} \cup D'^{\cal I}$ (resp. $C^{\cal I} = C'^{\cal I} \cap D'^{\cal I}$);
\item if $C = \neg C'$, then $C^{\cal I} = \Delta^{\cal I} \setminus C'^{\cal I}$;
\item if $C = \forall r.C'$, then $C^{\cal I} = \{x \in \Delta^{\cal I} \mid \forall  y \in \Delta^{\cal I}, (x,y) \in r^{\cal I} \mbox{ implies } y \in C'^{\cal I}\}$;
\item if $C = \exists r.C'$, then $C^{\cal I} = \{x \in \Delta^{\cal I} \mid \exists  y \in \Delta^{\cal I}, (x,y) \in r^{\cal I} \mbox{ and } y \in C'^{\cal I}\}$.
\end{itemize}
\end{itemize}
\end{description}
\end{example}

\subsection{Knowledge bases and theories}

Let us now consider a fixed but arbitrary satisfaction system  $\mathcal{R} = (Sen,Mod,\models)$ (since the signature $\Sigma$ is supposed fixed, the subscript $\Sigma$ will be omitted from now on). 

\medskip
\begin{notation}
Let $T \subseteq Sen$ be a set of sentences.
\begin{itemize}
\item $Mod(T)$ is the sub-class of $Mod$ whose elements are models of $T$, i.e. for every $\mathcal{M} \in Mod(T)$ and every $\varphi \in T$, $\mathcal{M} \models \varphi$. When $T$ is restricted to a formula $\varphi$ (i.e. $T = \{\varphi\}$), we will denote the class of model of $\{\varphi\}$ by $Mod(\varphi)$, rather than $Mod(\{\varphi\})$.
\item $Cn(T) =\{\varphi \in Sen \mid \forall {\cal M} \in Mod(T),~{\cal M} \models \varphi\}$ is the set of {\em semantic consequences of $T$}. In the following, we will also denote $T \models \varphi$ to mean that $\varphi \in Cn(T)$.
\item $\varphi \equiv_T \psi$ iff $Mod(T \cup \{\varphi\}) = Mod(T \cup \{\psi\})$.
\item Let $\mathbb{M} \subseteq Mod$. Let us note $\mathbb{M}^* = \{\varphi \in Sen \mid \forall \mathcal{M} \in \mathbb{M}, \mathcal{M} \models \varphi\}$. 
When $\mathbb{M}$ is restricted to one model $\mathcal{M}$, $\mathbb{M}^*$ will be equivalently noted $\mathcal{M}^*$.
\item Let us note $Triv = \{\mathcal{M} \in Mod \mid \mathcal{M}^* = Sen\}$, i.e. the set of models in which all formulas are satisfied. 
In {\bf PL}, {\bf MPL} and {\bf FOL}, $Triv$ is empty because the negation is considered. Similarly, the negation
is involved in the {\bf DL} $\mathcal{ALC}$, hence $Triv$ is empty. In {\bf HCL}, $Triv$ only contains the unique model where all propositional variables have a truth value equal to 1. In {\bf FHCL}, $Triv$ contains all models $\mathcal{M}$ where for every predicate name $p : s_1 \times \ldots \times s_n \in P$, $p^\mathcal{M} = M_{s_1} \times \ldots \times M_{s_n}$. 
\end{itemize}
\end{notation}



\medskip
\begin{definition}[Knowledge base and theory]
\label{theory}
A {\bf knowledge base (KB)} $T$ is a finite set of sentences (i.e. $T \subseteq Sen$ and the cardinality of $T$ belongs to $\mathbb{N}$).
A set of sentences $T$ is said to be a {\bf theory} if and only if $T = Cn(T)$. \\ A theory $T$ is {\bf finitely representable} if there exists a KB $T' \subseteq Sen$  such that $T = Cn(T')$. \\ 
A class of models $\mathbb{M} \subseteq Mod$ is {\bf finitely axiomatizable} if there exists a finite KB $T$ such that $Mod(T) = \mathbb{M}$. A satisfaction system $\mathcal{R}$ is {\bf finitely axiomatizable} if each of its classes of models $\mathbb{M} \subseteq Mod$ is finitely axiomatizable.  
\end{definition}
Note that in DL, a knowledge base consists classically of a set of axioms (in the form $C \sqsubseteq D$), called TBox, and a set of assertions (in the form $x:C$ or $(x,y):r$), called ABox.

Classically, the consistency of a theory $T$ is defined as $Mod(T) \neq \emptyset$. The problem of such a definition of consistency is that its significance depends on the considered logic. Hence, this consistency is significant for {\bf FOL}, while in {\bf FHCL} it is a trivial property since each set of sentences is consistent because $Mod(T)$ always contains $Triv$ which is non empty. Here, for the notion of consistency to be more appropriate for our purpose of defining abduction for the largest family of logics, we propose a more general definition of consistency, the meaning of which is that given a theory $T$, $Mod(T)$ is not restricted to trivial models.


\medskip
\begin{definition}[Consistency]
$T \subseteq Sen$ is {\bf consistent} if $Cn(T) \neq Sen$.
\end{definition}

\medskip
\begin{proposition}[\cite{AABH18}]
\label{proposition:consistency}
For every $T \subseteq Sen$, $T$ is consistent if and only if $Mod(T) \setminus Triv \neq \emptyset$.
\end{proposition}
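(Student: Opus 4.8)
The plan is to prove both directions by a straightforward chase through the definitions of $Cn$, $Triv$, and $(-)^*$, using a contrapositive for the forward implication.

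For the direction ($\Leftarrow$), I would assume there is some $\mathcal{M} \in Mod(T) \setminus Triv$. Since $\mathcal{M} \notin Triv$, by the definition of $Triv$ we have $\mathcal{M}^* \neq Sen$, so there is a sentence $\varphi \in Sen$ with $\mathcal{M} \not\models \varphi$. But $\mathcal{M} \in Mod(T)$, so $\varphi$ fails to be satisfied by some model of $T$, hence $\varphi \notin Cn(T)$. Therefore $Cn(T) \neq Sen$, i.e. $T$ is consistent.

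For the direction ($\Rightarrow$), I would argue by contraposition: assume $Mod(T) \setminus Triv = \emptyset$, equivalently $Mod(T) \subseteq Triv$. Fix an arbitrary $\varphi \in Sen$. For every $\mathcal{M} \in Mod(T)$ we then have $\mathcal{M} \in Triv$, hence $\mathcal{M}^* = Sen$, hence $\mathcal{M} \models \varphi$. Since this holds for all models of $T$, we get $\varphi \in Cn(T)$; as $\varphi$ was arbitrary, $Cn(T) = Sen$, so $T$ is inconsistent. This argument also subsumes the degenerate case $Mod(T) = \emptyset$, where the universal quantification over $Mod(T)$ is vacuous and $Cn(T) = Sen$ holds trivially.

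The proof is essentially a definition unwinding, so there is no substantial obstacle; the only point requiring a moment of care is the boundary case in which $T$ has no models at all, and the contrapositive formulation handles it automatically because the relevant quantifier over $Mod(T)$ becomes vacuously true.
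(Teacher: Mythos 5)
Your proof is correct: both directions follow by unwinding the definitions of $Cn(T)$, $Triv$, and $\mathcal{M}^*$ exactly as you do, and your contrapositive handling of the case $Mod(T)=\emptyset$ is the right way to cover the degenerate situation. The paper itself omits the proof (deferring to the cited reference), but the argument there is this same definition-chasing one, so there is nothing to add.
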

Hence, for every $T \subseteq Sen$, $T$ is inconsistent is equivalent to $Mod(T) = Triv$.


\subsection{Internal logic}

Following~\cite{Dia08,GB92}, the satisfaction system-independent definition of Boolean connectives is straightforward. This will be useful when we give general results of preserving explanatory relation along Boolean connectives. Let $\mathcal{R}$ be a satisfaction system. A sentence $\varphi'$ is a

\begin{itemize}
\item {\bf semantic negation} of $\varphi$ when $Mod(\varphi') = Mod \setminus Mod(\varphi)$;
\item {\bf semantic conjunction} of $\varphi_1$ and $\varphi_2$ when $Mod(\varphi') = Mod(\varphi_1) \cap Mod(\varphi_2)$;
\item {\bf semantic disjunction} of $\varphi_1$ and $\varphi_2$ when $Mod(\varphi') = Mod(\varphi_1) \cup Mod(\varphi_2)$;
\item {\bf semantic implication} of $\varphi_1$ and $\varphi_2$ when $Mod(\varphi') = (Mod \setminus Mod(\varphi_1)) \cup Mod(\varphi_2)$.
\end{itemize}

$\mathcal{R}$ has (semantic) negation when each sentence has a negation. It has (semantic) conjunction (respectively disjunction and implication) when any two sentences have conjunction (respectively disjunction and implication). As usual, we note negation, conjunction, disjunction and implication by $\neg$, $\wedge$, $\vee$ and $\Rightarrow$.  

\medskip
\begin{example}
{\bf PL} has all semantic Boolean connectives. {\bf FOL} has all semantic Boolean connectives when sentences are restricted to closed formulas, otherwise (i.e. sentences can be open formulas) it only has semantic conjunction. Finally, {\bf MPL} has only semantic conjunction.
\end{example}

\section{Explanation in satisfaction systems}
\label{sec:expla}

The process of inferring the best explanation of an observation is usually known as {\em abduction}. In a logic-based approach, the background of abduction is given by a knowledge base (KB) $T$ and a formula $\varphi$ (the observation) such that $T \cup \{\varphi\}$ is consistent. 
Besides this fact, which can be expressed equivalently as $T {\not \models} \neg \varphi$, some works further require that $T {\not \models} \varphi$.
We do not impose this last requirement here. 

Let us start by introducing the notion of explanation of $\varphi$ with respect to $T$.

\medskip
\begin{definition}[Set of explanations]
\label{set of explanations}
Let $T$ be a KB. Let $\varphi \in Sen$ be a formula consistent with T. The {\bf set of explanations} of $\varphi$ over $T$ is the set $Expla_T(\varphi)$ defined as:
$$Expla_T(\varphi) = \{\psi \in Sen \mid Mod(T \cup \{\psi\}) \neq Triv~\mbox{and}~T \cup \{\psi\} \models \varphi\}$$
\end{definition}

Note that this definition does not impose that $\psi \not \models \varphi$. In some cases a preferred explanation of $\varphi$ with respect to the background knowledge base $T$ could be a formula $\psi$ such that $\psi \models \varphi$. 

Since abduction aims to infer the best explanations, the notion of explanation given in Definition~\ref{set of explanations} only captures candidate explanations of $\varphi$ with respect to $T$. Some additional properties are needed to define the key notion of ``preferred explanations". Following the works in~\cite{Aliseda97,Flach96,Flach00a,Flach00b,PPU99,PPU03}, we will study some preference criteria and give their logical properties when abduction is regarded as a form of inference.   

\medskip
\begin{definition}[Explanatory relation]
Let $T$ be a KB. An {\bf explanatory relation for $T$}  is a binary relation $\rhd \subseteq Sen \times Sen$ such that: 
$$\forall \varphi,\psi \in Sen, \varphi \rhd \psi \Longrightarrow \psi \in Expla_T(\varphi)$$
\end{definition}

Now, we define an (abstract) explanatory relation, the behavior of which will consist in cutting in $Mod(T \cup \{\varphi\})$ as much as possible but still under the constraint that it remains consistent (i.e. it is not equal to $Triv$). A cutting will then generate a sequence of subsets of $\mathcal{P}(Mod(T \cup \{\varphi\}))$ that we can order by inclusion. Moreover, this sequence cannot be extended by inverse inclusion. This gives rise to the notion of a cutting for a KB $T$ and a formula $\varphi$.

\medskip
\begin{definition}[Cutting]
\label{cutting}
Let $T$ be a KB and let $\varphi$ be a formula. A {\bf cutting} for $T$ and $\varphi$ is any $\mathcal{C} \subseteq \mathcal{P}(Mod(T \cup \{\varphi\})$ such that for every $\mathbb{M} \in \mathcal{C}$, $Triv \subsetneq \mathbb{M}$, $\mathcal{C}$ is closed under set-theoretical union and contains $Mod(T \cup \{\varphi\})$, and the poset $(\mathcal{C},\subseteq)$ is well-founded~\footnote{Let us recall that a poset $(X,\preceq)$ is well-founded if every non-empty subset $S \subseteq X$ has a minimal element with respect to $\preceq$, or equivalently there does not exist any infinite descending chain.}. \\ 
Let us denote $Min(\mathcal{C})$ the set of minimal elements for $\subseteq$ in $\mathcal{C}$. \\ 
In the following, given a KB $T$ and a formula $\varphi$, a cutting for $T$ and $\varphi$ will be denoted $\mathcal{C}_\varphi$.~\footnote{To simplify the notations, $T$ does not index cuttings because as we will see, $T$ will be often constant.}  
\end{definition}

Note that in Definition~\ref{cutting}, we do not impose that $T \cup \{\varphi\}$ is consistent (``who can do more, can do less''). However, the case where it is not would not be very interesting since $Mod(T \cup \{\varphi\}) \setminus Triv$ would then be empty.

\medskip
\begin{remark}\label{remark trivial}
If $Mod(T \cup \{\varphi\}) \neq Triv$ then there exists a trivial cutting for $\varphi$, namely 
$\mathcal{C}_\varphi=\Set{Mod(T \cup \{\varphi\})}$.
\end{remark}

As $(\mathcal{C}_\varphi,\subseteq)$ is closed under set-theoretical union and then it is inductive, by the Hausdorff maximal principle, every chain is contained in any maximal chain (and then maximal chains exist). Moreover, as $(\mathcal{C}_\varphi,\subseteq)$ is well-founded, every maximal chain has a least element which belongs to $Min(\mathcal{C}_\varphi)$. 

\medskip
\begin{definition}[Explanatory relation based on  cuttings]
\label{explanatory relation}
Let $T$ be a KB, and let us define a set of cuttings $\mathcal{C}$ by choosing a cutting $\mathcal{C}_\varphi$ for every $\varphi$ in Sen: $\mathcal{C} = \{ \mathcal{C}_\varphi \mid \varphi \in Sen\}$.
 Let us define the binary relation $\rhd_\mathcal{C} \subseteq Sen \times Sen$ as follows:
$$\varphi  \rhd_{\mathcal{C}} \psi \Longleftrightarrow
\left\{\begin{array}{l}
Mod(T \cup \{\psi\}) \neq Triv,~\mbox{and} \\
\exists \mathbb{M} \in Min(\mathcal{C}_\varphi), Mod(T \cup \{\psi\}) \subseteq \mathbb{M}
\end{array}
\right.$$
\end{definition}

By Remark~\ref{remark trivial}, $\mathcal{C}$ is well defined. Obviously, $\rhd_\mathcal{C}$ is an explanatory relation. We will later add some stability properties to $\mathcal{C}$ to ensure good properties of this explanatory relation.

\medskip
\begin{remark}\label{remark trivial2}
If $\mathcal{C}_\varphi$  is a cutting for $T$ and $\varphi$, then we can define a relation $\rhd_\mathcal{C}$ based on cuttings such that $\varphi\rhd_\mathcal{C} \psi$ satisfies the equivalence of Definition~\ref{explanatory relation} (i.e. $\mathcal{C}_\varphi$ is precisely the cutting chosen for $\varphi$ in the set $\mathcal{C}$).
\end{remark}

The next example shows how our general definition via cuttings can capture some explanatory relations defined in the literature.

\medskip
\begin{example}
\label{abduction via semantic tableau}
Abduction via semantic tableau~\cite{MP93} and resolution~\cite{SNA06} generates a cutting, and then an explanatory relation. We illustrate this fact for abduction via semantic tableau in the framework of the propositional logic~\footnote{Note that semantic tableau methods have been extended to modal logic~\cite{Bienvenu09,MP95}, first-order logic~\cite{Marquis91,RAN06}, DL~\cite{halland2012}, etc., and in the same way we would be able to generate a cutting from them.}.

Semantic tableaux are used as refutation systems. Let $S$ be a set of propositional formulas
The tableau expansion rules are as follows: 
$$\begin{array}{llll}
\neg-rules: & \frac{S \cup \{\neg \neg \varphi\}}{S \cup \{\varphi\}} & \frac{S \cup \{\neg \bot\}}{S \cup \{\top\}} \\ \ \\
\alpha-rules: & \frac{S \cup \{\varphi_1 \wedge \varphi_2\}}{S \cup \{\varphi_1,\varphi_2\}} & \frac{S \cup \{\neg (\varphi_1 \Rightarrow \varphi_2)\}}{S \cup \{\varphi_1,\neg \varphi_2\}} & \frac{S \cup \{\neg(\varphi_1 \vee \varphi_2)\}}{S \cup \{\neg \varphi_1,\neg \varphi_2\}} \\ \ \\
\beta-rules: & \frac{S \cup \{\varphi_1 \vee \varphi_2\}}{\{S \cup \{\varphi_1\},S \cup \{\varphi_2\}\} }& \frac{S \cup \{\varphi_1 \Rightarrow \varphi_2\}}{\{S \cup \{\neg \varphi_1\},S \cup \{\varphi_2\}\}} & \frac{S \cup \{\neg(\varphi_1 \wedge \varphi_2)\}}{\{S \cup \{\neg \varphi_1\},S \cup \{\neg \varphi_2\}\}}
\end{array}$$

A tableau $\mathcal{T}$ is then a sequence of sets of sets of formulas $(\Gamma_1,\ldots,\Gamma_n,\ldots)$ such that, for every $i$, $\Gamma_{i+1}$ is obtained from $\Gamma_i$ by the application of a tableau expansion rule on a formula of a set $S$ in $\Gamma_i$. At each step $i$, every set $S$ in $\Gamma_i$ which contains both $p$ and $\neg p$ for some propositional variable $p$ is removed from $\Gamma_i$. \\ 
A formula $\varphi$ is a theorem of a KB $T$ if there exists a finite sequence $(\Gamma_1,\ldots,\Gamma_n)$ such that $\Gamma_1 = \{T \cup \{\neg \varphi\}\}$ and $\Gamma_n = \emptyset$. As an example, let us show that $a$ is a theorem of $\{a \wedge c,a \Rightarrow b\}$. The tableau method provides the finite sequence $\Gamma_1 = \{\{a \wedge c,a \Rightarrow b,\neg a\}\}, \Gamma_2 = \{\{a,c,a \Rightarrow b,\neg a\}\}$, using $\alpha$-rules. The set $\Gamma_2$ contains a unique set, with both $a$ and $\neg a$, which is then removed, and $\Gamma_2$ becomes empty.

Let us observe that the tableau expansion rules break propositional formulas on their main Boolean connectives. Hence, tableaux are necessarily finite, and then two cases can occur:
\begin{enumerate}
\item the last set $\Gamma_n$ of the sequence is empty, and then we have that $T \models \varphi$; or
\item every $S$ in $\Gamma_n$ only contains literals but no literal has its negation in $S$.
\end{enumerate}
Following~\cite{MP93}, if $g$ is any consistent choice function for the elements of $\Gamma_n$, i.e. for $\Gamma_n = \{S_{n1},\ldots,S_{nm_n}\}$, $g(S_{ni}) \in S_{ni}$, then if $\psi = \neg g(S_{n1}) \wedge \ldots \wedge \neg g(S_{nm_n})$ is consistent with $T$, then $\psi$ is an explanation of $\varphi$ for $T$ ($\psi$ is even the minimal one according to the definition of minimality given in~\cite{MP93}). 

We now show that the way the tableau $\mathcal{T}$ is generated in~\cite{MP93} defines a cutting $\mathcal{C}_\mathcal{T}$. Before defining the cutting $\mathcal{C}_\mathcal{T}$, let us introduce some useful notions. Let $\mathcal{T} = (\Gamma_1,\ldots,\Gamma_n)$ be a tableau for $T \cup \{\neg \varphi\}$ such that $\Gamma_i = \{S_{i1},\ldots,S_{im_i}\}$. For every $j$, $1 \leq j \leq m_i$, let us denote $\psi_{ij}$ the disjunction of the negation of all the literals $l \in S_{ij}$, i.e. $\psi_{ij} = \bigvee\{\neg l \mid l : literal~\mbox{and}~l \in S_{ij}\}$. Then, let us set $\psi_i = \bigwedge_{1 \leq j \leq m_i} \psi_{ij}$. We can define the cutting $\mathcal{C}_\mathcal{T}$ as follows:
$$\mathcal{C}_\mathcal{T} = \{Mod(T \cup \{\varphi\})\} \cup (\cup_{1 \leq i \leq n}\{Mod(\psi_i)\})$$
Obviously we have $Mod(T \cup \{ \varphi\}) \in \mathcal{C}_\mathcal{T}$ and $Triv \notin \mathcal{C}_\mathcal{T}$. Moreover, for any $i$, $Mod(\psi_i) \subseteq Mod(T \cup \{\varphi\})$, hence $\mathcal{C}_\mathcal{T} \in \mathcal{P}(Mod(T \cup \{ \varphi \} ))$. It is not difficult to show that for every $i$, $1 \leq i \leq n$, $Mod(\psi_{i+1}) \subseteq Mod(\psi_i) \subseteq Mod(T \cup \{\varphi\})$. Moreover, the tableau $\mathcal{T}$ is finite, which completes the proof that $\mathcal{C}_\mathcal{T}$ is a cutting.

\medskip
Let us illustrate this construction on an example. Let $T = \{f \Rightarrow m,t \vee s,r \Rightarrow m\}$ be the KB and let $\varphi = m$ be the observation. The tableau method applied to $T \cup \{\neg \varphi\}$ generates four sets $\Gamma_1,\ldots,\Gamma_4$ where:
\begin{itemize}
\item $\Gamma_1 = \{\{f \Rightarrow m,t \vee s,r \Rightarrow m,\neg m\}\}$;
\item $\Gamma_2 = \{\{\neg f,t \vee s,r \Rightarrow m,\neg m\}\}$;
\item $\Gamma_3 = \{\{\neg f,t,r \Rightarrow m,\neg m\},\{\neg f,s,r \Rightarrow m,\neg m\}\}$;
\item $\Gamma_4  = \{\{\neg f,t,\neg r,\neg m\},\{\neg f,s,\neg r ,\neg m\}\}$;
\end{itemize}
This leads to the following formulas $\psi_1,\ldots,\psi_4$:
\begin{itemize}
\item $\psi_1 = m$;
\item $\psi_2 = f \vee m$;
\item $\psi_3 = (f \vee m \vee \neg t) \wedge (f \vee m \vee \neg s)$;
\item $\psi_4 = (f \vee m \vee \neg t \vee r) \wedge (f \vee m \vee \neg s \vee r)$.
\end{itemize}
A consistent choice satisfying minimality is for instance $f \vee r$.
\end{example}


\medskip
It is interesting to note that there is an alternative way of looking at $\rhd_{\mathcal{C}}$. The descending chains to obtain the minimal element $\mathbb{M}$ provide a method to order the models of $Mod(T \cup \{\varphi\})$. 

\medskip
\begin{definition}[Relation on models]
Let $T$ be a KB and let $\varphi$ be a formula such that $Mod(T \cup \{\varphi\}) \neq Triv$. Let $\mathcal{C}_\varphi$ be a cutting for $T$ and $\varphi$. Let us define $\preceq_{\mathcal{C}_\varphi} \subseteq Mod \times Mod$ as follows:
\begin{equation}
\mathcal{M} \preceq_{\mathcal{C}_\varphi} \mathcal{M}' \Longleftrightarrow 
\left\{
\begin{array}{l}
\exists C \subseteq \mathcal{C}_\varphi, s.t. C~\mbox{is a maximal chain} \\
\forall \mathbb{M} \in C, \mathcal{M}' \in \mathbb{M} \Rightarrow \mathcal{M} \in \mathbb{M}
\end{array}
\right. 
\label{eq:relationFromCutting}
\end{equation}
\end{definition}

Let $\mathbb{M} \subseteq Mod$ and $\preceq$ be a binary relation over $\mathbb{M}$. We define $\prec$ as $\mathcal{M} \prec \mathcal{M}'$ if and only if $\mathcal{M} \preceq \mathcal{M}'$ and $\mathcal{M}' \npreceq \mathcal{M}$. We also define $Min(\mathbb{M},\preceq) = \{\mathcal{M} \in \mathbb{M} \mid \forall \mathcal{M}' \in \mathbb{M}, \mathcal{M}' \nprec \mathcal{M}\}$. 
Note that the relation $\preceq_{\mathcal{C}_\varphi}$ is reflexive, but not necessarily transitive (hence it is not a pre-order).

\medskip
\begin{theorem}
Let $\mathcal{C}_\varphi$ be the cutting for a KB $T$ and a formula $\varphi$ used in the definition of $\rhd_\mathcal{C}$. For any $\psi \in Sen$, the following equivalence holds:\\ $\varphi \rhd_{\mathcal{C}}  \psi \Longleftrightarrow 
\left\{
\begin{array}{l}
Mod(T \cup \{\psi\}) \setminus Triv \neq \emptyset, \mbox{and} \\
Mod(T \cup \{\psi\}) \setminus Triv \subseteq Min(Mod(T \cup \{\varphi\}) \setminus Triv,\preceq_{\mathcal{C}_\varphi})
\end{array} 
\right.$
\end{theorem}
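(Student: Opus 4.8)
The plan is to prove the two directions of the equivalence separately, unfolding the definitions of $\rhd_{\mathcal{C}}$ and of $\preceq_{\mathcal{C}_\varphi}$, and to exploit the fact that the elements of $\mathcal{C}_\varphi$ are, by Definition~\ref{cutting}, all supersets of $Triv$, so that subtracting $Triv$ throughout is harmless. First I would fix the notation: write $\mathbb{M}_\varphi = Mod(T\cup\{\varphi\})$, $\mathbb{M}_\psi = Mod(T\cup\{\psi\})$, and note that since $\rhd_{\mathcal{C}}$ requires $\mathbb{M}_\psi \neq Triv$ (equivalently $\mathbb{M}_\psi \setminus Triv \neq \emptyset$ by Proposition~\ref{proposition:consistency}), the first conjunct on each side of the claimed equivalence is literally the same condition; so the real content is the equivalence between ``$\exists\,\mathbb{M}\in Min(\mathcal{C}_\varphi),\ \mathbb{M}_\psi\subseteq\mathbb{M}$'' and ``$\mathbb{M}_\psi\setminus Triv \subseteq Min(\mathbb{M}_\varphi\setminus Triv,\preceq_{\mathcal{C}_\varphi})$'', granted that $\mathbb{M}_\psi\setminus Triv\neq\emptyset$.

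For the forward direction, suppose $\mathbb{M}\in Min(\mathcal{C}_\varphi)$ with $\mathbb{M}_\psi\subseteq\mathbb{M}$. Since $\mathbb{M}\subseteq\mathbb{M}_\varphi$, we get $\mathbb{M}_\psi\setminus Triv\subseteq\mathbb{M}\setminus Triv\subseteq\mathbb{M}_\varphi\setminus Triv$. I would then show each $\mathcal{M}\in\mathbb{M}\setminus Triv$ lies in $Min(\mathbb{M}_\varphi\setminus Triv,\preceq_{\mathcal{C}_\varphi})$: because $(\mathcal{C}_\varphi,\subseteq)$ is inductive and well-founded, by the Hausdorff maximal principle there is a maximal chain $C$ whose least element is exactly $\mathbb{M}$ (extend the chain $\{\mathbb{M}_\varphi,\ldots,\mathbb{M}\}$ of a descending sequence witnessing minimality of $\mathbb{M}$ to a maximal one; its least element is a minimal element of $\mathcal{C}_\varphi$ below $\mathbb{M}$, hence equals $\mathbb{M}$ by minimality — here I must be slightly careful and instead argue directly that some maximal chain has least element $\mathbb{M}$, using that $\mathbb{M}$ is already minimal). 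Using this $C$: for any $\mathcal{M}'\in\mathbb{M}_\varphi\setminus Triv$, if $\mathcal{M}'\prec_{\mathcal{C}_\varphi}\mathcal{M}$ then in particular $\mathcal{M}\npreceq_{\mathcal{C}_\varphi}\mathcal{M}'$; but $\mathcal{M}\in\mathbb{M}$, the least element of $C$, so $\mathcal{M}$ belongs to every member of $C$, whence $\mathcal{M}\preceq_{\mathcal{C}_\varphi}\mathcal{M}'$ via $C$ — contradiction. Hence no $\mathcal{M}'$ is $\prec_{\mathcal{C}_\varphi}\mathcal{M}$, i.e. $\mathcal{M}\in Min(\mathbb{M}_\varphi\setminus Triv,\preceq_{\mathcal{C}_\varphi})$.

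For the converse, assume $\mathbb{M}_\psi\setminus Triv$ is nonempty and contained in $Min(\mathbb{M}_\varphi\setminus Triv,\preceq_{\mathcal{C}_\varphi})$; I must produce a single $\mathbb{M}\in Min(\mathcal{C}_\varphi)$ with $\mathbb{M}_\psi\subseteq\mathbb{M}$. Pick any $\mathcal{M}_0\in\mathbb{M}_\psi\setminus Triv$; since $\mathbb{M}_\varphi\in\mathcal{C}_\varphi$ contains $\mathcal{M}_0$, and using well-foundedness and the Hausdorff maximal principle, choose a maximal chain $C$ through $\mathbb{M}_\varphi$ and let $\mathbb{M} = \bigcap C$, which is its least element and lies in $Min(\mathcal{C}_\varphi)$; I would want $C$ chosen so that $\mathcal{M}_0\in\mathbb{M}$ — by the definition of $\preceq_{\mathcal{C}_\varphi}$, $\mathcal{M}_0$ being $\preceq_{\mathcal{C}_\varphi}$-minimal should force that along some maximal chain $\mathcal{M}_0$ stays in every element, i.e. $\mathcal{M}_0\in\mathbb{M}$. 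Then I claim $\mathbb{M}_\psi\subseteq\mathbb{M}$: take $\mathcal{M}\in\mathbb{M}_\psi$; if $\mathcal{M}\in Triv$ then $\mathcal{M}\in\mathbb{M}$ since $Triv\subseteq\mathbb{M}$ by Definition~\ref{cutting}; if $\mathcal{M}\notin Triv$ then $\mathcal{M}\in Min(\mathbb{M}_\varphi\setminus Triv,\preceq_{\mathcal{C}_\varphi})$, and comparing $\mathcal{M}$ with $\mathcal{M}_0\in\mathbb{M}$ along the chain $C$ shows $\mathcal{M}\preceq_{\mathcal{C}_\varphi}\mathcal{M}_0$; minimality of $\mathcal{M}_0$ then gives $\mathcal{M}_0\preceq_{\mathcal{C}_\varphi}\mathcal{M}$ as well (no strict relation), and unravelling this with the chain $C$ yields $\mathcal{M}\in\mathbb{M}$.

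The main obstacle I anticipate is the bookkeeping around $\preceq_{\mathcal{C}_\varphi}$ being reflexive but not transitive, and the quantifier ``$\exists$ maximal chain'' buried in its definition: I must be careful that the \emph{same} maximal chain $C$ can be used simultaneously to witness $\mathcal{M}_0\in\bigcap C$ and to compare $\mathcal{M}_0$ with an arbitrary $\mathcal{M}\in\mathbb{M}_\psi\setminus Triv$. The key technical lemma to isolate and prove first is: if $\mathcal{M}_0$ is $\preceq_{\mathcal{C}_\varphi}$-minimal in $\mathbb{M}_\varphi\setminus Triv$, then there is a maximal chain $C$ of $\mathcal{C}_\varphi$ with $\mathcal{M}_0\in\mathbb{M}$ for every $\mathbb{M}\in C$ (equivalently $\mathcal{M}_0$ belongs to the least element of $C$); this follows by starting from the maximal chain witnessing $\mathcal{M}_0\preceq_{\mathcal{C}_\varphi}\mathcal{M}_0$ and noting minimality blocks any member of that chain from excluding $\mathcal{M}_0$ while some other model survives below. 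Once that lemma is in hand, both inclusions of the equivalence drop out by the unfolding described above.
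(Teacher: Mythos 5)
Your forward direction is correct and is essentially the paper's argument, cleaned up: once you have a maximal chain $C$ whose least element is the minimal $\mathbb{M}\supseteq Mod(T\cup\{\psi\})$, every $\mathcal{M}\in\mathbb{M}\setminus Triv$ lies in every member of $C$, so the defining implication of $\preceq_{\mathcal{C}_\varphi}$ holds along $C$ with a true consequent and $\mathcal{M}\preceq_{\mathcal{C}_\varphi}\mathcal{M}'$ for \emph{every} $\mathcal{M}'$; hence nothing is strictly below $\mathcal{M}$. Your parenthetical worry about producing a maximal chain with least element exactly $\mathbb{M}$ is also resolved the way you suggest (extend $\{\mathbb{M}\}$ to a maximal chain; its least element is an element of $\mathcal{C}_\varphi$ contained in $\mathbb{M}$, hence equals $\mathbb{M}$ by minimality).

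The converse direction has a genuine gap, located exactly at the ``key technical lemma'' you isolate, and that lemma is false. The existential quantifier over maximal chains in the definition of $\preceq_{\mathcal{C}_\varphi}$ allows a model to escape strict domination by being protected by \emph{different} chains in different comparisons, so $\preceq_{\mathcal{C}_\varphi}$-minimality does not force membership in the least element of any single maximal chain. Concretely, take $Triv=\emptyset$, $Mod(T\cup\{\varphi\})=\{1,2,3\}$ and $\mathcal{C}_\varphi=\bigl\{\{1,2,3\},\{1,2\},\{2,3\},\{1,3\},\{1\},\{3\}\bigr\}$: this is union-closed, well-founded and contains $Mod(T\cup\{\varphi\})$, hence a cutting, with $Min(\mathcal{C}_\varphi)=\{\{1\},\{3\}\}$. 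The chain $\{1,2,3\}\supset\{2,3\}\supset\{3\}$ witnesses $2\preceq_{\mathcal{C}_\varphi}1$ and the chain $\{1,2,3\}\supset\{1,2\}\supset\{1\}$ witnesses $2\preceq_{\mathcal{C}_\varphi}3$, so neither $1\prec_{\mathcal{C}_\varphi}2$ nor $3\prec_{\mathcal{C}_\varphi}2$, and the model $2$ is $\preceq_{\mathcal{C}_\varphi}$-minimal even though it belongs to no element of $Min(\mathcal{C}_\varphi)$. Your justification sketch (``minimality blocks any member of that chain from excluding $\mathcal{M}_0$ while some other model survives below'') is precisely what fails here: the surviving model $\mathcal{M}'$ does satisfy $\mathcal{M}'\preceq_{\mathcal{C}_\varphi}\mathcal{M}_0$, but $\mathcal{M}_0\preceq_{\mathcal{C}_\varphi}\mathcal{M}'$ may hold via another chain, so $\mathcal{M}'\not\prec_{\mathcal{C}_\varphi}\mathcal{M}_0$ and minimality is not contradicted. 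Note that for $\psi$ with $Mod(T\cup\{\psi\})=\{2\}$ the right-hand side of the stated equivalence holds while $\varphi\rhd_{\mathcal{C}}\psi$ fails, so this is in fact a counterexample to the theorem itself, not only to your lemma; the paper's own proof of ($\Leftarrow$) stumbles on the same point, asserting $\mathcal{M}'\prec_{\mathcal{C}_\varphi}\mathcal{M}$ without verifying $\mathcal{M}\npreceq_{\mathcal{C}_\varphi}\mathcal{M}'$ against \emph{all} maximal chains. The direction does go through under the additional hypothesis that $\mathcal{C}_\varphi$ has a unique maximal chain (equivalently, in the situations of Section~4 where $\preceq_{\mathcal{C}_\varphi}$ is a total pre-order), and if you want a correct statement you should add such a hypothesis or change $\exists$ to $\forall$ over maximal chains in the definition of $\preceq_{\mathcal{C}_\varphi}$.
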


\begin{proof}
($\Rightarrow$) By definition of $\rhd_{\mathcal{C}}$, we have $Mod(T \cup \{\psi\}) \setminus Triv \neq \emptyset$. Let us suppose $\mathcal{M} \in  Mod(T \cup \{\psi\}) \setminus Triv$. By the definition of $\rhd_{\mathcal{C}}$, the statement $\varphi \rhd_{\mathcal{C}} \psi$ means that there exists $\mathbb{M} \in Min(\mathcal{C}_\varphi)$ such that $Mod(T \cup \{\psi\}) \subseteq \mathbb{M}$. As $(\mathcal{C}_\varphi,\subseteq)$ satisfies the Hausdorff maximal principle, there exists a maximal chain $C$, the least element of which is $\mathbb{M}$. Hence, by the definition of $\preceq_{\mathcal{C}_\varphi}$, for every $\mathbb{M}' \in C$, we have that:
\begin{enumerate}
\item for every $\mathcal{M}' \in \mathbb{M}$, $\mathcal{M} \preceq_{\mathcal{C}_\varphi} \mathcal{M}'$ and $\mathcal{M}' \preceq_{\mathcal{C}_\varphi} \mathcal{M}$, and
\item for every $\mathcal{M}' \in \mathbb{M}' \setminus \mathbb{M}$, $\mathcal{M} \prec_{\mathcal{C}_\varphi} \mathcal{M}'$.
\end{enumerate}
This proves that $\mathcal{M} \in Min(Mod(T \cup \{\varphi\}) \setminus Triv,\preceq_{\mathcal{C}_\varphi})$.

\medskip
($\Leftarrow$) Let us suppose that $\varphi \not\!\rhd_{\mathcal{C}} \psi$. This means that either $Mod(T \cup \{\psi\}) = Triv$ and in this case the conclusion is obvious, or there does not exist a minimal element $\mathbb{M} \in  Min(\mathcal{C}_\varphi)$ such that $Mod(T \cup \{\psi\}) \subseteq \mathbb{M}$. Let $\mathbb{M}$ be the least element (for inclusion) of $\mathcal{C}_\varphi$ such that $Mod(T \cup \{\psi\}) \subseteq \mathbb{M}$. This least element  $\mathbb{M}$ exists because $\mathcal{C}_\varphi$ contains $Mod(T \cup \{\varphi\})$ and $(\mathcal{C}_\varphi, \subseteq)$ is well-founded. As $(\mathcal{C}_\varphi,\subseteq)$ satisfies the Hausdorff maximal principle, there exists a maximal chain $C$ which contains $\mathbb{M}$, and then $\mathbb{M}$ cannot be the least element of $C$. Therefore, there exist some models $\mathcal{M}'$ which belong to some elements $\mathbb{M}'$ in $C$ such that $Mod(T \cup \{\psi\}) \nsubseteq \mathbb{M}'$ whence we can deduce that for some models $\mathcal{M} \in Mod(T \cup \{\psi\}) \setminus Triv$, we have $\mathcal{M}' \prec_{\mathcal{C}_\varphi} \mathcal{M}$.
\end{proof}

The explanatory relation $\rhd_{\mathcal{C}}$ satisfies a number of logical properties. Most of these properties are (rationality) postulates defined in~\cite{PPU99} up to some adaptations.
Let us recall them, adapted to the satisfaction system context, for any KB $T$, explanatory relation $\rhd$ for $T$ and formulas $\varphi,\varphi',\psi \in Sen$:

$$\begin{array}{ll}
\LLE & \frac{\varphi \equiv_T \varphi' ~~~ \varphi \rhd \psi}{\varphi' \rhd \psi} \\ 
\RLE\ & \frac{\psi\equiv_T \psi' ~~~  \varphi \rhd \psi}{\varphi \rhd \psi'} \\ 
\ECM & \frac{\varphi \rhd \psi ~~~  T \cup \{\psi\} \models \varphi'}{\varphi \wedge \varphi' \rhd \psi} \\
\ECC & \frac{\varphi \wedge \varphi' \rhd \psi ~~~  \forall \psi' (\varphi \rhd \psi' \Rightarrow T \cup \{\psi'\} \models \varphi')}{\varphi \rhd \psi} \\ 
\ERC & \frac{\varphi \wedge \varphi' \rhd \psi ~~~  \exists \psi' (\varphi \rhd \psi' ~\mbox{and}~ T \cup \{\psi'\} \models \varphi')}{\varphi \rhd \psi} \\ 
\LOR & \frac{\varphi \rhd \psi ~~~  \varphi' \rhd \psi}{\varphi \vee \varphi' \rhd \psi} \\ 
\EDR & \frac{\varphi \rhd \psi ~~~  \varphi' \rhd \psi'}{\varphi \vee \varphi' \rhd \psi~\mbox{or}~\varphi \vee \varphi' \rhd \psi'} \\ 
\ROR & \frac{\varphi \rhd \psi ~~~  \varphi \rhd \psi'}{\varphi \rhd \psi \vee \psi'} \\
\RA & \frac{\varphi \rhd \psi ~~~  \K \cup \{\psi'\} \models \psi ~~~ Mod(T \cup \{\psi'\}) \neq Triv}{\varphi \rhd \psi'} \\ 
\ERef & \frac{\varphi \rhd \psi}{\psi \rhd \psi} \\ 
\Econ & Mod(T \cup \{\varphi\}) \neq Triv \Longleftrightarrow \exists \psi, \varphi \rhd \psi
\end{array}$$

Now, we will show that, with an appropriate structure on the set of cuttings $\mathcal{C}$, adding a limited set of rather intuitive stability and monotony requirements, we can get strong results on the explanatory relation $\rhd_\mathcal{C}$, according to the above postulates. Recall that $\mathcal{C}$ is defined by choosing a cutting $\mathcal{C}_\varphi$ for each $\varphi$ in $Sen$. A first requirement is that for every $\varphi, \varphi'$ we have:
\begin{equation}
\mbox{If } \varphi \equiv_T \varphi', \mbox{ then } \mathcal{C}_\varphi=\mathcal{C}_{\varphi'}
\label{eq:equivCutting}
\end{equation}
This will be directly used in Property (1) of the following Theorem.

\medskip
\begin{theorem}
\label{logical properties 1}
Let $\mathcal{R}$ be a satisfaction system, $T$ a KB, $\mathcal{C}$ a set of cuttings and $\rhd_\mathcal{C}$ the explanatory relation based on cuttings of
Definition~\ref{explanatory relation}.
The following properties are satisfied, for every  $\varphi, \varphi', \psi, \psi'$:
\begin{enumerate}
\item Assume that $\mathcal{C}$ satisfies Equation~\ref{eq:equivCutting}. If $\varphi \equiv_T \varphi'$, $\psi \equiv_T \psi'$ and $\varphi \rhd_{\mathcal{C}} \psi$, then $\varphi' \rhd_{\mathcal{C}} \psi'$.
\item If $\varphi \rhd_{\mathcal{C}} \psi$ and $T \cup \{\psi'\} \models \psi$ with $Mod(T \cup \{\psi'\}) \neq Triv$, then $\varphi \rhd_{\mathcal{C}} \psi'$.
\item $\psi\in Expla_T(\varphi)$ iff there exists a relation $\rhd_{\mathcal{C}}$ based on cuttings such that $\varphi \rhd_{\mathcal{C}} \psi$.
\item If $\mathcal{R}$ is finitely axiomatizable for every $\mathbb{M} \subseteq Mod$ and has conjunction, then for every cutting $\mathcal{C}_\varphi$, we have that $Expla_T(\varphi) \neq \emptyset$ and $\exists \psi \in Sen, \varphi \rhd_{\mathcal{C}} \psi$, 
where  $\rhd_{\mathcal{C}}$ is a relation based on cuttings such that the cutting associated with $\varphi$ is $\mathcal{C}_\varphi$.
\end{enumerate}
\end{theorem}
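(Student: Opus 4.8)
The plan is to prove the four items separately; each one comes out of unwinding the definitions of $\rhd_{\mathcal{C}}$, of $Expla_T(\varphi)$ and of a cutting, using throughout the elementary facts $Mod(T\cup\{\chi\})=Mod(T)\cap Mod(\chi)$, $T\cup\{\chi\}\models\theta \Longleftrightarrow Mod(T\cup\{\chi\})\subseteq Mod(\theta)$, and that $Triv\subseteq Mod(T\cup\{\chi\})$ always holds (trivial models satisfy every sentence), so that ``$Mod(T\cup\{\chi\})\neq Triv$'', ``$Mod(T\cup\{\chi\})\setminus Triv\neq\emptyset$'' and ``$Triv\subsetneq Mod(T\cup\{\chi\})$'' are interchangeable. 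Nothing deeper than this, together with Remarks~\ref{remark trivial} and~\ref{remark trivial2}, is needed.

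For item (1) I would argue: from $\varphi\rhd_{\mathcal{C}}\psi$ pick $\mathbb{M}\in Min(\mathcal{C}_\varphi)$ with $Mod(T\cup\{\psi\})\subseteq\mathbb{M}$ and $Mod(T\cup\{\psi\})\neq Triv$; since $\psi\equiv_T\psi'$ gives $Mod(T\cup\{\psi'\})=Mod(T\cup\{\psi\})$, both facts hold verbatim with $\psi'$; since $\varphi\equiv_T\varphi'$ and $\mathcal{C}$ satisfies Equation~\ref{eq:equivCutting}, $\mathcal{C}_\varphi=\mathcal{C}_{\varphi'}$, hence $Min(\mathcal{C}_\varphi)=Min(\mathcal{C}_{\varphi'})$ and $\mathbb{M}\in Min(\mathcal{C}_{\varphi'})$; therefore $\varphi'\rhd_{\mathcal{C}}\psi'$. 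For item (2) I would again pick $\mathbb{M}\in Min(\mathcal{C}_\varphi)$ with $Mod(T\cup\{\psi\})\subseteq\mathbb{M}$; from $T\cup\{\psi'\}\models\psi$ and $Mod(T\cup\{\psi'\})\subseteq Mod(T)$ I get $Mod(T\cup\{\psi'\})\subseteq Mod(T)\cap Mod(\psi)=Mod(T\cup\{\psi\})\subseteq\mathbb{M}$, and together with the hypothesis $Mod(T\cup\{\psi'\})\neq Triv$ this is exactly $\varphi\rhd_{\mathcal{C}}\psi'$.

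For item (3) the ($\Leftarrow$) direction is direct: $\varphi\rhd_{\mathcal{C}}\psi$ gives $Mod(T\cup\{\psi\})\neq Triv$ and $Mod(T\cup\{\psi\})\subseteq\mathbb{M}\subseteq Mod(T\cup\{\varphi\})\subseteq Mod(\varphi)$ for some $\mathbb{M}\in Min(\mathcal{C}_\varphi)$ (using $\mathcal{C}_\varphi\subseteq\mathcal{P}(Mod(T\cup\{\varphi\}))$), i.e. $T\cup\{\psi\}\models\varphi$, so $\psi\in Expla_T(\varphi)$. For ($\Rightarrow$), from $\psi\in Expla_T(\varphi)$ I would first observe that any $\mathcal{M}\in Mod(T\cup\{\psi\})\setminus Triv$ also lies in $Mod(T\cup\{\varphi\})\setminus Triv$ (since $T\cup\{\psi\}\models\varphi$), so $Mod(T\cup\{\varphi\})\neq Triv$; then Remark~\ref{remark trivial} makes the trivial cutting $\mathcal{C}_\varphi=\{Mod(T\cup\{\varphi\})\}$ available and Remark~\ref{remark trivial2} extends it to a relation $\rhd_{\mathcal{C}}$ based on cuttings whose cutting at $\varphi$ is this one; for that relation $Min(\mathcal{C}_\varphi)=\{Mod(T\cup\{\varphi\})\}$ and $Mod(T\cup\{\psi\})\subseteq Mod(T\cup\{\varphi\})$ with $Mod(T\cup\{\psi\})\neq Triv$, hence $\varphi\rhd_{\mathcal{C}}\psi$.

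For item (4): since $(\mathcal{C}_\varphi,\subseteq)$ is well-founded and contains $Mod(T\cup\{\varphi\})$, $Min(\mathcal{C}_\varphi)$ is nonempty; fix $\mathbb{M}\in Min(\mathcal{C}_\varphi)$, so $Triv\subsetneq\mathbb{M}\subseteq Mod(T\cup\{\varphi\})$. If $\mathbb{M}=Mod(T\cup\{\varphi\})$ I take $\psi=\varphi$; otherwise finite axiomatizability of the class $\mathbb{M}$ yields a finite KB $T'$ with $Mod(T')=\mathbb{M}$, necessarily nonempty since the empty KB axiomatizes only $Mod$ itself, and since $\mathcal{R}$ has conjunction I let $\psi$ be the conjunction of the finitely many sentences of $T'$, so $Mod(\psi)=\mathbb{M}$. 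In either case $Mod(T\cup\{\psi\})=Mod(T)\cap\mathbb{M}=\mathbb{M}$ (because $\mathbb{M}\subseteq Mod(T\cup\{\varphi\})\subseteq Mod(T)$), hence $Mod(T\cup\{\psi\})=\mathbb{M}\neq Triv$ and $Mod(T\cup\{\psi\})\subseteq\mathbb{M}\in Min(\mathcal{C}_\varphi)$; extending $\mathcal{C}_\varphi$ to a full set of cuttings by Remark~\ref{remark trivial2} gives $\varphi\rhd_{\mathcal{C}}\psi$, and then $Expla_T(\varphi)\neq\emptyset$ follows from item (3) (or directly, since $\psi$ visibly satisfies its defining conditions). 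The only points that need a little care are the bookkeeping around $Triv$ and the degenerate empty-conjunction case in item (4); the rest is a routine unwinding of the definitions, so I do not anticipate a genuine obstacle.
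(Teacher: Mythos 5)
Your proof is correct, and for items (1), (2) and (4) it follows the paper's own argument essentially verbatim (the paper is terser, e.g.\ it does not spell out the well-foundedness argument for the existence of a minimal element in (4), nor the degenerate empty-axiomatization case, but these are the same steps). The only genuine divergence is in the ``only if'' direction of item (3): the paper constructs a saturated chain in $\mathcal{P}(Mod(T\cup\{\varphi\}))$ whose least element is $Mod(T\cup\{\psi\})$ and takes that chain as the witness cutting, whereas you take the trivial singleton cutting $\{Mod(T\cup\{\varphi\})\}$ from Remark~\ref{remark trivial} and observe that $T\cup\{\psi\}\models\varphi$ already gives $Mod(T\cup\{\psi\})\subseteq Mod(T\cup\{\varphi\})$, which is the unique minimal element. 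Both are legitimate, since Definition~\ref{explanatory relation} only requires $Mod(T\cup\{\psi\})$ to be \emph{contained in} some minimal element rather than to equal one; your witness is simpler and avoids the slightly underspecified chain construction, while the paper's witness has the feature that $Mod(T\cup\{\psi\})$ itself sits at the bottom of the cutting, which makes $\psi$ a ``tightest'' explanation for that cutting. Your preliminary observation that $Triv\subseteq Mod(S)$ for every $S$, so that the three non-triviality conditions coincide, is implicitly used but never stated in the paper, and making it explicit is a small improvement.
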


\begin{proof}
\begin{enumerate}
\item The first property is obviously satisfied because  $\varphi \equiv_T \varphi'$ and $\psi \equiv_T \psi'$ mean that $Mod(T \cup \{\varphi\}) = Mod(T \cup \{\varphi'\})$ and $Mod(T \cup \{\psi\}) = Mod(T \cup \{\psi'\})$ and, by assumption, $\mathcal{C}_\varphi=\mathcal{C}_{\varphi'}$.
\item $\varphi \rhd_{\mathcal{C}} \psi$ means that there exists $\mathbb{M} \in Min(\mathcal{C}_\varphi)$ such that $Mod(T \cup \{\psi\}) \subseteq \mathbb{M}$, and $Mod(T \cup \{ \varphi \}) \neq Triv$. As $Mod(T \cup \{\psi'\}) \subseteq Mod(T \cup \{\psi\})$ (hence $Mod(T \cup \{ \psi'\}) \subseteq \mathbb{M}$), and $Mod(T \cup \{ \psi'\}) \neq Triv$, we can deduce that $\varphi \rhd_{\mathcal{C}} \psi'$.
\item The ``if part'' is obvious. To prove the ``only if part'', let us notice that for every $\psi \in Expla_T(\varphi)$, we can build in $\mathcal{P}(Mod(T \cup \{\varphi\})$ a saturated chain $\mathcal{C}_\varphi$ starting at $Mod(T \cup \{\psi\})$. As $\psi \in Expla_T(\varphi)$, this saturated chain satisfies all the conditions of Definition~\ref{cutting}, and then it is a cutting for $T$ and $\varphi$. By Remark~\ref{remark trivial2}, we can define the relation $\rhd_\mathcal{C}$ based on cuttings such that the cutting corresponding to $\varphi$ is precisely $\mathcal{C}_\varphi$. By construction of $\mathcal{C}_\varphi$, it is clear that $\varphi\rhd_\mathcal{C} \psi$.
\item Again by Remark~\ref{remark trivial2}, 
it makes sense to consider $\rhd_\mathcal{C}$ as the explanatory relation defined by a family of cuttings in which the one associated with $\varphi$ is $\mathcal{C}_\varphi$.
Let $\mathbb{M} \in Min(\mathcal{C}_\varphi)$. As $\mathcal{R}$ is finitely axiomatizable, there exists a finite KB $T'$ such that $Mod(T') = \mathbb{M}$. Let us set $\psi = \bigwedge_{\varphi' \in T'} \varphi'$. We obviously have that $T \cup T'$ is consistent, hence $Mod(T \cup \{ \psi \}) \neq Triv$, and then we can conclude that $\varphi \rhd_{\mathcal{C}} \psi$.
\end{enumerate}
\end{proof}

It is interesting to note that property (1) generalizes to satisfaction systems the properties \LLE\ and \RLE\ of~\cite{PPU99}. Similarly, Property (2) corresponds to \RA, and Properties (3) and (4) to \Econ.

If $\mathcal{R}$ also has Boolean connectives in $\{\wedge,\vee,\Rightarrow\}$, the explanatory relation $\rhd_{\mathcal{C}}$ satisfies additional  logical properties.

\medskip
\begin{lemma}
\label{cutting preservation}
If $\mathcal{C}_\varphi$ is a cutting for $T$ and $\varphi$, $Mod(\varphi') \subseteq Mod(\varphi)$ and $Mod(\varphi')\neq Triv$, then $\mathcal{C}_{\varphi'} = \{\mathbb{M} \cap Mod(\varphi') \mid \mathbb{M} \in \mathcal{C}_\varphi\}\setminus Triv $ is a cutting for $T$ and $\varphi'$.
\end{lemma}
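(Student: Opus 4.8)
The plan is to verify, clause by clause, that $\mathcal{C}_{\varphi'}=\{\mathbb{M}\cap Mod(\varphi')\mid\mathbb{M}\in\mathcal{C}_\varphi\}\setminus Triv$ satisfies the definition of a cutting for $T$ and $\varphi'$ (Definition~\ref{cutting}): that it is a subcollection of $\mathcal{P}(Mod(T\cup\{\varphi'\}))$, that every member properly contains $Triv$, that it contains $Mod(T\cup\{\varphi'\})$, that it is closed under union, and that $(\mathcal{C}_{\varphi'},\subseteq)$ is well-founded. Two elementary facts are used throughout. First, every model in $Triv$ satisfies every sentence, so $Triv\subseteq Mod(\chi)$ for each $\chi\in Sen$ and, more generally, $Triv\subseteq Mod(T\cup\{\chi\})$; in particular $Triv\subseteq Mod(\varphi')$. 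Second, from $Mod(\varphi')\subseteq Mod(\varphi)$ we get $Mod(T\cup\{\varphi'\})=Mod(T)\cap Mod(\varphi')=Mod(T)\cap Mod(\varphi)\cap Mod(\varphi')=Mod(T\cup\{\varphi\})\cap Mod(\varphi')$.

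The first four clauses are routine. Any member of $\mathcal{C}_{\varphi'}$ has the form $\mathbb{M}\cap Mod(\varphi')$ with $\mathbb{M}\subseteq Mod(T\cup\{\varphi\})$, hence is contained in $Mod(T\cup\{\varphi\})\cap Mod(\varphi')=Mod(T\cup\{\varphi'\})$; this gives the inclusion clause. For such a member $\mathbb{N}=\mathbb{M}\cap Mod(\varphi')$ we have $\mathbb{N}\neq Triv$ by construction, while $Triv\subseteq\mathbb{M}$ (since $Triv\subsetneq\mathbb{M}$ for $\mathbb{M}\in\mathcal{C}_\varphi$) and $Triv\subseteq Mod(\varphi')$ give $Triv\subseteq\mathbb{N}$, so $Triv\subsetneq\mathbb{N}$. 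Since $Mod(T\cup\{\varphi\})\in\mathcal{C}_\varphi$, the set $Mod(T\cup\{\varphi'\})=Mod(T\cup\{\varphi\})\cap Mod(\varphi')$ is of the required form and, being distinct from $Triv$ (this is where one uses that $T\cup\{\varphi'\}$ is consistent, i.e.\ $Mod(T\cup\{\varphi'\})\neq Triv$, which is how the hypothesis on $\varphi'$ should be read in the presence of the background theory $T$), it belongs to $\mathcal{C}_{\varphi'}$. Finally, for a nonempty family $\{\mathbb{N}_i=\mathbb{M}_i\cap Mod(\varphi')\}_{i\in I}$ in $\mathcal{C}_{\varphi'}$, distributivity gives $\bigcup_i\mathbb{N}_i=(\bigcup_i\mathbb{M}_i)\cap Mod(\varphi')$ with $\bigcup_i\mathbb{M}_i\in\mathcal{C}_\varphi$ by closure of $\mathcal{C}_\varphi$ under union, and $\bigcup_i\mathbb{N}_i\supseteq\mathbb{N}_{i_0}\supsetneq Triv$ since $I\neq\emptyset$, so $\bigcup_i\mathbb{N}_i\in\mathcal{C}_{\varphi'}$.

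The genuinely delicate clause, and the main obstacle, is well-foundedness: the map $\mathbb{M}\mapsto\mathbb{M}\cap Mod(\varphi')$ is monotone but not injective on $\mathcal{C}_\varphi$, so a strictly descending chain in $\mathcal{C}_{\varphi'}$ cannot simply be pulled back to one in $\mathcal{C}_\varphi$. I would circumvent this by passing to tails. Suppose $\mathbb{N}_0\supsetneq\mathbb{N}_1\supsetneq\cdots$ is an infinite strictly descending chain in $\mathcal{C}_{\varphi'}$, with $\mathbb{N}_k=\mathbb{M}_k\cap Mod(\varphi')$ and $\mathbb{M}_k\in\mathcal{C}_\varphi$. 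Put $\mathbb{P}_k:=\bigcup_{j\ge k}\mathbb{M}_j$, which lies in $\mathcal{C}_\varphi$ by closure under union. Then $(\mathbb{P}_k)_k$ is non-increasing, and $\mathbb{P}_k\cap Mod(\varphi')=\bigcup_{j\ge k}\mathbb{N}_j=\mathbb{N}_k$, the last equality because the $\mathbb{N}_j$ are decreasing with largest element $\mathbb{N}_k$. If $\mathbb{P}_k=\mathbb{P}_{k+1}$ for some $k$, then $\mathbb{N}_k=\mathbb{P}_k\cap Mod(\varphi')=\mathbb{P}_{k+1}\cap Mod(\varphi')=\mathbb{N}_{k+1}$, contradicting strictness; hence $\mathbb{P}_0\supsetneq\mathbb{P}_1\supsetneq\cdots$ is an infinite strictly descending chain in $\mathcal{C}_\varphi$, contradicting the well-foundedness of $(\mathcal{C}_\varphi,\subseteq)$. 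This tail-union argument is the only step that genuinely exploits closure of $\mathcal{C}_\varphi$ under (infinite) unions, and with it the verification is complete.
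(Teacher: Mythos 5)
Your proof is correct and follows the same skeleton as the paper's (verify each clause of Definition~\ref{cutting} for the transformed family), but you actually prove the one step the paper merely asserts. The paper disposes of well-foundedness with ``it is clear that if $\mathcal{C}_{\varphi'}$ is not well-founded so it is $\mathcal{C}_\varphi$'', and this is precisely the non-obvious point you isolate: since $\mathbb{M}\mapsto\mathbb{M}\cap Mod(\varphi')$ is not injective, a strictly descending chain in $\mathcal{C}_{\varphi'}$ has no canonical descending preimage. Indeed, if ``closed under set-theoretical union'' were read as closure under \emph{binary} unions only, the transfer would be false: at the level of set systems, take $Triv=\emptyset$, $Mod(\varphi')=\{1,2,\dots\}$, extra points $a_1,a_2,\dots$, and let $\mathcal{C}_\varphi$ consist of the top element together with all sets $\{\min F,\min F+1,\dots\}\cup\{a_k\mid k\in F\}$ for $F$ finite and nonempty; this family is closed under binary unions and well-founded (containment forces containment of the finite index sets), yet the traces $\{k,k+1,\dots\}$ on $Mod(\varphi')$ descend forever. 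Your tail-union argument with $\mathbb{P}_k=\bigcup_{j\ge k}\mathbb{M}_j$ is the correct repair, and it makes explicit that the lemma needs $\mathcal{C}_\varphi$ to be closed under arbitrary (at least countable) unions. You also handle two points the paper leaves implicit: that $Mod(T\cup\{\varphi'\})$ belongs to the new family only under the reading $Mod(T\cup\{\varphi'\})\neq Triv$ of the hypothesis (otherwise every member is discarded by the $\setminus Triv$ and the conclusion fails), and that $Triv\subsetneq\mathbb{M}\cap Mod(\varphi')$ via $Triv\subseteq Mod(\chi)$ for every sentence $\chi$. In short: same route, but your version is the one that withstands scrutiny.
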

\begin{proof}
For every $\mathbb{M} \in \mathcal{C}_\varphi$, we have that $\mathbb{M} \cap Mod(\varphi') \subseteq Mod(T \cup \{\varphi'\})$. 
It is clear that if $\mathcal{C}_{\varphi'}$ is not well-founded so it is $\mathcal{C}_\varphi$. Moreover, $\mathcal{C}_{\varphi'}$ is closed by union of sets. 
Thus all the conditions for defining a cutting are satisfied.
\end{proof}

\begin{lemma}
\label{cutting preservation two}
If $\mathcal{C}_\varphi$ is a cutting for $T$ and $\varphi$ and $Mod(\varphi') \cap Mod(T\cup \Set{\varphi})\neq Triv$, then $\mathcal{C}_{\varphi'} = \{\mathbb{M} \cap Mod(\varphi') \mid \mathbb{M} \in \mathcal{C}_\varphi\}\setminus Triv $ is a cutting for $T$ and $\varphi'$.
\end{lemma}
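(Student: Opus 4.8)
The plan is to check, one by one, the four clauses of Definition~\ref{cutting} for the family $\mathcal{C}_{\varphi'}=\{\mathbb{M}\cap Mod(\varphi')\mid \mathbb{M}\in\mathcal{C}_\varphi\}\setminus Triv$, proceeding exactly as in the proof of Lemma~\ref{cutting preservation} and isolating the single place where the weaker hypothesis is used. Write $f(\mathbb{M})=\mathbb{M}\cap Mod(\varphi')$, so that $\mathcal{C}_{\varphi'}=f(\mathcal{C}_\varphi)\setminus\{Triv\}$, and note that $f$ is monotone for $\subseteq$ and commutes with (non-empty) unions, and that $Triv\subseteq Mod(\varphi')$ always (every model of $Triv$ satisfies $\varphi'$).

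\textbf{The routine clauses.} First, the ambient power set: for every $\mathbb{M}\in\mathcal{C}_\varphi$ we have $\mathbb{M}\subseteq Mod(T\cup\{\varphi\})\subseteq Mod(T)$, hence $f(\mathbb{M})\subseteq Mod(T)\cap Mod(\varphi')=Mod(T\cup\{\varphi'\})$; so $\mathcal{C}_{\varphi'}\subseteq\mathcal{P}(Mod(T\cup\{\varphi'\}))$. Non-triviality of the members: since $Triv\subsetneq\mathbb{M}$ for each $\mathbb{M}\in\mathcal{C}_\varphi$ and $Triv\subseteq Mod(\varphi')$, we get $Triv\subseteq f(\mathbb{M})$ for every $\mathbb{M}$, so removing the single set $Triv$ from $f(\mathcal{C}_\varphi)$ leaves precisely the sets that strictly contain $Triv$. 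Closure under union: for a non-empty subfamily $(f(\mathbb{M}_i))_i$ of $\mathcal{C}_{\varphi'}$ one has $\bigcup_i f(\mathbb{M}_i)=f(\bigcup_i\mathbb{M}_i)$ with $\bigcup_i\mathbb{M}_i\in\mathcal{C}_\varphi$, and the union strictly contains $Triv$ because it contains some $f(\mathbb{M}_i)$, so it lies in $\mathcal{C}_{\varphi'}$. Well-foundedness I would get by contradiction, as in Lemma~\ref{cutting preservation}: from an infinite strictly descending chain $\mathbb{N}_1\supsetneq\mathbb{N}_2\supsetneq\cdots$ in $\mathcal{C}_{\varphi'}$, with $\mathbb{N}_j=f(\mathbb{M}_j)$, set $\mathbb{M}'_k:=\bigcup_{j\ge k}\mathbb{M}_j\in\mathcal{C}_\varphi$; then $f(\mathbb{M}'_k)=\bigcup_{j\ge k}\mathbb{N}_j=\mathbb{N}_k$, so $(\mathbb{M}'_k)_k$ is descending, and strictly descending since its images $\mathbb{N}_k$ are, contradicting well-foundedness of $(\mathcal{C}_\varphi,\subseteq)$.

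\textbf{The crux.} The only clause that is not pure bookkeeping, and where the hypothesis $Mod(\varphi')\cap Mod(T\cup\{\varphi\})\neq Triv$ is genuinely needed, is that $\mathcal{C}_{\varphi'}$ contains a greatest element (the top of the cutting). The candidate is $f(Mod(T\cup\{\varphi\}))=Mod(T\cup\{\varphi\})\cap Mod(\varphi')$: it dominates every $f(\mathbb{M})$ by monotonicity of $f$, and by the hypothesis it is not equal to $Triv$, so it survives $\setminus\{Triv\}$ and genuinely belongs to $\mathcal{C}_{\varphi'}$ (in particular $\mathcal{C}_{\varphi'}\neq\emptyset$). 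This is exactly where Lemma~\ref{cutting preservation} used the stronger assumption $Mod(\varphi')\subseteq Mod(\varphi)$, which forces $Mod(T\cup\{\varphi\})\cap Mod(\varphi')=Mod(T\cup\{\varphi'\})$; here the top element is $Mod(T\cup\{\varphi\})\cap Mod(\varphi')$, which coincides with $Mod(T\cup\{\varphi'\})$ as required precisely when $T\cup\{\varphi'\}\models\varphi$ (e.g.\ when $Mod(\varphi')\subseteq Mod(\varphi)$, recovering Lemma~\ref{cutting preservation}) and, when $\mathcal{R}$ has conjunction, equals $Mod(T\cup\{\varphi\wedge\varphi'\})$. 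I expect this identification of the maximum to be the only real obstacle; once it is settled, the remaining verifications are the same manipulations as in Lemma~\ref{cutting preservation}.
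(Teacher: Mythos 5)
Your proposal is correct and follows the same route as the paper's (one-line) proof, namely a clause-by-clause verification of Definition~\ref{cutting} modelled on Lemma~\ref{cutting preservation}; but it is considerably more careful, and two points deserve comment. First, your well-foundedness argument via the tail-unions $\mathbb{M}'_k=\bigcup_{j\ge k}\mathbb{M}_j$ repairs a real gap in the paper's bare assertion ``if $\mathcal{C}_{\varphi'}$ is not well-founded so is $\mathcal{C}_\varphi$'': the naive preimages $\mathbb{M}_j$ of a strictly descending chain of intersections need not themselves form a descending chain, and your use of closure under union to replace them by a genuinely descending one is exactly what is needed. Second, your ``crux'' is the right diagnosis: the greatest element of $\Set{\mathbb{M}\cap Mod(\varphi')\mid\mathbb{M}\in\mathcal{C}_\varphi}$ is $Mod(T\cup\Set{\varphi})\cap Mod(\varphi')=Mod(T\cup\Set{\varphi\wedge\varphi'})$, which the hypothesis $Mod(\varphi')\cap Mod(T\cup\Set{\varphi})\neq Triv$ keeps from being discarded, but which equals $Mod(T\cup\Set{\varphi'})$ only when $T\cup\Set{\varphi'}\models\varphi$. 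So the family is in general a cutting for $T$ and $\varphi\wedge\varphi'$, not for $T$ and $\varphi'$ as the statement literally says; this is precisely how the lemma is invoked in Equation~\ref{eq:cuttingConjunction} and in the proofs of properties (6) and (8) of Theorem~\ref{logical properties 2}. You should commit to that reading rather than leaving it ``to be settled'' --- with it, every clause is verified and your proof is complete; without it, no proof of the literal statement can exist, since clause (d) of Definition~\ref{cutting} fails whenever $Mod(T\cup\Set{\varphi'})\not\subseteq Mod(\varphi)$.
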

\begin{proof}
Similar to the one of the previous lemma.
\end{proof}



In the following results, we 
also assume an additional structure on $\mathcal{C}$, according to Lemma~\ref{cutting preservation} and Lemma~\ref{cutting preservation two}, by imposing the following constraints:
\begin{equation}
Mod(\varphi') \subseteq Mod(\varphi), Mod(\varphi')\neq Triv \Longrightarrow \mathcal{C}_{\varphi'} = \{\mathbb{M} \cap Mod(\varphi') \mid \mathbb{M} \in \mathcal{C}_\varphi\}\setminus Triv
\label{eq:cuttingInclusion}
\end{equation}
\begin{equation}
Mod(\varphi') \cap Mod(T\cup \Set{\varphi})\neq Triv \Longrightarrow \mathcal{C}_{\varphi \wedge \varphi'} = \{\mathbb{M} \cap Mod(\varphi') \mid \mathbb{M} \in \mathcal{C}_\varphi\}
\label{eq:cuttingConjunction}
\end{equation}
\begin{equation}
\mathcal{C}_{\varphi \vee \varphi'} \mbox{ cutting for } \varphi \vee \varphi' \Longrightarrow \mathcal{C}_\varphi = \{\mathbb{M} \cap Mod(\varphi) \mid \mathbb{M} \in \mathcal{C}_{\varphi \vee \varphi'}\}
\label{eq:cuttingDisjunction}
\end{equation}
\begin{equation}
\mathcal{C}_{\varphi \Rightarrow \varphi'} \mbox{ cutting for } \varphi \Rightarrow \varphi' \Longrightarrow \mathcal{C}_\varphi = \{\mathbb{M} \cap Mod(\varphi') \mid \mathbb{M} \in \mathcal{C}_{\varphi \Rightarrow \varphi'}\}
\label{eq:cuttingImplication}
\end{equation}

\medskip
\begin{theorem}
\label{logical properties 2}
Let $\mathcal{R}$ be a satisfaction system with conjunction, disjunction and implication. Let $T$ be a KB, and $\mathcal{C}$ a set of cuttings satisfying Equation~\ref{eq:cuttingInclusion}--\ref{eq:cuttingImplication}. The following properties are satisfied, for every $\varphi,\varphi', \psi, \psi'$:
\begin{enumerate}\setcounter{enumi}{4}
\item If $\varphi \rhd_{\mathcal{C}} \psi$ and $Mod(T \cup \{\psi \wedge \psi'\}) \neq Triv$, then $\varphi \rhd_{\mathcal{C}} \psi \wedge \psi'$.
\item If $\varphi \rhd_{\mathcal{C}} \psi$ and $T \cup \{\psi\} \models \varphi'$, then $\varphi \wedge \varphi' \rhd_{\mathcal{C}} \psi$.
\item If $\varphi \rhd_{\mathcal{C}} \psi \wedge \psi'$ and $T \cup \{\psi\} \models \psi'$, then $\varphi \rhd_{\mathcal{C}} \psi$.
\item If $\varphi \wedge \varphi' \rhd_{\mathcal{C}} \psi$ (and $Mod(\varphi') \cap Mod(T\cup \Set{\varphi})\neq Triv$), then $\varphi \rhd_{\mathcal{C}} \psi$.
\item If $\varphi \rhd_{\mathcal{C}} \psi$, then $\varphi \vee \varphi' \rhd_{\mathcal{C}} \psi$.
\item If $\varphi \vee \varphi' \rhd_{\mathcal{C}} \psi$ and $T \cup \{\psi\} \models \varphi$, then $\varphi \rhd_{\mathcal{C}} \psi$.
\item If $\varphi \rhd_{\mathcal{C}} \psi \vee \psi'$ and $Mod(T\cup\Set{\psi})\neq Triv$, then  $\varphi \rhd_{\mathcal{C}} \psi$.
\item For every cutting $\mathcal{C}_\varphi$ such that $\preceq_{\mathcal{C}_\varphi}$ is total, if $\varphi \rhd_{\mathcal{C}} \psi$ and $\varphi \rhd_{\mathcal{C}} \psi'$, then  $\varphi \rhd_{\mathcal{C}} \psi \vee \psi'$.
\item If $(\varphi \Rightarrow \varphi') \rhd_{\mathcal{C}} \psi$ and $T \cup \{\psi\} \models \varphi$, then $\varphi' \rhd_{\mathcal{C}} \psi$.
\item If $\varphi \rhd_{\mathcal{C}} \psi$, then $\psi \rhd_{\mathcal{C}} \psi$, for $\mathcal{C}_\psi = \{ \mathbb{M} \cap Mod(\psi) \mid \mathbb{M} \in \mathcal{C}_\varphi \}$.
\end{enumerate}
\end{theorem}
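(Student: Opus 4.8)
The plan is to unfold both occurrences of $\rhd_{\mathcal{C}}$ and reduce the claim to exhibiting one $\subseteq$-minimal element of the prescribed cutting for $\psi$.

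First I would extract from $\varphi \rhd_{\mathcal{C}} \psi$, via Definition~\ref{explanatory relation}, that $Mod(T \cup \{\psi\}) \neq Triv$ and that there is some $\mathbb{M} \in Min(\mathcal{C}_\varphi)$ with $Mod(T \cup \{\psi\}) \subseteq \mathbb{M}$. Since $\mathbb{M} \subseteq Mod(T \cup \{\varphi\})$, this already forces $Mod(T) \cap Mod(\psi) \subseteq Mod(\varphi)$, hence $Mod(\psi) \cap Mod(T \cup \{\varphi\}) = Mod(T) \cap Mod(\varphi) \cap Mod(\psi) = Mod(T \cup \{\psi\}) \neq Triv$. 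So Lemma~\ref{cutting preservation two}, applied with $\psi$ in place of $\varphi'$, tells us that $\mathcal{C}_\psi = \{\mathbb{N} \cap Mod(\psi) \mid \mathbb{N} \in \mathcal{C}_\varphi\}$ (with the same convention of discarding $Triv$ should it appear, exactly as in that lemma) is a genuine cutting for $T$ and $\psi$; thus $\rhd_{\mathcal{C}}$ with this choice of cutting for $\psi$ is a well-defined explanatory relation.

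Next I would pinpoint the candidate minimal element. From $Mod(T \cup \{\psi\}) \subseteq \mathbb{M}$ and $Mod(T \cup \{\psi\}) \subseteq Mod(\psi)$ we get $Mod(T \cup \{\psi\}) \subseteq \mathbb{M} \cap Mod(\psi)$, while from $\mathbb{M} \subseteq Mod(T \cup \{\varphi\})$ we get $\mathbb{M} \cap Mod(\psi) \subseteq Mod(T \cup \{\varphi\}) \cap Mod(\psi) = Mod(T \cup \{\psi\})$; hence $\mathbb{M} \cap Mod(\psi) = Mod(T \cup \{\psi\}) \in \mathcal{C}_\psi$. Granting that $\mathbb{M} \cap Mod(\psi) \in Min(\mathcal{C}_\psi)$, we are done, since $Mod(T \cup \{\psi\}) \subseteq \mathbb{M} \cap Mod(\psi)$ together with $Mod(T \cup \{\psi\}) \neq Triv$ is precisely $\psi \rhd_{\mathcal{C}} \psi$ by Definition~\ref{explanatory relation}.

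The step I expect to be the obstacle is exactly this minimality claim, and it needs care because a different minimal $\mathbb{M}' \in Min(\mathcal{C}_\varphi)$ could in principle contribute an element $\mathbb{M}' \cap Mod(\psi) \in \mathcal{C}_\psi$ strictly below $\mathbb{M} \cap Mod(\psi)$. The computation above actually shows more: every member $\mathbb{N} \cap Mod(\psi)$ of $\mathcal{C}_\psi$ satisfies $\mathbb{N} \cap Mod(\psi) \subseteq Mod(T \cup \{\varphi\}) \cap Mod(\psi) = Mod(T \cup \{\psi\}) = \mathbb{M} \cap Mod(\psi)$, so $\mathbb{M} \cap Mod(\psi)$ is in fact the $\subseteq$-greatest element of $\mathcal{C}_\psi$, and hence it is $\subseteq$-minimal exactly when $\mathcal{C}_\psi = \{Mod(T \cup \{\psi\})\}$, i.e. when $\mathbb{N} \cap Mod(\psi) = Mod(T \cup \{\psi\})$ for every $\mathbb{N} \in \mathcal{C}_\varphi$. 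This equality holds as soon as every $\mathbb{N} \in \mathcal{C}_\varphi$ contains $\mathbb{M}$, which is guaranteed whenever $\mathcal{C}_\varphi$ has a least element --- in particular whenever the minimal element of $\mathcal{C}_\varphi$ witnessing $\varphi \rhd_{\mathcal{C}} \psi$ is unique --- by the usual argument that in a well-founded, union-closed poset every element lies above a minimal one. I would therefore either run the argument under that mild extra assumption on $\mathcal{C}_\varphi$, or, if one only needs the bare postulate $\psi \rhd_{\mathcal{C}} \psi$ (as in \ERef), observe that it is cleanest to take for $\psi$ the trivial cutting $\{Mod(T \cup \{\psi\})\}$, whose unique minimal element is $Mod(T \cup \{\psi\}) \neq Triv$, giving $\psi \rhd_{\mathcal{C}} \psi$ at once.
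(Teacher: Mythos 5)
Your proposal addresses only item (14) of the theorem: nothing is said about items (5)--(13), so as a proof of the stated theorem it is incomplete from the outset. The paper proves (5)--(9) explicitly (each by exhibiting a minimal element of the relevant cutting, using Lemmas~\ref{cutting preservation} and~\ref{cutting preservation two} together with the structural constraints of Equations~\ref{eq:cuttingInclusion}--\ref{eq:cuttingImplication}), handles (12) via totality of $\preceq_{\mathcal{C}_\varphi}$, and disposes of (10), (11), (13), (14) by reduction to these; none of that material appears in your write-up. Even if your argument for (14) were airtight, nine tenths of the statement would remain unproved.

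Concerning (14) itself, your reduction is correct as far as it goes: from $Mod(T\cup\{\psi\})\subseteq \mathbb{M}\subseteq Mod(T\cup\{\varphi\})$ you rightly get $\mathbb{M}\cap Mod(\psi)=Mod(T\cup\{\psi\})$ and, more importantly, that \emph{every} element $\mathbb{N}\cap Mod(\psi)$ of $\mathcal{C}_\psi$ is contained in $Mod(T\cup\{\psi\})$, so that $\psi\rhd_{\mathcal{C}}\psi$ holds for the prescribed $\mathcal{C}_\psi$ exactly when $Mod(T\cup\{\psi\})$ is itself minimal in $\mathcal{C}_\psi$. But you do not close this under the theorem's hypotheses: you either import an extra assumption (that $\mathcal{C}_\varphi$ has a least element, equivalently a unique minimal element) that is not in the statement, or you swap in the trivial cutting $\{Mod(T\cup\{\psi\})\}$, which proves the bare postulate \ERef\ but not item (14), since (14) fixes $\mathcal{C}_\psi=\{\mathbb{M}\cap Mod(\psi)\mid \mathbb{M}\in\mathcal{C}_\varphi\}$. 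So the one item you do treat is left conditional. I will add that the obstruction you isolate is genuine and not a misunderstanding on your part: if $\mathcal{C}_\varphi$ has two incomparable minimal elements $\mathbb{M}_1,\mathbb{M}_2$ and $Mod(T\cup\{\psi\})=\mathbb{M}_1$, then $\mathbb{M}_2\cap Mod(\psi)$ can lie strictly below $Mod(T\cup\{\psi\})$ in $\mathcal{C}_\psi$, defeating minimality. This is the same soft spot as the paper's own step in the proof of item (6), where it is asserted that $Min(\mathcal{C}_{\varphi\wedge\varphi'})=\{\mathbb{M}\cap Mod(\varphi')\mid \mathbb{M}\in Min(\mathcal{C}_\varphi)\}$ ``obviously'' holds and that (10), (13), (14) follow ``similarly''; your analysis shows that this identity needs either a uniqueness/totality hypothesis (as in Proposition~\ref{are total pre-orders} for the retraction-based cuttings, where the chain is unique) or a further argument. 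In short: the proposal is not a proof of the theorem, but the difficulty you flag for (14) deserves to be raised against the paper's own proof as well.
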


\begin{proof}
\begin{enumerate}\setcounter{enumi}{4}
\item By hypothesis, there exists $\mathbb{M} \in Min(\mathcal{C}_\varphi)$ such that $Mod(T \cup \{\psi\}) \subseteq \mathbb{M}$. Obviously, we have that $Mod(T \cup \{\psi \wedge \psi'\}) \subseteq Mod(T \cup \{\psi\})$, and then as $T \cup \{\psi \wedge \psi'\})$ is consistent, we can deduce that $\varphi \rhd_{\mathcal{C}} \psi \wedge \psi'$.
\item By hypothesis, there exists $\mathbb{M} \in Min(\mathcal{C}_\varphi)$ such that $Mod(T \cup \{\psi\}) \subseteq \mathbb{M}$. As we have further $Mod(T \cup \{\psi\}) \subseteq Mod(\varphi')$, we have that $Mod(T \cup \{\psi\}) \subseteq \mathbb{M} \cap Mod(\varphi')$. 
    By Lemma~\ref{cutting preservation two}, $\mathcal{C}_{\varphi \wedge \varphi'}$ is a cutting, and then we obviously have that  $Min(\mathcal{C}_{\varphi \wedge \varphi'}) = \{\mathbb{M} \cap Mod(\varphi') \mid \mathbb{M} \in Min(\mathcal{C}_\varphi)\}$. We can then conclude that $\varphi \wedge \varphi' \rhd_{\mathcal{C}}\psi$.
\item  By hypothesis, there exists $\mathbb{M} \in Min(\mathcal{C}_\varphi)$ such that $Mod(T \cup \{\psi \wedge \psi'\}) \subseteq \mathbb{M}$. As $T \cup \{\psi\} \models \psi'$, we have that $Mod(T \cup \{\psi \wedge \psi'\}) = Mod(T \cup \{\psi\})$, whence we can deduce that $\varphi \rhd_{\mathcal{C}} \psi$.
\item Let us notice that if $\mathcal{C}_\varphi$ is a cutting and $Mod(\varphi') \cap Mod(T\cup \Set{\varphi})\neq Triv$ then so is $\mathcal{C}_{\varphi \wedge \varphi'}$ by Lemma~\ref{cutting preservation two}, and $Min(\mathcal{C}_{\varphi \wedge \varphi'}) = \{\mathbb{M}' \cap Mod(\varphi') \mid \mathbb{M}' \in Min(\mathcal{C}_{\varphi})\}$. By hypothesis, there exists $\mathbb{M} \in Min(\mathcal{C}_{\varphi \wedge \varphi'})$ such that $Mod(T \cup \{\psi\}) \subseteq \mathbb{M}$. Hence, there exists $\mathbb{M}' \in \mathcal{C}_\varphi$ such that $\mathbb{M} = \mathbb{M}' \cap Mod(\varphi')$, and then $Mod(T \cup \{\psi\}) \subseteq \mathbb{M}'$, whence we can deduce that $\varphi \rhd_{\mathcal{C}} \psi$.
\item If $\mathcal{C}_{\varphi \vee \varphi'}$ is a cutting, then so is $\mathcal{C}_\varphi$ by Lemma~\ref{cutting preservation}. By hypothesis, there exists $\mathbb{M} \in Min(\mathcal{C}_\varphi)$ such that $Mod(T \cup \{\psi\}) \subseteq \mathbb{M}$. Hence, there exists $\mathbb{M}' \in Min(\mathcal{C}_{\varphi \vee \varphi'})$ such that $\mathbb{M} = \mathbb{M}' \cap Mod(\varphi)$, and then $Mod(T \cup \{\psi\}) \subseteq \mathbb{M}'$, whence we can conclude that $\varphi \vee \varphi' \rhd_{\mathcal{C}} \psi$.
\end{enumerate}
Properties (10), (13) and (14) can be proved similarly to Property (6), and Property (11) is a direct consequence of the fact that $Mod(\psi) \subseteq Mod(\psi \vee \psi')$. \\ Let us finish by the proof of Property (12). By hypothesis that $\preceq_{\mathcal{C}_\varphi}$ is total, the poset $(\mathcal{C}_\varphi,\subseteq)$ contains a unique maximal chain $C$. Hence, both $Mod(T \cup \{\psi\})$ and $Mod(T \cup \{\psi'\})$ are included in the unique minimal element $\mathbb{M}$ of $C$. Obviously, we have that $Mod(T \cup \{\psi \vee \psi'\}) \subseteq \mathbb{M}$, whence we can conclude that $\varphi \rhd_{\mathcal{C}} \psi \vee \psi'$.
\end{proof}

Properties (6), (12) and (14) are extensions of the postulates \ECM, \ROR\ and \ERef\ defined in~\cite{PPU99}. Properties (8) and (9) are revisited forms of the postulates \ECC\ and \LOR, adapted to satisfaction systems and explanations based on cuttings.


The following two results are easy to prove, and therefore we omit the proof.

\medskip 
\begin{lemma}
Let $\mathcal{C}_\varphi$ and $\mathcal{C}_{\varphi'}$ be cuttings for $T$ and $\varphi$ (respectively $\varphi'$).
Then the set $\mathcal{C}_\varphi\bigoplus \mathcal{C}_{\varphi'}=\Set{A\cup B \mid A\in \mathcal{C}_\varphi \mbox{ and } B\in \mathcal{C}_{\varphi'} }$
is a cutting for $T$ and $\varphi\vee\varphi'$.
\end{lemma}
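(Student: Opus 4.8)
The plan is to verify the four defining conditions of a cutting (Definition~\ref{cutting}) for $\mathcal{C}_\varphi\bigoplus\mathcal{C}_{\varphi'}$ with respect to $T$ and $\varphi\vee\varphi'$. First I would record that $Mod(T\cup\{\varphi\vee\varphi'\})=Mod(T\cup\{\varphi\})\cup Mod(T\cup\{\varphi'\})$, since $Mod(\varphi\vee\varphi')=Mod(\varphi)\cup Mod(\varphi')$ ($\varphi\vee\varphi'$ being a semantic disjunction) and intersection with $Mod(T)$ distributes over unions. Writing $U:=Mod(T\cup\{\varphi\})$ and $V:=Mod(T\cup\{\varphi'\})$, every element $A\cup B$ of $\mathcal{C}_\varphi\bigoplus\mathcal{C}_{\varphi'}$ (with $A\in\mathcal{C}_\varphi$, $B\in\mathcal{C}_{\varphi'}$) satisfies $A\cup B\subseteq U\cup V=Mod(T\cup\{\varphi\vee\varphi'\})$, so the family lies in $\mathcal{P}(Mod(T\cup\{\varphi\vee\varphi'\}))$. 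Three of the conditions are then routine: $Triv\subsetneq A\subseteq A\cup B$ gives $Triv\subsetneq A\cup B$; $Mod(T\cup\{\varphi\vee\varphi'\})=U\cup V$ belongs to $\mathcal{C}_\varphi\bigoplus\mathcal{C}_{\varphi'}$ because $U\in\mathcal{C}_\varphi$ and $V\in\mathcal{C}_{\varphi'}$; and for closure under union, $\bigcup_{i\in I}(A_i\cup B_i)=\big(\bigcup_{i\in I}A_i\big)\cup\big(\bigcup_{i\in I}B_i\big)$ with $\bigcup_{i\in I}A_i\in\mathcal{C}_\varphi$ and $\bigcup_{i\in I}B_i\in\mathcal{C}_{\varphi'}$ by their closure under union.

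The only condition that needs real care is well-foundedness of $(\mathcal{C}_\varphi\bigoplus\mathcal{C}_{\varphi'},\subseteq)$, and the plan here is to embed this poset strictly monotonically into the product poset $\mathcal{C}_\varphi\times\mathcal{C}_{\varphi'}$. Given $C\in\mathcal{C}_\varphi\bigoplus\mathcal{C}_{\varphi'}$, put $A^*(C):=\bigcup\{A\in\mathcal{C}_\varphi\mid A\subseteq C\}$ and $B^*(C):=\bigcup\{B\in\mathcal{C}_{\varphi'}\mid B\subseteq C\}$. Since $C=A\cup B$ for some $A\in\mathcal{C}_\varphi$, $B\in\mathcal{C}_{\varphi'}$, both families are nonempty, so closure under union gives $A^*(C)\in\mathcal{C}_\varphi$ and $B^*(C)\in\mathcal{C}_{\varphi'}$; these are, by construction, the largest members of $\mathcal{C}_\varphi$, resp.\ $\mathcal{C}_{\varphi'}$, below $C$. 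From $A\subseteq A^*(C)$, $B\subseteq B^*(C)$ and $A^*(C),B^*(C)\subseteq C$ one gets $C=A^*(C)\cup B^*(C)$, so the assignment $C\mapsto(A^*(C),B^*(C))$ is injective and obviously $\subseteq$-monotone; it is moreover strictly monotone, since $C\subsetneq C'$ forces $(A^*(C),B^*(C))\neq(A^*(C'),B^*(C'))$ (otherwise $C=C'$), hence $(A^*(C),B^*(C))<(A^*(C'),B^*(C'))$ in the product order. Consequently any infinite strictly descending chain in $\mathcal{C}_\varphi\bigoplus\mathcal{C}_{\varphi'}$ would produce one in $\mathcal{C}_\varphi\times\mathcal{C}_{\varphi'}$, which is impossible because a product of two well-founded posets is well-founded: in a strictly descending chain both coordinates are non-increasing with at least one strictly decreasing at each step, so by well-foundedness of $\mathcal{C}_\varphi$ and of $\mathcal{C}_{\varphi'}$ each coordinate, and hence the chain, is eventually constant. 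This finishes the verification.

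I expect the well-foundedness step to be the main obstacle. The naive approach of transporting a descending chain $C_1\supsetneq C_2\supsetneq\cdots$ to $\mathcal{C}_\varphi\times\mathcal{C}_{\varphi'}$ coordinatewise fails outright, because an arbitrarily chosen decomposition $C_n=A_n\cup B_n$ need not be monotone in $n$ (the ``$U\cap V$'' part of each $C_n$ can be split differently at each step). Passing instead to the canonical \emph{maximal} decomposition $(A^*(C),B^*(C))$ — which exists precisely because cuttings are closed under set-theoretic union — is the one non-mechanical idea in the argument; everything else is bookkeeping.
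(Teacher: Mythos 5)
Your proof is correct. The paper itself omits the proof of this lemma (declaring it ``easy''), so there is nothing to compare against; but your verification of the four conditions of Definition~\ref{cutting} is complete, and you have correctly isolated the one point that is not routine, namely well-foundedness of $(\mathcal{C}_\varphi\bigoplus\mathcal{C}_{\varphi'},\subseteq)$. Your diagnosis there is exactly right: an arbitrary decomposition $C_n=A_n\cup B_n$ of a descending chain need not be coordinatewise monotone, and in fact without closure under union the statement is false --- one can build two antichains (hence well-founded families) $\{A_n\}$ and $\{B_n\}$ of subsets of $\mathbb{N}$ with $A_n\cup B_n=\{n,n+1,\dots\}$, so that the sums form an infinite strictly descending chain. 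Passing to the canonical maximal decomposition $C\mapsto(A^*(C),B^*(C))$, which exists precisely because cuttings are closed under set-theoretical union, and then invoking well-foundedness of the product order, repairs this cleanly. The only caveat worth recording is that $A^*(C)\in\mathcal{C}_\varphi$ requires closure under unions of arbitrary (nonempty) subfamilies, not merely binary unions; this is the reading the paper itself relies on when it argues that $(\mathcal{C}_\varphi,\subseteq)$ is inductive, so your use of it is legitimate.
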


\medskip

\begin{corollary}
Suppose that $\rhd_\mathcal{C}$ is an explanatory relation defined on cuttings such that $\mathcal{C}_\varphi$ and $\mathcal{C}_{\varphi'}$ are the cuttings for $T$ and $\varphi$ and for $T$ and $\varphi'$ respectively. Furthermore, suppose that the cutting for $\varphi\vee\varphi'$ is precisely
$\mathcal{C}_\varphi\bigoplus \mathcal{C}_{\varphi'}$. If $\varphi\rhd_\mathcal{C}\psi$ or $\varphi'\rhd_\mathcal{C}\psi$, then
$\varphi\vee\varphi'\rhd_\mathcal{C}\psi$.
\end{corollary}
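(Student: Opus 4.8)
The plan is to unfold the definition of $\rhd_{\mathcal{C}}$ and reduce the statement to producing one suitable minimal element of the combined cutting. Since $\bigoplus$ is visibly symmetric, $\mathcal{C}_\varphi\bigoplus \mathcal{C}_{\varphi'}=\mathcal{C}_{\varphi'}\bigoplus \mathcal{C}_\varphi$, so it is enough to treat the case $\varphi\rhd_{\mathcal{C}}\psi$; the case $\varphi'\rhd_{\mathcal{C}}\psi$ is identical after exchanging $\varphi$ and $\varphi'$. By Definition~\ref{explanatory relation}, $\varphi\rhd_{\mathcal{C}}\psi$ gives $Mod(T\cup\{\psi\})\neq Triv$ together with some $\mathbb{M}\in Min(\mathcal{C}_\varphi)$ such that $Mod(T\cup\{\psi\})\subseteq\mathbb{M}$. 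By the preceding lemma $\mathcal{C}_\varphi\bigoplus \mathcal{C}_{\varphi'}$ is a cutting for $T$ and $\varphi\vee\varphi'$, and by hypothesis it is precisely the cutting $\mathcal{C}_{\varphi\vee\varphi'}$ chosen in the family $\mathcal{C}$. Hence, since we already have $Mod(T\cup\{\psi\})\neq Triv$, it remains to exhibit $\mathbb{N}\in Min(\mathcal{C}_\varphi\bigoplus \mathcal{C}_{\varphi'})$ with $Mod(T\cup\{\psi\})\subseteq\mathbb{N}$, which is exactly $\varphi\vee\varphi'\rhd_{\mathcal{C}}\psi$.

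To build $\mathbb{N}$, I would start from the element $\mathbb{M}\cup Mod(T\cup\{\varphi'\})$ of $\mathcal{C}_\varphi\bigoplus \mathcal{C}_{\varphi'}$ (recall $Mod(T\cup\{\varphi'\})\in\mathcal{C}_{\varphi'}$, since every cutting contains the model class of its formula), which contains $\mathbb{M}$ and hence $Mod(T\cup\{\psi\})$, and then, using well-foundedness of $(\mathcal{C}_\varphi\bigoplus \mathcal{C}_{\varphi'},\subseteq)$, pass to a $\subseteq$-minimal element $\mathbb{N}$ of $\mathcal{C}_\varphi\bigoplus \mathcal{C}_{\varphi'}$ lying below it. In the easy case where $Mod(\varphi)$ and $Mod(\varphi')$ are disjoint modulo $T$ this finishes the proof at once: writing $\mathbb{N}=A\cup B$ with $A\in\mathcal{C}_\varphi$ and $B\in\mathcal{C}_{\varphi'}$, the sets $A$ and $Mod(T\cup\{\varphi'\})$ cannot overlap, so $A\subseteq\mathbb{M}$, minimality of $\mathbb{M}$ forces $A=\mathbb{M}$, and $Mod(T\cup\{\psi\})\subseteq\mathbb{M}=A\subseteq\mathbb{N}$.

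The hard part is the general, overlapping case, where $A\subseteq\mathbb{M}$ may fail and the descent could a priori trade away $\psi$-models of $\mathbb{M}$ for models contributed by $\mathcal{C}_{\varphi'}$ (indeed, $\mathbb{M}\cup\mathbb{M}'$ for minimal components $\mathbb{M},\mathbb{M}'$ need not be minimal in $\mathcal{C}_\varphi\bigoplus \mathcal{C}_{\varphi'}$, so one really must control the descent). To do so I would invoke the compatibility between the chosen cuttings that is in force on $\mathcal{C}$ here, namely Equation~\ref{eq:cuttingDisjunction}, $\mathcal{C}_\varphi=\{\mathbb{M}\cap Mod(\varphi)\mid\mathbb{M}\in\mathcal{C}_{\varphi\vee\varphi'}\}$: it provides $\mathbb{N}_0\in\mathcal{C}_{\varphi\vee\varphi'}$ with $\mathbb{N}_0\cap Mod(\varphi)=\mathbb{M}$; descending inside the well-founded poset $(\mathcal{C}_{\varphi\vee\varphi'},\subseteq)$ to a $\subseteq$-minimal $\mathbb{N}\subseteq\mathbb{N}_0$, one has $\mathbb{N}\cap Mod(\varphi)\in\mathcal{C}_\varphi$ and $\mathbb{N}\cap Mod(\varphi)\subseteq\mathbb{N}_0\cap Mod(\varphi)=\mathbb{M}$, so minimality of $\mathbb{M}$ gives $\mathbb{N}\cap Mod(\varphi)=\mathbb{M}$; since $Mod(T\cup\{\psi\})\subseteq\mathbb{M}\subseteq Mod(T\cup\{\varphi\})\subseteq Mod(\varphi)$, this yields $Mod(T\cup\{\psi\})\subseteq\mathbb{M}=\mathbb{N}\cap Mod(\varphi)\subseteq\mathbb{N}$, and $\mathbb{N}$ is the witness sought. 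I expect this transfer of $\subseteq$-minimality across the restriction map $\mathbb{N}\mapsto\mathbb{N}\cap Mod(\varphi)$ to be the only genuinely non-routine point; everything else is bookkeeping with Definition~\ref{cutting} and the preceding lemma.
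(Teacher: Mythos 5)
Your proof is correct, and it is in fact more careful than the paper, which omits the argument as ``easy''. Both of the points you flag as non-routine are genuinely the crux. First, the naive argument---take the witness $\mathbb{M}\in Min(\mathcal{C}_\varphi)$ for $\varphi\rhd_{\mathcal{C}}\psi$, pick any $\mathbb{M}'\in Min(\mathcal{C}_{\varphi'})$, and offer $\mathbb{M}\cup\mathbb{M}'$---does break down, because $\mathbb{M}\cup\mathbb{M}'$ need not be minimal in $\mathcal{C}_\varphi\bigoplus\mathcal{C}_{\varphi'}$ when $Mod(T\cup\{\varphi\})$ and $Mod(T\cup\{\varphi'\})$ overlap. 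Second, your appeal to Equation~\ref{eq:cuttingDisjunction} is not a convenience but a necessity: with only the hypotheses literally stated in the corollary, the statement is false. For instance (realizable in {\bf PL}, so $Triv=\emptyset$, and with $T=\emptyset$), take $Mod(\varphi)=\{1,2,3\}$, $Mod(\varphi')=\{2,3\}$, $\mathcal{C}_\varphi=\{\{1,2,3\},\{1,2\},\{3\}\}$ and $\mathcal{C}_{\varphi'}=\{\{2,3\}\}$; then $\mathcal{C}_\varphi\bigoplus\mathcal{C}_{\varphi'}=\{\{1,2,3\},\{2,3\}\}$ has $\{2,3\}$ as its unique minimal element, so any $\psi$ with $Mod(\psi)=\{1\}$ satisfies $\varphi\rhd_{\mathcal{C}}\psi$ (via the minimal element $\{1,2\}$) but not $\varphi\vee\varphi'\rhd_{\mathcal{C}}\psi$. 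This family violates Equation~\ref{eq:cuttingDisjunction}, so your reading---that the constraints imposed before Theorem~\ref{logical properties 2} remain in force---is the one under which the corollary holds, and your transfer of minimality through the restriction map $\mathbb{N}\mapsto\mathbb{N}\cap Mod(\varphi)$ is exactly the missing step. Two minor observations: once Equation~\ref{eq:cuttingDisjunction} is invoked your argument never actually uses that $\mathcal{C}_{\varphi\vee\varphi'}$ equals $\mathcal{C}_\varphi\bigoplus\mathcal{C}_{\varphi'}$, so that hypothesis only serves to fix which cutting is assigned to $\varphi\vee\varphi'$; and you implicitly use that $\mathbb{N}\cap Mod(\varphi)$ belongs to $\mathcal{C}_\varphi$ (hence is not $Triv$) for every $\mathbb{N}\in\mathcal{C}_{\varphi\vee\varphi'}$, which is what Equation~\ref{eq:cuttingDisjunction} literally asserts but sits slightly at odds with the ``$\setminus Triv$'' in Lemma~\ref{cutting preservation}---an imprecision of the paper rather than of your proof.
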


This result is a stronger version of \EDR.

Concerning \ERC\ we have to note that property (8) is a stronger version of \ERC.

\section{Cutting based on retraction}
\label{sec:retraction}

In this section, we introduce some more constraints on cuttings, to be able to propose concrete examples in various logics in Section~\ref{applications}. The idea is to define particular cuttings from ``retractions'', that consist in transforming any KB $T$ into a new consistent one $T'$ such that $Mod(T') \subseteq Mod(T)$. This notion of retraction draws inspiration from Bloch \& al.'s works in~\cite{BL02,IB:arXiv-18,BPU04} on Morpho-Logics where some retractions have been defined based on erosions from mathematical morphology~\cite{BHR06} (see Section~\ref{applications}). Here, we propose to generalize this notion in the framework of satisfaction systems, and a retraction is defined on $Sen$ as follows.

\medskip
\begin{definition}[Retraction]
\label{retraction}
A {\bf retraction} is a mapping $\kappa : Sen \to Sen$ satisfying, for every $\varphi \in Sen$ such that $Mod(\varphi) \neq Triv$, the two following properties:
\begin{itemize}
\item {\bf Anti-extensivity:} $Mod(\kappa(\varphi)) \subseteq Mod(\varphi)$.
\item {\bf Vacuum:} $\exists k \in \mathbb{N}, Mod(\kappa^k(\varphi)) = Triv$ where $\kappa^0$ is the identity mapping, and for all $k > 0$, $\kappa^k(\varphi) = \kappa(\kappa^{k-1}(\varphi))$. 
\end{itemize}
\end{definition}
The condition for $\varphi$ not to be a tautology in Definition~\ref{retraction} allows us to eliminate the trivial case where $(\bigwedge T) \wedge \varphi$ is a tautology for a KB $T$ and an observation $\varphi$, and then in this case $\varphi$ would not deserve any explanation from $T$.
\medskip

\begin{example}
Many examples of retractions can be defined in {\bf PL}. Here, we propose to define retractions from the tableau expansion rules given in Example~\ref{abduction via semantic tableau}. In Section~\ref{explanatory relations in PL}, we will study another retraction for {\bf PL} but based on erosions from mathematical morphology. \\ Let us recall that the tableau expansion rules break propositional formulas on their main Boolean connectives, and only the $\beta$-rules require a choice. Hence, given a choice such as for instance choosing at each time the left element of the formula (i.e. the formula $\varphi_1$ in $\beta$-rules), we start by inductively defining a mapping $h : Sen \to Sen$ as follows:
\begin{enumerate}
\item $h(p) = p$ for every $p \in \Sigma$;
\item $h(\neg \neg \varphi) = h(\varphi)$;
\item $h(\neg \neg \bot) = \top$;
\item $h(\varphi_1 \wedge \varphi_2) = h(\varphi_1) \wedge h(\varphi_2)$;
\item $h(\neg(\varphi_1 \Rightarrow \varphi_2)) = h(\varphi_1) \wedge h(\neg \varphi_2)$;
\item $h(\neg(\varphi_1 \vee \varphi_2)) = h(\neg \varphi_1) \wedge h(\neg \varphi_2)$;
\item $h(\varphi_1 \vee \varphi_2) = h(\varphi_1)$;
\item $h(\varphi_1 \Rightarrow \varphi_2) = h(\neg \varphi_1)$;
\item $h(\neg (\varphi_1 \wedge \varphi_2)) = h(\neg \varphi_1)$.
\end{enumerate} 
We could have just as easily defined the mapping $h$ as follows: the first six cases are identical, and 
\begin{itemize}
\item $h(\varphi_1 \vee \varphi_2) = h(\varphi_2)$ or $h(\varphi_1 \vee \varphi_2) = h(\varphi_1) \wedge h(\varphi_2)$;
\item $h(\varphi_1 \Rightarrow \varphi_2) = h(\varphi_2)$ or $h(\varphi_1 \Rightarrow \varphi_2) = h(\varphi_1) \wedge h(\varphi_2)$;
\item $h(\neg (\varphi_1 \wedge \varphi_2)) = h(\neg \varphi_2)$ or $h(\neg (\varphi_1 \wedge \varphi_2)) = h(\neg \varphi_1) \wedge h(\neg \varphi_2)$.
\end{itemize}
Hence, each application of the mapping $h$ corresponds to a step of a path in the tableau. By structural induction on $\varphi$, it is easy to show that $Mod(h(\varphi)) \subseteq Mod(\varphi)$, hence $h$ verifies the anti-extensivity property. Moreover, it is also obvious to show that $h(h(\varphi)) = h(\varphi)$, and then except if $\varphi$ is an antilogy, we cannot have $Mod(h^k(\varphi)) = Triv$ for some $k$. This means that $h$ does not satisfy the vacuum property. Hence, $h$ is not a retraction. Now it is quite obvious to define a retraction from $h$. Indeed, given a mapping $h$ defined as previously, let us define the retraction $\kappa_h : Sen \to Sen$ as follows:  
$$\kappa_h(\varphi) = 
\left\{
\begin{array}{ll}
\bot & \mbox{if $h(\varphi) = \varphi$} \\
h(\varphi) & \mbox{otherwise}
\end{array}
\right.$$
It is not difficult to show that $\kappa_h$ is a retraction. 
\end{example}

Here, we introduce two cuttings based on retraction: $\mathcal{C}_{lcr}$ (last consistent retraction) and $\mathcal{C}_{lnr}$ (last non-trivial retraction). In $\mathcal{C}_{lcr}$ (respectively in $\mathcal{C}_{lnr}$), we define a unique sequence of ordered models (cf. Proposition~\ref{are total pre-orders}) which approximates the {\em most central part} of $T$ (respectively of $T \cup \{\varphi\}$).
\medskip
\begin{definition}[Cuttings based on retraction]
\label{cutting based on retraction}
Let $\kappa$ be a retraction. Let $T \subseteq Sen$ be a KB and $\varphi$ be a sentence such that $T \cup \{\varphi\}$ is consistent. Let us define the two subsets of $\mathcal{P}(Mod(T \cup \{\varphi\})$ as follows: 
\begin{equation}
\mathcal{C}_{lcr} = \{Mod(\kappa^k(\bigwedge T) \wedge \varphi) \mid k \in \mathbb{N},Mod(\kappa^k(\bigwedge T) \wedge \varphi) \neq Triv\}
\label{eq:lcr}
\end{equation}
\begin{equation}
\mathcal{C}_{lnr} = \{Mod(\kappa^k(\bigwedge T \wedge \varphi)) \mid k \in \mathbb{N},Mod(\kappa^k(\bigwedge T \wedge \varphi)) \neq Triv\}
\label{eq:lnr}
\end{equation}
where $\bigwedge T = \varphi_1 \wedge \ldots \wedge \varphi_n$ if $T = \{\varphi_1 \wedge \ldots \wedge \varphi_n\}$. 
\end{definition}

\medskip
\begin{proposition}
$\mathcal{C}_{lcr}$ and $\mathcal{C}_{lnr}$ as defined in Equations~\ref{eq:lcr} and~\ref{eq:lnr} are cuttings for $T$ and $\varphi$.
\end{proposition}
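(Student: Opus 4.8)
The plan is to check, for each of $\mathcal{C}_{lcr}$ and $\mathcal{C}_{lnr}$, the defining conditions of a cutting (Definition~\ref{cutting}): that it is a subset of $\mathcal{P}(Mod(T\cup\{\varphi\}))$, that every member strictly contains $Triv$, that it contains $Mod(T\cup\{\varphi\})$, that it is closed under set-theoretic union, and that it is well-founded under $\subseteq$. The guiding observation is that both sets consist of the model classes produced along the iteration of the retraction $\kappa$, so \textbf{Anti-extensivity} forces these classes to shrink while \textbf{Vacuum} forces the iteration to reach $Triv$ after finitely many steps; hence each of the two families is a \emph{finite decreasing chain} under $\subseteq$ with largest element $Mod(T\cup\{\varphi\})$, and every cutting axiom is then an immediate consequence of that shape (a finite chain is trivially closed under union and trivially well-founded).

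First I would treat $\mathcal{C}_{lcr}$. Since $\kappa^0$ is the identity, the member for $k=0$ is $Mod(\bigwedge T)\cap Mod(\varphi)=Mod(T\cup\{\varphi\})$, which strictly contains $Triv$ because $T\cup\{\varphi\}$ is consistent (using that $Triv\subseteq Mod(\psi)$ for every $\psi$, together with Proposition~\ref{proposition:consistency}); this already gives containment of $Mod(T\cup\{\varphi\})$ and, in passing, $Mod(\bigwedge T)\supsetneq Triv$. Applying Vacuum to $\bigwedge T$, let $n_0$ be the least index with $Mod(\kappa^{n_0}(\bigwedge T))=Triv$. For every $k<n_0$ the formula $\kappa^{k}(\bigwedge T)$ has non-trivial model class, so Anti-extensivity applies at each such step and yields $Mod(\bigwedge T)\supseteq Mod(\kappa(\bigwedge T))\supseteq\cdots\supseteq Mod(\kappa^{n_0-1}(\bigwedge T))\supsetneq Triv$. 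Intersecting this chain with $Mod(\varphi)$ shows that the non-trivial sets $Mod(\kappa^{k}(\bigwedge T)\wedge\varphi)$ collected into $\mathcal{C}_{lcr}$ are exactly those for $0\le k<n_0$, that they are linearly ordered by $\subseteq$, and that each is contained in $Mod(\bigwedge T)\cap Mod(\varphi)=Mod(T\cup\{\varphi\})$; closure under union, well-foundedness, and strict containment of $Triv$ then follow automatically.

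The argument for $\mathcal{C}_{lnr}$ is identical after replacing $\bigwedge T$ by $\bigwedge T\wedge\varphi$ throughout: $\kappa^0(\bigwedge T\wedge\varphi)$ gives $Mod(T\cup\{\varphi\})\supsetneq Triv$ (containment of $Mod(T\cup\{\varphi\})$ and non-triviality of the seed), Vacuum applied to $\bigwedge T\wedge\varphi$ produces a least collapse index $n_0$, and Anti-extensivity along $k<n_0$ gives the finite decreasing chain $Mod(T\cup\{\varphi\})=Mod(\bigwedge T\wedge\varphi)\supseteq\cdots\supseteq Mod(\kappa^{n_0-1}(\bigwedge T\wedge\varphi))\supsetneq Triv$, each term trivially a subset of $Mod(T\cup\{\varphi\})$; the remaining conditions again follow from finiteness and linearity.

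The one point that needs care is the bookkeeping around $Triv$: Anti-extensivity is only guaranteed at iterates whose model class is not yet $Triv$, so the decreasing-chain argument must be run only up to the \emph{first} index $n_0$ at which the iteration collapses, and one must observe that the index set $\{k\in\mathbb{N}\mid Mod(\cdots)\ne Triv\}$ defining each family is precisely the initial segment $\{0,\dots,n_0-1\}$ — i.e.\ that the iteration does not produce a fresh non-trivial model class once $Triv$ has been reached. Everything else is purely formal manipulation of a finite $\subseteq$-chain whose top is $Mod(T\cup\{\varphi\})$.
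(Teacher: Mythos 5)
Your proof is correct and follows essentially the same route as the paper's: both arguments reduce everything to the observation that each family is a single finite decreasing $\subseteq$-chain with top element $Mod(T\cup\{\varphi\})$ (at $k=0$), obtained from anti-extensivity and terminated by the vacuum property, after which closure under union, well-foundedness and the remaining cutting axioms are immediate. You are in fact somewhat more explicit than the paper, which does not spell out the bookkeeping around $Triv$ or the point that the defining index set is an initial segment of $\mathbb{N}$.
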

\begin{proof}
Let us observe that in $\mathcal{C}_{lcr}$ (respectively in $\mathcal{C}_{lnr}$) we have a unique maximal chain of finite size the least element of which is $Mod(\kappa^n(\bigwedge T) \wedge \varphi)$ (respectively $Mod(\kappa^n(\bigwedge T \wedge \varphi))$ where $n = \sup \{k \in \mathbb{N} \mid Mod(\kappa^n(\bigwedge T) \wedge \varphi) \neq Triv\}$ (respectively  $Mod(\kappa^n(\bigwedge T \wedge \varphi)) \neq Triv$) (by the vacuum property such a $n$ exists). Obviously, both sets are closed under set-theoretical inclusion and are well-founded.
\end{proof}

\begin{proposition}
\label{are total pre-orders}
Both $\preceq_{\mathcal{C}_{lcr}}$ and $\preceq_{\mathcal{C}_{lnr}}$ derived from the cuttings defined in Equations~\ref{eq:lcr} and~\ref{eq:lnr} as in Equation~\ref{eq:relationFromCutting} are total pre-orders.
\end{proposition}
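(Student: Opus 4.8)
The plan is to show that each of the two relations is (i) reflexive, (ii) total, and (iii) transitive; reflexivity is already noted after Equation~\ref{eq:relationFromCutting}, so the real work is totality and transitivity. The key observation, which I would establish first, is the structural fact that both $\mathcal{C}_{lcr}$ and $\mathcal{C}_{lnr}$ are \emph{chains} in $(\mathcal{P}(Mod(T\cup\{\varphi\})),\subseteq)$: by anti-extensivity of $\kappa$ we have $Mod(\kappa^{k+1}(\cdot))\subseteq Mod(\kappa^{k}(\cdot))$ for every $k$, so the sets $Mod(\kappa^k(\bigwedge T)\wedge\varphi)$ (respectively $Mod(\kappa^k(\bigwedge T\wedge\varphi))$) form a finite decreasing sequence (finite by the vacuum property, as used in the previous proposition). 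Hence the poset $(\mathcal{C}_{lcr},\subseteq)$ is itself a finite linear order, and likewise for $\mathcal{C}_{lnr}$; in particular each has a \emph{unique} maximal chain $C$, namely the whole set $\mathcal{C}$ itself.

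Once we know there is a unique maximal chain $C=\mathcal{C}$, the definition in Equation~\ref{eq:relationFromCutting} simplifies: $\mathcal{M}\preceq_{\mathcal{C}}\mathcal{M}'$ holds iff for every $\mathbb{M}\in C$, $\mathcal{M}'\in\mathbb{M}$ implies $\mathcal{M}\in\mathbb{M}$. Concretely, if we assign to each model $\mathcal{M}\in Mod(T\cup\{\varphi\})$ the ``rank'' $r(\mathcal{M})=\max\{k\in\mathbb{N}\mid \mathcal{M}\in Mod(\kappa^k(\bigwedge T)\wedge\varphi)\}$ (well-defined: the set of such $k$ is nonempty since $\mathcal{M}$ is in the top element, and finite by vacuum), then because $C$ is a chain ordered by inclusion, ``$\mathcal{M}'\in\mathbb{M}\Rightarrow\mathcal{M}\in\mathbb{M}$ for all $\mathbb{M}\in C$'' is exactly the statement $r(\mathcal{M})\geq r(\mathcal{M}')$. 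So $\preceq_{\mathcal{C}}$ is precisely the preimage under $r$ of the total order $\geq$ on $\mathbb{N}$, hence a total pre-order. Totality and transitivity then follow immediately from totality and transitivity of $\geq$ on $\mathbb{N}$, and reflexivity is clear. The argument for $\mathcal{C}_{lnr}$ is identical with $\kappa^k(\bigwedge T)\wedge\varphi$ replaced by $\kappa^k(\bigwedge T\wedge\varphi)$.

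I expect the only subtle point — the ``main obstacle'' — to be making the reduction from the chain-of-sets formulation to the rank function fully rigorous, in particular checking that quantifying over the unique maximal chain $C$ really does coincide with quantifying over all of $\mathcal{C}_{lcr}$ (one must note that since $(\mathcal{C}_{lcr},\subseteq)$ is already linearly ordered, it \emph{is} its own maximal chain, so there is no ambiguity and no second candidate for $C$). Everything else — anti-extensivity giving a decreasing sequence, vacuum giving finiteness, and the transfer of the order properties from $(\mathbb{N},\geq)$ — is routine. One should also remark explicitly why $\preceq$ fails to be a pre-order in general (non-uniqueness of maximal chains), to make clear that the present proposition genuinely uses the special structure of these two cuttings.
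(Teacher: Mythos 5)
Your proof is correct and follows the same route as the paper: the paper's own (one-line) argument is precisely that $\mathcal{C}_{lcr}$ and $\mathcal{C}_{lnr}$ each have a unique maximal chain, which you establish via anti-extensivity and the vacuum property and then exploit through the rank function. Your version merely fills in the details that the paper leaves implicit.
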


\begin{proof}
This is a direct consequence of the fact that both $\mathcal{C}_{lcr}$ and $\mathcal{C}_{lnr}$ have a unique maximal chain. 
\end{proof}

Following Definition~\ref{explanatory relation}, these two cuttings give rise to two explanatory relations defined as follows:
$$\varphi \rhd_{\mathcal{C}_{lcr}} \psi \Longleftrightarrow 
\left\{
\begin{array}{l}
Mod(T \cup \{\psi\}) \neq Triv,~\mbox{and} \\
Mod(T \cup \{\psi\}) \subseteq Mod(\kappa^n(T) \cup \{\varphi\})
\end{array}
\right.$$
$$\varphi \rhd_{\mathcal{C}_{lnr}} \psi \Longleftrightarrow 
\left\{
\begin{array}{l}
Mod(T \cup \{\psi\}) \neq Triv,~\mbox{and} \\
Mod(T \cup \{\psi\}) \subseteq Mod(\kappa^n(T \cup \{\varphi\}))
\end{array}
\right.$$
where $n = \sup\{k \in \mathbb{N} \mid Mod(\kappa^k(T) \cup \{\varphi\}) \neq Triv\}$ (respectively $n = \sup\{k \in \mathbb{N} \mid Mod(\kappa^k(T \cup \{\varphi\})) \neq Triv\}$.

\medskip
\begin{corollary}
Both explanation relations $\rhd_{\mathcal{C}_{lcr}}$ and $\rhd_{\mathcal{C}_{lnr}}$ satisfy all the logical properties of Theorems~\ref{logical properties 1} and~\ref{logical properties 2}.
\end{corollary}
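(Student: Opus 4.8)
The plan is to verify that the two cuttings $\mathcal{C}_{lcr}$ and $\mathcal{C}_{lnr}$ satisfy all the structural equations that were imposed on a set of cuttings $\mathcal{C}$ in Theorems~\ref{logical properties 1} and~\ref{logical properties 2}, namely Equations~\ref{eq:equivCutting} and~\ref{eq:cuttingInclusion}--\ref{eq:cuttingImplication}; once this is done, the corollary is immediate since those theorems then apply verbatim to $\rhd_{\mathcal{C}_{lcr}}$ and $\rhd_{\mathcal{C}_{lnr}}$. First I would observe that both cuttings are already known to be cuttings (by the preceding proposition), and that by Proposition~\ref{are total pre-orders} the associated relations $\preceq_{\mathcal{C}_{lcr}}$ and $\preceq_{\mathcal{C}_{lnr}}$ are total pre-orders, so in particular the totality hypothesis needed for Property~(12) is automatically met.

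Next I would check Equation~\ref{eq:equivCutting}: if $\varphi \equiv_T \varphi'$ then $Mod(T\cup\{\varphi\})=Mod(T\cup\{\varphi'\})$, hence $Mod(\bigwedge T \wedge \varphi)=Mod(\bigwedge T \wedge \varphi')$ and $Mod(\kappa^k(\bigwedge T)\wedge\varphi)=Mod(\kappa^k(\bigwedge T)\wedge\varphi')$ for every $k$ (for $\mathcal{C}_{lcr}$), and similarly $Mod(\kappa^k(\bigwedge T \wedge \varphi))=Mod(\kappa^k(\bigwedge T \wedge \varphi'))$ for $\mathcal{C}_{lnr}$ since $\kappa$ depends only on its argument sentence up to semantic equivalence — so the two defining sequences coincide term by term, giving $\mathcal{C}_\varphi=\mathcal{C}_{\varphi'}$. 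For Equations~\ref{eq:cuttingInclusion}--\ref{eq:cuttingImplication} the argument is the same in each case: the generic element of $\mathcal{C}_{lcr}$ for an observation $\chi$ is $Mod(\kappa^k(\bigwedge T)\wedge\chi) = Mod(\kappa^k(\bigwedge T)) \cap Mod(\chi)$, so intersecting the $\chi=\varphi$ sequence with $Mod(\varphi')$ yields exactly the $\chi=\varphi\wedge\varphi'$ sequence (this handles Equations~\ref{eq:cuttingInclusion} and~\ref{eq:cuttingConjunction}), and dually restricting the $\chi=\varphi\vee\varphi'$ sequence to $Mod(\varphi)$, or the $\chi=\varphi\Rightarrow\varphi'$ sequence to $Mod(\varphi')$, recovers the $\chi=\varphi$ sequence (Equations~\ref{eq:cuttingDisjunction} and~\ref{eq:cuttingImplication}), after discarding the $Triv$ members as prescribed. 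For $\mathcal{C}_{lnr}$ the generic element for $\chi$ is $Mod(\kappa^k(\bigwedge T \wedge \chi))$; here one uses that the defining recursion is driven entirely by the sentence $\bigwedge T \wedge \chi$, and that replacing $\chi$ by $\chi\wedge\varphi'$, $\chi\vee\varphi'$, $\chi\Rightarrow\varphi'$ changes the model class of $\bigwedge T \wedge \chi$ exactly by intersection with $Mod(\varphi')$ in the way the equations demand, so the same bookkeeping applies.

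I expect the main obstacle to be Equation~\ref{eq:cuttingConjunction} (and its companions for disjunction and implication) in the $\mathcal{C}_{lnr}$ case: one must be careful that $\kappa$ acts on the \emph{sentence} $\bigwedge T \wedge \varphi$ rather than on its model class, so strictly the identity $Mod(\kappa^k(\bigwedge T \wedge (\varphi\wedge\varphi'))) = Mod(\kappa^k(\bigwedge T \wedge \varphi)) \cap Mod(\varphi')$ does not hold for an arbitrary retraction (the retraction could behave differently on syntactically different but semantically nested sentences). The clean way around this is to note that we are free to \emph{choose} the family of cuttings $\mathcal{C}$ — Equations~\ref{eq:cuttingInclusion}--\ref{eq:cuttingImplication} are imposed constraints on that choice, not theorems — so for $\mathcal{C}_{lnr}$ we simply declare $\mathcal{C}_{\varphi\wedge\varphi'}$, $\mathcal{C}_{\varphi}$ (from $\mathcal{C}_{\varphi\vee\varphi'}$), etc.\ to be the intersected families as the equations require, and then verify, using Lemmas~\ref{cutting preservation} and~\ref{cutting preservation two}, that these intersected families are still cuttings and that they still have a unique maximal chain (so Proposition~\ref{are total pre-orders} survives). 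With that consistency check in place, Theorems~\ref{logical properties 1} and~\ref{logical properties 2} apply, and the corollary follows.
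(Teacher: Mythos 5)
Your proposal is correct in outline but follows a genuinely different, and considerably more demanding, route than the paper. The paper's own proof is two lines: it asserts that everything follows from the fact that $\mathcal{C}_{lcr}$ and $\mathcal{C}_{lnr}$ each have a unique maximal chain, observes that only Property~(12) of Theorem~\ref{logical properties 2} needs the extra hypothesis that $\preceq_{\mathcal{C}}$ be total, and discharges that via Proposition~\ref{are total pre-orders}; it never explicitly checks Equation~\ref{eq:equivCutting} or Equations~\ref{eq:cuttingInclusion}--\ref{eq:cuttingImplication}. You instead verify those structural constraints case by case, which is the more faithful reading of what Theorems~\ref{logical properties 1} and~\ref{logical properties 2} actually presuppose, and your verification buys something real: it exposes that for $\mathcal{C}_{lnr}$ the identity $Mod(\kappa^k(\bigwedge T\wedge(\varphi\wedge\varphi')))=Mod(\kappa^k(\bigwedge T\wedge\varphi))\cap Mod(\varphi')$ can genuinely fail for an arbitrary retraction (for instance the erosion-based retraction of Section~\ref{explanatory relations in PL}: eroding $\varphi\wedge\varphi'$ is not the same as eroding $\varphi$ and then intersecting with $Mod(\varphi')$), a point the paper's proof silently passes over.

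Two caveats on your own argument. First, your repair --- simply \emph{declaring} the cutting attached to a compound formula to be the intersected family, as permitted by Lemmas~\ref{cutting preservation} and~\ref{cutting preservation two} --- is legitimate as a choice of the family $\mathcal{C}$, but the resulting relation then no longer coincides on compound formulas with $\rhd_{\mathcal{C}_{lnr}}$ as displayed after Definition~\ref{cutting based on retraction}, which assigns to \emph{every} formula its own retraction sequence; so what you establish is that a suitably re-chosen family satisfies the theorems, not literally that the displayed relation does. Second, for Equation~\ref{eq:equivCutting} in the $lnr$ case you assume that $\kappa$ respects semantic equivalence of sentences; Definition~\ref{retraction} only imposes anti-extensivity and vacuum, so this is an additional hypothesis rather than a consequence. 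Both caveats apply at least as strongly to the paper's own, much terser, proof.
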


\begin{proof}
Again this is derived from the fact that both $\mathcal{C}_{lcr}$ and $\mathcal{C}_{lnr}$ have a unique maximal chain. Only the property $(12)$ of Theorem~\ref{logical properties 2} requires for $\preceq_\mathcal{C}$ to be total. This has been proved for $\preceq_{\mathcal{C}_{lcr}}$ and $\preceq_{\mathcal{C}_{lnr}}$ in Proposition~\ref{are total pre-orders}. 
\end{proof}


An example showing that these two definitions may provide different explanations of the same $\varphi$ is given in the case of {\bf PL} in Section~\ref{explanatory relations in PL}.
\section{Applications}
\label{applications}

In this section, we illustrate our general approach by defining abduction based on retractions for the logics {\bf PL}, {\bf HCL}, {\bf FOL}, {\bf MPL} and the {\bf DL} $\mathcal{ALC}$.

\subsection{Explanatory relations based on retraction in {\bf PL}}
\label{explanatory relations in PL}

Here, drawing inspiration from Bloch \& al.'s works in~\cite{BL02,IB:arXiv-18,BPU04} on Morpho-Logics, we define retractions based on erosions from mathematical morphology~\cite{BHR06}. To define retractions in {\bf PL}, we will apply set-theoretic morphological operations. First, let us recall basic definitions of erosion in mathematical morphology~\cite{BHR06}. In complete lattices, an algebraic erosion is an operator that commutes with the infimum of the lattices. Concrete definitions of erosions often involve the notion of structuring element. Let us first consider the case of a lattice defined as the power set of some set (e.g. $\mathbb{R}^n$), with the inclusion relation. Let $X$ and $B$ be two subsets of $\mathbb{R}^n$. The erosion of $X$ by the structuring element $B$, denoted by $E_B(X)$, is defined as follows:
$$E_B(X) = \{x \in \mathbb{R}^n \mid B_x \subseteq X\}$$
where $B_x$ denotes the translation of $B$ at $x$. 
More generally, erosions in any space can be defined in a similar way by considering the structuring element as a binary relationship between elements of this space.

In {\bf PL}, knowing that we can identify any propositional formula $\varphi$ with its set of interpretations $Mod(\varphi)$, this leads to the following erosion of a formula $\varphi$:
$$Mod(E_B(\varphi)) = \{\nu \in  Mod \mid B_\nu \subseteq Mod(\varphi)\}$$
where $B_\nu$ contains all the models that satisfy some relationship with $\nu$. The relationship standardly used is based on a discrete distance $\delta$ between models, and the most commonly used is the Hamming distance $d_H$ where $d_H(\nu,\nu')$ for two propositional models over a same signature $\Sigma$ is the number of propositional symbols that are instantiated differently in $\nu$ and $\nu'$. In this case, we can rewrite the erosion of a formula as follows:
$$Mod(E_B(\varphi)) = \{\nu \in  Mod \mid \forall \nu' \in Mod, \delta(\nu,\nu') \leq 1 \Rightarrow \nu' \in Mod(\varphi)\}$$
This consists in using the distance ball of radius 1 as structuring element. To ensure the non-consistency condition to our retraction based on erosion, we need to add a condition on distances, the {\em differentiation property}

\medskip
\begin{definition}[Differentiation property]
Let $\delta$ be a discrete distance over a set $S$. Let us note $\Gamma_x$ for every $x \in S$, the set $\Gamma_x = \{y  \in S \mid \delta(x,y) \leq 1\}$. The distance $\delta$ has the {\bf differentiation property} if  for every $x,y \in S$, $\Gamma_x \neq \Gamma_y$. 
\end{definition}
The Hamming distance trivially satisfies the differentiation property.

\medskip
\begin{proposition}
\label{retraction for PL}
$E_B$ is a retraction for finite signatures $\Sigma$, and when it is based on a distance between models that satisfies the differentiation property.
\end{proposition}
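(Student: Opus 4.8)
The plan is to verify that $E_B$ satisfies the two defining conditions of a retraction from Definition~\ref{retraction}, namely anti-extensivity and vacuum, under the stated hypotheses: a finite signature $\Sigma$ and a discrete distance $\delta$ on models enjoying the differentiation property. Throughout we identify a formula $\varphi$ with $Mod(\varphi) \subseteq Mod$, which is legitimate in \textbf{PL} since the signature is finite (so every subset of $Mod$ is of the form $Mod(\varphi)$ for some $\varphi$) and since $Triv = \emptyset$ in \textbf{PL}; thus $Mod(\varphi) \neq Triv$ just means $Mod(\varphi) \neq \emptyset$, and the vacuum condition becomes: some iterate $E_B^k(\varphi)$ has an empty model class.

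First I would handle anti-extensivity. Given $\nu \in Mod(E_B(\varphi))$, by definition $B_\nu \subseteq Mod(\varphi)$, where $B_\nu = \Gamma_\nu = \{\nu' \mid \delta(\nu,\nu') \leq 1\}$. Since $\delta$ is a distance, $\delta(\nu,\nu) = 0 \leq 1$, so $\nu \in B_\nu \subseteq Mod(\varphi)$; hence $Mod(E_B(\varphi)) \subseteq Mod(\varphi)$. This is immediate and needs only reflexivity of $\delta$.

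The substantive part is the vacuum property, and this is where I expect the main obstacle to lie. The naive hope — that $|Mod(E_B^{k}(\varphi))|$ strictly decreases until it hits $\emptyset$ — is false in general: erosion can leave a nonempty set fixed (e.g.\ the whole space $Mod$ is fixed by $E_B$ when $B_\nu \subseteq Mod$ always, which holds). But we only need vacuum for $\varphi$ with $Mod(\varphi) \neq Triv$, i.e.\ $Mod(\varphi) \subsetneq Mod$ is not required a priori — yet if $Mod(\varphi) = Mod$ then $E_B(\varphi) = \varphi$ and vacuum fails, so the differentiation property must be what rules this out. Indeed, I would argue that the differentiation property forces $E_B$ to strictly shrink any nonempty proper-or-improper model class: suppose $\mathbb{M} = Mod(\varphi) \neq \emptyset$ and $E_B(\mathbb{M}) = \mathbb{M}$; pick $\nu \in \mathbb{M}$; then for every $\nu'$ with $\delta(\nu,\nu') \leq 1$ we get $\nu' \in \mathbb{M}$, so $\Gamma_\nu \subseteq \mathbb{M}$, and iterating, $\mathbb{M}$ is closed under the "$\Gamma$-neighbourhood" operation; with a finite connected-enough structure one concludes $\mathbb{M} = Mod$. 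To get a contradiction with differentiation when $\mathbb{M} = Mod$: pick any two distinct models $x \neq y$ (which exist as soon as $|\Sigma| \geq 1$; the case $\Sigma = \emptyset$ is trivial separately) — hmm, this alone does not contradict differentiation. So the correct route is different: one shows that as long as $\mathbb{M} \neq \emptyset$, the erosion $E_B(\mathbb{M}) \subsetneq \mathbb{M}$ unless... Actually the cleanest approach is to exhibit, for any nonempty $\mathbb{M} \subsetneq Mod$, a model in $\mathbb{M}$ that is dropped by $E_B$ — a "boundary" model $\nu \in \mathbb{M}$ with some $\nu' \notin \mathbb{M}$ at distance $\leq 1$ — which exists by connectivity of the Hamming-type graph on the finite cube $\{0,1\}^\Sigma$; and to handle the top element $\mathbb{M} = Mod$ one uses exactly the differentiation property, since if $E_B(Mod) = Mod$ were the only obstruction, differentiation guarantees the neighbourhoods $\Gamma_x$ separate points, which (on a finite space) prevents $E_B$ from being the identity on $Mod$ — concretely, $E_B(Mod) = \{\nu \mid \Gamma_\nu \subseteq Mod\} = Mod$ always, so in fact $Mod$ \emph{is} fixed, meaning vacuum can only hold because Definition~\ref{retraction} excludes tautologies: the hypothesis "$Mod(\varphi) \neq Triv$" must be read together with the surrounding text, which explicitly says the tautology case ($\bigwedge T \wedge \varphi$ a tautology) is excluded. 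So I would restrict attention to $Mod(\varphi) \subsetneq Mod$, prove by the boundary/connectivity argument that $|Mod(E_B^k(\varphi))|$ strictly decreases while nonempty, hence reaches $\emptyset = Triv$ in at most $|Mod| = 2^{|\Sigma|}$ steps, and invoke differentiation precisely at the step that rules out a nonempty fixed point strictly below $Mod$.

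Concretely, the key lemma to prove is: \emph{if $\emptyset \neq \mathbb{M} \subsetneq Mod$ and $\delta$ has the differentiation property, then $E_B(\mathbb{M}) \subsetneq \mathbb{M}$}. I would prove its contrapositive — $E_B(\mathbb{M}) = \mathbb{M}$ with $\mathbb{M} \neq \emptyset$ implies $\mathbb{M} = Mod$ — by showing $\mathbb{M}$ is then closed under taking $\Gamma$-neighbours and using finiteness of $Mod$ plus the differentiation property to force $\Gamma$-connectivity of the whole space, so the only such $\mathbb{M}$ is $Mod$ itself. Granting the lemma, the vacuum property follows: starting from $Mod(\varphi) \subsetneq Mod$, the strictly decreasing chain $Mod(\varphi) \supsetneq Mod(E_B(\varphi)) \supsetneq Mod(E_B^2(\varphi)) \supsetneq \cdots$ in the finite set $\mathcal{P}(Mod)$ must reach $\emptyset = Triv$ after finitely many steps. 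Combined with anti-extensivity, this shows $E_B$ is a retraction. The main obstacle, as indicated, is pinning down exactly how the differentiation property enters — it is needed to exclude nonempty fixed points of $E_B$ that are proper subsets of $Mod$, and the argument must use the finiteness of $\Sigma$ in tandem with it; for the Hamming distance this is transparent because $\Gamma_\nu$ determines $\nu$ (flipping one bit and comparing), giving a clean direct proof, but for an abstract $\delta$ with the differentiation property one has to work slightly harder to extract connectivity.
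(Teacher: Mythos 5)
Your proof follows the same route as the paper's: anti-extensivity from reflexivity of $\delta$ (so $\nu \in B_\nu$), and vacuum from the claim that each application of $E_B$ strictly shrinks any nonempty model class other than $Mod$, together with finiteness of $Mod$ for finite $\Sigma$. Your reading of the hypothesis also matches the paper's proof, which opens with ``Let $\varphi$ be such that $Mod(\varphi) \neq Mod(\Sigma)$'', i.e.\ tautologies are excluded exactly as the remark after Definition~\ref{retraction} announces. The one step you flag as delicate --- extracting from the differentiation property the existence of a boundary model $\nu \in Mod(\varphi)$ with some $\nu' \notin Mod(\varphi)$ at distance $\leq 1$ --- is precisely the step the paper asserts in one sentence without further argument, and your hesitation is warranted: the differentiation property ($\Gamma_x \neq \Gamma_y$ for distinct $x,y$) does \emph{not} by itself force the unit-ball graph on $Mod$ to be connected. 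For instance, a metric partitioning a finite model set into two clusters at mutual distance greater than $1$, each cluster carrying a path metric, has all neighbourhoods $\Gamma_x$ pairwise distinct, yet each cluster is a nonempty proper fixed point of $E_B$, so vacuum fails. Thus neither your sketch nor the paper's proof closes this step for an arbitrary distance with the differentiation property; what both implicitly use is connectivity of the $\leq 1$-neighbourhood graph, which holds for the Hamming distance (and for the variants $B^{ab}_\omega$ used later in the paper). Modulo that shared caveat, your argument is the paper's argument.
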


\begin{proof}
It is anti-extensive since any erosion defined from a reflexive relationship is anti-extensive. Since $\delta$ is a distance, $\delta(\nu, \nu) = 0$ for any $\nu$ and $\nu \in B_\nu$, and thus for every $\varphi$ and for every model $\nu \in Mod(E_B(\varphi))$, we have that 
$\nu \in Mod(\varphi)$. \\ 
Let $\varphi$ be a propositional formula such that $Mod(\varphi) \neq Mod(\Sigma)$.
As $\delta$ satisfies the differentiation property, there necessarily exists a model $\nu \in Mod(\varphi)$ and a model $\nu' \in Mod(\Sigma) \setminus Mod(\varphi)$ such that $\delta(\nu,\nu') \leq 1$ and $\nu' \not\in Mod(\varphi)$. Hence, each application of $E_B$ removes at least one model. As $\Sigma$ is a finite signature, $Mod(\varphi)$ is finite, and then there is $k \in \mathbb{N}$ such that $Mod(E_B^k(\varphi)) = \emptyset$.~\footnote{As the negation is considered in {\bf PL}, the set $Triv$ is empty, and then the consistency of a formula $\varphi$ can be defined by the fact that $Mod(\varphi) = \emptyset$.}   
\end{proof}

Let us first illustrate the instantiation of the two proposed definitions of explanation, when the retraction is an erosion using a ball of the Hamming distance as structuring element. Let us consider three propositional variables $a, b, c$, a KB $T = \{ a \vee b \vee c \}$, and the observation to be explained $\varphi = (a \wedge \neg b \wedge c) \vee (a \wedge b \wedge \neg c) \vee (a \wedge \neg b \wedge \neg c)$. Models can be graphically represented as the vertices of a cube, as shown in Figure~\ref{fig:consistent_erosion} (for instance the bottom left vertex is $\neg a \wedge \neg b \wedge \neg c$ while the top right one is $a \wedge b \wedge c$).

\begin{figure}[htbp]
\centerline{\hbox{
\includegraphics[height=3.5cm]{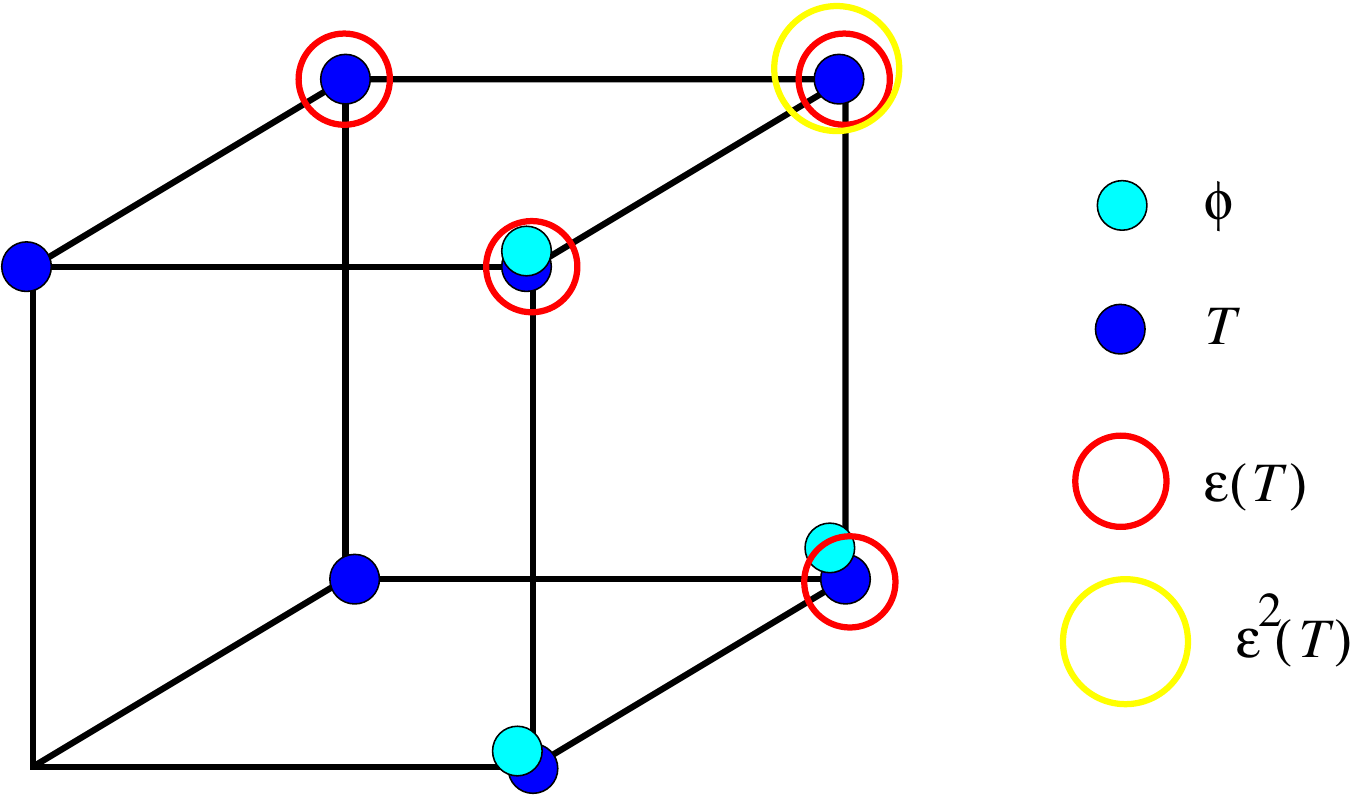}
}}
\caption{An example of last consistent erosion.}
\label{fig:consistent_erosion}
\end{figure}
It is easy to show that $\varepsilon(T)$ is consistent with $\varphi$, but $\varepsilon^2(T)$ is not~\cite{IB:arXiv-18}. Hence, for $\rhd_{\mathcal{C}_{lcr}}$ explanations $\psi$ are such that $Mod(\psi) \subseteq \{ (a \wedge \neg b \wedge c) \vee (a \wedge b \wedge \neg c)\}$. Similarly, it is easy to see that $\varepsilon(T \wedge \varphi) = \bot$, hence the explanations $\psi$ for $\rhd_{\mathcal{C}_{lnr}}$ are such that $Mod(\psi) \subseteq Mod(T \wedge \varphi)$. In particular $\psi = (a \wedge \neg b \wedge \neg c)$ is a potential explanation of $\varphi$ for $\rhd_{\mathcal{C}_{lnr}}$ but not for $\rhd_{\mathcal{C}_{lcr}}$. This is an example where the two proposed explanatory relations introduced in Definition~\ref{cutting based on retraction} provide different results.

Let us now show that the choice of the structuring element used in the erosions can impact the obtained explanations. This example is adapted from~\cite{IB:arXiv-18}. Let us consider the explanatory relation $\rhd_{\mathcal{C}_{lcr}}$, the KB $T = \{ a \Rightarrow c, b \Rightarrow c, a \vee b\}$, and the observation $c$. 
\begin{enumerate}
\item With the standard ball $B_\omega = \{ \omega' \in Mod \mid d_H(\omega, \omega') \leq 1 \}$, where $d_H$ denotes the Hamming distance, we get $\varepsilon^1(T) =\bot$. Thus, we have in particular,
\[
c \rhd_{\mathcal{C}_{lcr}} (a\vee b).
\]
\item Now we use $B_\omega^{ab} = \{ \omega' \in B_\omega \mid \omega(x) = \omega'(x) \mbox{ for all } x \notin \{ a, b \} \}$, 
i.e. $B_\omega^{ab}$ contains the valuations in $B_\omega$ that agree with $\omega$ outside $\{ a, b\}$. 
Then $\varepsilon^1 (T)=a\wedge b \wedge c$ and $\varepsilon^2 (T)=\bot$. Thus
\[
c \rhd_{\mathcal{C}_{lcr}} (a\wedge b).
\]
Notice that  $c\not\!\rhd_{\mathcal{C}_{lcr}} (a\vee b)$.
\item Finally, let us consider the
following structuring element
\[
B^{ab}_{\omega,2}=\{\omega\}\cup\{\omega'\in Mod \mid d_H(\omega,\omega')=2 \mbox{ and }
\omega(x)=\omega'(x)\;\mbox{for all $x\not\in \{ a, b \}$}\}
\]
Then $\varepsilon^1(T)=\varepsilon^2(T) = (\neg a\wedge b\wedge c) \vee (a\wedge \neg b\wedge c)$.
Thus,
\[
c \rhd_{\mathcal{C}_{lcr}} (a\wedge\neg b)\vee(\neg a\wedge  b).
\]
Notice that  $c\not\!\rhd_{\mathcal{C}_{lcr}}(a\wedge b)$.
\end{enumerate}
These different results can be interesting in situations where different explanations may be expected. This is illustrated by the following examples, that are further discussed in~\cite{IB:arXiv-18}:
\begin{enumerate}
\item
\[
\begin{array}{lcl}
a & = & \mbox{\em rained\_last\_night} \\
b & = & \mbox{\em sprinkle\_was\_on}\\
c & = & \mbox{\em grass\_is\_wet}
\end{array}
\]
The ``common sense cautious explanation'' of $c$ is $a\vee b$.
\item
\[
\begin{array}{lcl}
a & = & \mbox{\em low\_taxes} \\
b & = & \mbox{\em investment\_increases}\\
c & = & \mbox{\em economy\_grows}
\end{array}
\]
An explanation that enhances the chances of achieving the goal of making the economy to grow is $a\wedge b$.
\item
\[
\begin{array}{lcl}
a & = & \mbox{\em book\_was\_left\_somewhere else} \\
b & = & \mbox{\em somebody\_took\_the book}\\
c & = & \mbox{\em book\_is\_not\_in\_the shelf}
\end{array}
\]
An explanation  based on the principle of the ``Ockham's razor'' will select either $a$ or $b$ but not both, that is to say, $(a\wedge\neg b)\vee(\neg a\wedge b)$.
\end{enumerate}

\subsection{Explanatory relations based on retraction in {\bf HCL}}

Following our work in~\cite{AABH18} on the definition of revision operators based on relaxations for {\bf HCL}, we propose here to extend retractions that we have defined in the framework {\bf PL} to deal with the Horn fragment of propositional formulas. First, let us recall some useful notions.

\medskip
\begin{definition}[Model intersection]
Given a propositional signature $\Sigma$ and two $\Sigma$-models $\nu,\nu': \Sigma \to \{0,1\}$, we note $\nu \cap \nu' : \Sigma \to \{0,1\}$ the $\Sigma$-model defined by: 
$$p \mapsto 
\left\{ 
\begin{array}{ll}
1 & \mbox{if $\nu(p) = \nu'(p) = 1$} \\
0 & \mbox{otherwise}
\end{array}
\right.$$
Given a set of $\Sigma$-models $\mathcal{S}$, we note 
$$cl_\cap(\mathcal{S}) = \mathcal{S} \cup \{\nu \cap \nu' \mid \nu,\nu' \in \mathcal{S}\}$$
which is the closure of $\mathcal{S}$ under intersection of positive atoms. 
\end{definition}

For any set $\mathcal{S}$ closed under intersection of positive atoms, there exists a Horn sentence $\varphi$ that defines $\mathcal{S}$ (i.e. $Mod(\varphi) = \mathcal{S}$). Given a distance $\delta$ between models, we then define a retraction $\kappa$ as follows: for every Horn formula $\varphi$, $\kappa(\varphi)$ is any Horn formula $\varphi'$ such that $Mod(\varphi') = cl_\cap(Mod(E_B(\varphi))$ (by the previous property, we know that such a formula $\varphi'$ exists). 

\medskip
\begin{proposition}
With the same conditions as in Proposition~\ref{retraction for PL}, the mapping $\rho$ is a retraction. 
\end{proposition}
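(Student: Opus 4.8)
The proposition as typeset refers to ``the mapping $\rho$'', but the construction in the preceding paragraph defines a retraction $\kappa$ on Horn formulas via $Mod(\kappa(\varphi)) = cl_\cap(Mod(E_B(\varphi)))$. I will treat $\rho = \kappa$ and prove that this $\kappa$ is a retraction in the sense of Definition~\ref{retraction}, i.e. that it is anti-extensive and satisfies the vacuum property, under the hypotheses of Proposition~\ref{retraction for PL} (finite signature $\Sigma$, distance $\delta$ with the differentiation property).

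\textbf{Plan of proof.} The plan is to verify the two defining conditions of a retraction separately, reusing Proposition~\ref{retraction for PL} as much as possible. For \textbf{anti-extensivity}, I would argue: given a Horn formula $\varphi$ with $Mod(\varphi) \neq Triv$, we have $Mod(E_B(\varphi)) \subseteq Mod(\varphi)$ by anti-extensivity of $E_B$ (Proposition~\ref{retraction for PL}); since $Mod(\varphi)$ is the model class of a Horn formula it is closed under the intersection $\cap$ of positive atoms, hence $cl_\cap(Mod(E_B(\varphi))) \subseteq cl_\cap(Mod(\varphi)) = Mod(\varphi)$, which is exactly $Mod(\kappa(\varphi)) \subseteq Mod(\varphi)$. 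The only subtlety to state clearly is that $Mod(\varphi)$ for a conjunction of Horn clauses is indeed $\cap$-closed — this is the standard model-intersection property of Horn logic, which is implicitly invoked in the paragraph before the proposition (``for any set $\mathcal{S}$ closed under intersection of positive atoms, there exists a Horn sentence $\varphi$ that defines $\mathcal{S}$'').

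For the \textbf{vacuum property}, I would proceed by tracking the cardinality of $Mod(\kappa^k(\varphi))$. First note that the condition $Mod(\varphi) \neq Triv$ in {\bf HCL} means $Mod(\varphi)$ is not the singleton consisting of the all-$1$ model (recall $Triv$ in {\bf HCL} is that unique model). Then, as long as $Mod(\kappa^{k}(\varphi)) \neq Triv$, Proposition~\ref{retraction for PL} guarantees that applying $E_B$ strictly removes at least one model, i.e. $Mod(E_B(\kappa^k(\varphi))) \subsetneq Mod(\kappa^k(\varphi))$, so the cardinality strictly decreases at each step before the $cl_\cap$-closure is applied. The point requiring care is that the closure $cl_\cap$ could, in principle, add models back. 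I would argue that it cannot undo the strict decrease in the relevant sense: more precisely, since $Triv$ (the all-$1$ model) is the top element for $\cap$, once $E_B$ removes a model below the all-$1$ model, closing under $\cap$ keeps us inside $Mod(\varphi)$ (by anti-extensivity just proved) and cannot reintroduce the all-$1$ model unless it was already present; and because $\Sigma$ is finite, $Mod(\bigwedge T \wedge \varphi)$ is a finite set, so the process $\varphi, E_B(\varphi), \kappa(\varphi) = $ (a formula with model class $cl_\cap(Mod(E_B(\varphi)))$), $\dots$ cannot strictly decrease forever and must reach $Triv$ (equivalently, the minimal $\cap$-closed set, which is $\{$all-$1\}$ or $\emptyset$) in finitely many steps.

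\textbf{Main obstacle.} The genuinely delicate step is the vacuum property, specifically controlling the interaction between the erosion $E_B$ (which shrinks the model set, by Proposition~\ref{retraction for PL}) and the intersection-closure $cl_\cap$ (which can enlarge it). One must rule out an oscillation where $E_B$ removes a model but $cl_\cap$ puts it (or enough other models) back, stalling the descent. The cleanest fix is to observe that $cl_\cap(Mod(E_B(\varphi))) \subseteq Mod(\varphi)$ (from anti-extensivity, since $Mod(\varphi)$ is $\cap$-closed), so each full application of $\kappa$ lands inside the previous model class; combined with the fact that $E_B$ strictly drops a model whenever the current class is not $Triv$ and $\Sigma$ is finite, an easy well-foundedness / descending-cardinality argument on the nested finite sets $Mod(\varphi) \supseteq Mod(\kappa(\varphi)) \supseteq \cdots$ forces some $\kappa^k(\varphi)$ to have model class $Triv$. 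Everything else is routine, and the proof can legitimately end with ``similar to the proof of Proposition~\ref{retraction for PL}'' once this closure/erosion interaction has been addressed.
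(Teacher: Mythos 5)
The paper states this proposition without any proof, so your argument has to stand on its own. The anti-extensivity half does: $cl_\cap$ is monotone, $Mod(\varphi)$ is $\cap$-closed because $\varphi$ is a conjunction of Horn clauses, hence $Mod(\kappa(\varphi)) = cl_\cap(Mod(E_B(\varphi))) \subseteq cl_\cap(Mod(\varphi)) = Mod(\varphi)$. That part is correct and complete.

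The vacuum property is where there is a genuine gap: you have correctly located the obstacle but not removed it. Your concluding step is that the nested finite sets $Mod(\varphi) \supseteq Mod(\kappa(\varphi)) \supseteq \cdots$ "cannot strictly decrease forever and must reach $Triv$". But a weakly decreasing sequence of finite sets merely stabilizes; it reaches $Triv$ only if the inclusion is \emph{strict} at every step before $Triv$ is attained, and strictness is exactly what $cl_\cap$ endangers. Concretely, writing $S = Mod(\psi)$, if $\nu \in S \setminus Mod(E_B(\psi))$ happens to equal $\nu_1 \cap \nu_2$ for two surviving models $\nu_1,\nu_2 \in Mod(E_B(\psi))$, the closure restores $\nu$; if every eroded model is restored in this way, then $Mod(\kappa(\psi)) = S$ and the descent stalls strictly above $Triv$ forever. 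Proposition~\ref{retraction for PL} only guarantees that $E_B$ alone deletes a model; it says nothing about the composite $cl_\cap \circ E_B$. What is missing is a lemma of the form: if $S$ is $\cap$-closed and $Triv \subsetneq S \subsetneq Mod$, then $cl_\cap(Mod(E_B(\psi))) \subsetneq S$. One viable route is to observe that a $\leq$-maximal element of $S$ (pointwise order on valuations) can never be recreated as $\nu_1 \cap \nu_2$ with $\nu_1,\nu_2 \in S$ both distinct from it, so it suffices to show that the erosion eventually deletes a maximal element --- but that still needs an argument, since a single application of $E_B$ may remove only non-maximal models. A second, related loose end: $E_B$ need not preserve the all-$1$ valuation, so $cl_\cap(Mod(E_B(\varphi)))$ may fail to contain it; in that case it is not the model class of any conjunction of clauses $\Gamma \Rightarrow \alpha$ (all of which are satisfied by the all-$1$ model), so $\kappa(\varphi)$ is not even well defined, and the natural limit of the descent is $\emptyset$ rather than $Triv$. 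Your sketch leaves the terminal value as ``$\{$all-$1\}$ or $\emptyset$'', but only the former satisfies the vacuum property as stated, so this must be resolved, not left open.
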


Again both explanatory relations $\rhd_{lcr}$ and $\rhd_{lnr}$ can be defined from $\kappa$ using Definition~\ref{cutting based on retraction}. 

\subsection{Explanatory relations based on retraction in {\bf FOL}}

A trivial way to define a retraction in {\bf FOL} is to map any formula to an antilogy. A less trivial and more interesting retraction consists in replacing existential quantifiers involved in the formula to be retracted by universal ones.  A dual approach has been adopted in~\cite{AABH18} for defining revision operators using dilations in {\bf FOL}. In the following we suppose that, given a signature, every formula $\varphi$ in $Sen$ is a conjunction of formulas in prenex form (i.e. $\varphi$ is of the form $\bigvee_{j}Q^j_1x^j_1 \ldots Q^j_{n_j} x^j_{n_j}. \psi_j$ where each $Q^j_i$ is in $\{\forall,\exists\}$). Let us define the retraction $\kappa$ as follows, for an antilogy $\tau$:
\begin{itemize}
\item $\kappa(\tau) = \tau$;
\item $\kappa(\forall x_1 \ldots \forall x_n. \varphi) = \tau$;
\item Let $\varphi = Q_1x_1 \ldots Q_n x_n. \psi$ be a formula such that the set $E_\varphi = \{i, 1 \leq i \leq n \mid Q_i = \exists\} \neq \emptyset$. Then, $\kappa(Q_1x_1 \ldots Q_n x_n. \varphi) = \bigvee_{i \in E_\varphi} \varphi_i$ where $\varphi_i = Q'_1x_1 \ldots Q'_n x_n.\psi$ such that for every $j \neq i$, $1 \leq j \leq n$, $Q'_j = Q_j$ and $Q'_i = \forall$;
\item $\kappa(\bigwedge_{j}Q^j_1x^j_1 \ldots Q^j_{n_j} x^j_{n_j}. \psi) = \bigwedge_{j}\kappa(Q^j_1x^j_1 \ldots Q^j_{n_j} x^j_{n_j}. \psi)$.
\end{itemize}


\medskip
\begin{proposition}
\label{retraction pour FOL}
$\kappa$ is a retraction. 
\end{proposition}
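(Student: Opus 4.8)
The plan is to verify the two defining properties of a retraction from Definition~\ref{retraction}, namely anti-extensivity and vacuum, by analysing the recursive clauses of $\kappa$ separately. Throughout I work with a fixed signature and a sentence $\varphi$ that is not an antilogy, i.e. with $Mod(\varphi) \neq Triv$, which in {\bf FOL} means $Mod(\varphi) \neq \emptyset$.

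\emph{Anti-extensivity.} First I would check $Mod(\kappa(\varphi)) \subseteq Mod(\varphi)$ clause by clause. For an antilogy $\tau$ this is trivial since $\kappa(\tau)=\tau$. For a purely universal formula $\forall x_1 \ldots \forall x_n.\psi$ we set $\kappa(\varphi)=\tau$ and $Mod(\tau)=\emptyset \subseteq Mod(\varphi)$, so this holds. The key case is a prenex formula $\varphi = Q_1 x_1 \ldots Q_n x_n.\psi$ with a nonempty set $E_\varphi$ of existential positions: here $\kappa(\varphi) = \bigvee_{i \in E_\varphi}\varphi_i$ where $\varphi_i$ is obtained by replacing the $i$-th quantifier (an $\exists$) by a $\forall$. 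The crucial observation is that for any model $\mathcal{M}$ and any prefix, replacing a single $\exists x_i$ by $\forall x_i$ yields a stronger sentence, i.e. $Mod(\varphi_i) \subseteq Mod(\varphi)$; this follows by a routine induction on the quantifier prefix using the fact that $\forall x_i.\theta \models \exists x_i.\theta$ over nonempty domains (all sorts are interpreted by nonempty sets in Definition~\ref{satisfaction system}'s {\bf FOL} clause). Taking the union over $i \in E_\varphi$ still gives a subset of $Mod(\varphi)$. Finally, for a conjunction $\bigwedge_j \chi_j$ of prenex disjuncts, $\kappa$ is applied componentwise, and since each $Mod(\kappa(\chi_j)) \subseteq Mod(\chi_j)$, intersecting gives $Mod(\kappa(\bigwedge_j \chi_j)) \subseteq Mod(\bigwedge_j \chi_j)$. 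So anti-extensivity holds in all cases.

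\emph{Vacuum.} Next I would show that iterating $\kappa$ eventually reaches $Triv$. The right measure is the total number of existential quantifier occurrences in $\varphi$ (summed over all prenex conjuncts/disjuncts), call it $e(\varphi)$. If $e(\varphi)=0$ then $\varphi$ is a conjunction of purely universal prenex formulas, so one more application of $\kappa$ sends each conjunct to $\tau$, hence $\kappa(\varphi)$ is an antilogy and $Mod(\kappa(\varphi))=\emptyset=Triv$. If $e(\varphi) > 0$, then in the clause $\kappa(Q_1 x_1\ldots Q_n x_n.\psi)=\bigvee_{i\in E_\varphi}\varphi_i$ every disjunct $\varphi_i$ has strictly fewer existential quantifiers than the original prenex formula (one $\exists$ became a $\forall$, and $\psi$ is quantifier-free so no new quantifiers appear), so $e(\kappa(\chi)) < e(\chi)$ for each prenex piece with a nonempty $E$. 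The subtle point is the bookkeeping for conjunctions and disjunctions: $\kappa$ of a disjunct $\varphi_i$ is itself a disjunction, and $\kappa$ distributes over the outer conjunction, so after each step the formula is again (equivalent to) a conjunction of prenex disjuncts with strictly smaller $e$-value, provided at least one piece still has an existential. When all pieces are universal we are in the $e=0$ case above. Hence $e(\kappa^k(\varphi))$ strictly decreases until it hits $0$, and then one further application yields an antilogy; so there is $k \in \mathbb{N}$ with $Mod(\kappa^k(\varphi)) = Triv$.

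\emph{Main obstacle.} The routine inductions (anti-extensivity of a single $\exists \leadsto \forall$ substitution, and the quantifier-counting for vacuum) are straightforward; the part that needs care is the normal-form bookkeeping — verifying that $\kappa$ maps the assumed normal form (conjunction of prenex disjuncts with quantifier-free matrices) back into a formula of the same shape, so that the clauses of the definition keep applying and the induction measure is well-defined after each iteration. One should also make explicit that nonemptiness of all carrier sets (so that $\forall x.\theta \models \exists x.\theta$) is what makes anti-extensivity work; this is exactly the hypothesis built into the {\bf FOL} satisfaction system. Once these points are pinned down, both required properties follow, and $\kappa$ is a retraction.
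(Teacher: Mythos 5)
Your proof is correct and follows the same route as the paper, whose own proof is a one-liner asserting that $\kappa$ is ``obviously anti-extensive'' and reaches the antilogy $\tau$ in finitely many steps; your elaboration (the nonempty-domain fact $\forall x.\theta \models \exists x.\theta$ underlying anti-extensivity of the $\exists\leadsto\forall$ replacement, and the strictly decreasing count of existential quantifier occurrences for the vacuum property) supplies exactly the details the paper omits. The normal-form closure issue you flag is genuine but is inherited from the paper's own definition of $\kappa$, whose clauses do not literally cover disjunctions of prenex formulas arising after one application; your resolution (re-reading the output as an equivalent conjunction/disjunction of prenex pieces with smaller existential count) is the intended repair.
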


\begin{proof}
$\kappa$ is obviously anti-extensive, and satisfies the vacuum property because in a finite number of steps, we always reach the antilogy $\tau$. 
\end{proof}

\begin{example}
To illustrate our approach, let us consider the example taken from~\cite{MP93} and defined by the KB which only contains the formula $\forall x. \forall y. \forall z. (p(x,y) \wedge p(y,z) \Rightarrow p(x,z))$ and the observation $\varphi = \exists w. p(w,w)$. According to the explanatory relation we consider (i.e. either $\rhd_{lcr}$ or $\rhd_{lnr}$), the retracted formula will be different. For $\rhd_{lcr}$, only the formula $\forall x. \forall y. \forall z. (p(x,y) \wedge p(y,z) \Rightarrow p(x,z))$ is retracted. But in this case, to preserve consistency, the maximum number of retraction steps to apply is $0$. Hence, we have many possible explanations such as the trivial one $\exists w. p(w,w)$ (i.e. $\varphi \rhd_{lcr} \varphi$). The minimal explanation $\exists x. \exists y. p(x,y) \wedge p(y,x)$ given in~\cite{MP93} also satisfies $\varphi \rhd_{lcr} \exists x. \exists y. p(x,y) \wedge p(y,x)$.


For $\rhd_{lnr}$, we can directly retract the formula $\forall x. \forall y. \forall z. (p(x,y) \wedge p(y,z) \Rightarrow p(x,z)) \wedge \exists w. p(w,w)$, but in this case to preserve consistency, the maximum number of retraction steps is $0$, and we come up with the previous case. Now, we can also consider the prenex form of $\forall x. \forall y. \forall z. (p(x,y) \wedge p(y,z) \Rightarrow p(x,z)) \wedge \exists w. p(w,w)$ which is $\forall x. \forall y. \forall z. \exists w. (p(x,y) \wedge p(y,z) \Rightarrow p(x,z)) \wedge p(w,w)$. Here, to preserve consistency, the maximum number of retraction steps to apply is $1$. We then obtain the formula $\forall x. \forall y. \forall z. \forall w. (p(x,y) \wedge p(y,z) \Rightarrow p(x,z)) \wedge p(w,w)$, and then a possible explanation here is $\varphi \rhd_{lnr} \forall w. p(w,w)$. In contrast, we now have that $\varphi {\not\!\rhd_{lnr}} \exists w. p(w,w)$ and $\varphi {\not\!\rhd_{lnr}} \exists x. \exists y. p(x,y) \wedge p(y,x)$.
\end{example}

\subsection{Explanatory relations based on retraction in {\bf MPL}}

By the classical first-order correspondence of {\bf MPL}, we can easily adapt the retraction defined for {\bf FOL} by replacing $\Diamond$ by $\Box$. Now, we can go further when dealing with formulas of the form $\Box \ldots \Box \varphi$. Indeed, in {\bf MPL}, we have that $Mod(\varphi) \subseteq Mod(\Box \varphi)$. Hence, we can remove in formulas the most external $\Box$. Of course, when dealing with modal logics such as $T$, $S4$, $B$ and $S5$ (i.e. the accessibility relation of Kripke models is always reflexive) where the formula $\Box \varphi \Rightarrow \varphi$ is a tautology, this is of no interest because in this case we have that $\varphi \equiv \Box \varphi$. This gives rise to the following retraction $\kappa$: here also we suppose that every formula $\varphi$ in $Sen$ is a conjunction of formulas $\varphi$ in the following normal form $\varphi = M_1 \ldots M_n. \psi$ where each $M_i$ is in $\{\Box,\Diamond\}$). 
\begin{itemize}
\item $\kappa(\tau) = \tau$ if $\tau$ is any antilogy;
\item $\kappa(\varphi) = \tau$ if $\varphi$ is modality free;
\item $\kappa(\Box \varphi) = \varphi$; 
\item Let $\varphi = M_1 \ldots M_n \psi$ be a formula such that the set $E_\varphi = \{i, 1 \leq i \leq n \mid M_i = \Diamond\} \neq \emptyset$. Then, $\kappa(M_1 \ldots M_n \varphi) = \bigvee_{i \in E_\varphi} \varphi_i$ where $\varphi_i = M'_1 \ldots M'_n.\psi_i$ such that for every $j \neq i$, $1 \leq j \leq n$, $M'_j = M_j$ and $M'_i = \Box$;
\item $\kappa(\bigwedge_{j}M^j_1 \ldots M^j_{n_j} \psi) = \bigwedge_{j}\kappa(M^j_1 \ldots M^j_{n_j} \psi)$. 
\end{itemize} 

\medskip
\begin{proposition}
$\kappa$ is a retraction. 
\end{proposition}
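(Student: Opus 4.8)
The plan is to verify the two defining conditions of a retraction (Definition~\ref{retraction}) for this $\kappa$ by induction following the clauses of its definition, as was done for \textbf{FOL} in Proposition~\ref{retraction pour FOL}. First I would record that in \textbf{MPL} negation is available, so $Triv = \emptyset$; hence for a $\varphi$ with $Mod(\varphi) \neq Triv$ (i.e.\ a satisfiable $\varphi$), anti-extensivity is just $Mod(\kappa(\varphi)) \subseteq Mod(\varphi)$, and the vacuum property says that $\kappa^k(\varphi)$ is unsatisfiable for some $k \in \mathbb{N}$. Since $Mod(\bigwedge_j \rho_j) = \bigcap_j Mod(\rho_j)$ and a model of a conjunction models each conjunct, both conditions reduce to the case of a single \emph{row} $\varphi = M_1 \ldots M_n\psi$ with $\psi$ modality free and satisfiable, the conjunction clause of $\kappa$ being then handled by intersecting the corresponding model classes.

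For anti-extensivity on a row there are three cases. If $n = 0$ then $\kappa(\varphi) = \tau$ and $Mod(\tau) = \emptyset \subseteq Mod(\varphi)$. If $M_1 = \Box$, then $\kappa(\varphi) = M_2 \ldots M_n\psi =: \chi$, and here I would use that model-level satisfaction in \textbf{MPL} is truth at \emph{every} world: if $\chi$ holds at every world of $(I,W,R)$ then a fortiori at every successor of every world, so $(I,W,R) \models \Box\chi$, giving $Mod(\chi) \subseteq Mod(\Box\chi) = Mod(\varphi)$. The remaining case is the delicate one, and the point I expect to be the main obstacle: when the leading modality is $\Diamond$ (so $1 \in E_\varphi$) we have $\kappa(\varphi) = \bigvee_{i \in E_\varphi} \varphi_i$, hence $Mod(\kappa(\varphi)) = \bigcup_{i \in E_\varphi} Mod(\varphi_i)$, and it remains to show $Mod(\varphi_i) \subseteq Mod(\varphi)$ for each $i$ --- that replacing one positive occurrence of $\Diamond$ by $\Box$ (the modality-free matrix $\psi$ being left untouched) only shrinks the model class. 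This is the modal analogue of the inclusion $Mod(\forall x.\beta) \subseteq Mod(\exists x.\beta)$ used in Proposition~\ref{retraction pour FOL}; just as that step rests on domains being non-empty, here it requires the accessibility relation to be serial, which in particular holds for the reflexive frames relevant in this section. Granting $Mod(\Box\alpha) \subseteq Mod(\Diamond\alpha)$ together with monotonicity of positive modal contexts, $Mod(\varphi_i) \subseteq Mod(\varphi)$ follows, and so does anti-extensivity for conjunctions of rows. (When $M_1 = \Box$ and $E_\varphi \neq \emptyset$ the third and fourth clauses of $\kappa$ overlap; fixing any priority between them, e.g.\ stripping the leading $\Box$, leaves the argument unchanged, since both resolutions give an anti-extensive step.)

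For the vacuum property I would exhibit a well-founded complexity measure that $\kappa$ strictly decreases. Assign to a row $M_1 \ldots M_n\psi$ the pair $(n,d) \in \mathbb{N} \times \mathbb{N}$, where $d$ is the number of the $M_k$ equal to $\Diamond$, and to any Boolean combination of rows the lexicographic maximum of these pairs over the rows occurring in it; $\kappa$ is extended to disjunctions by $\kappa(\bigvee_k \rho_k) = \bigvee_k \kappa(\rho_k)$, as is implicit in its definition. Stripping a leading $\Box$ lowers $n$ with $d$ fixed, the $\Diamond \mapsto \Box$ clause keeps $n$ and lowers $d$, and a modality-free row goes to $\tau$; so each application of $\kappa$ strictly decreases the measure in the lexicographic (hence well-founded) order on $\mathbb{N} \times \mathbb{N}$. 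After finitely many steps every surviving row is modality free, and one more application of $\kappa$ turns the formula into one whose only atoms are $\tau$, which is unsatisfiable; since $Triv = \emptyset$ this is exactly $Mod(\kappa^k(\varphi)) = Triv$ for the corresponding $k$. The only real bookkeeping is to check that the output of $\kappa$ stays within the class of Boolean combinations of rows, so that the measure and the extension of $\kappa$ to disjunctions make sense at each step; this is routine.
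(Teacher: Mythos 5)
Your proposal follows the same route as the paper's (one-line) proof: anti-extensivity clause by clause, using $Mod(\chi)\subseteq Mod(\Box\chi)$ under the paper's global (truth-at-every-world) satisfaction for the $\Box$-stripping step, and the vacuum property from a finite, strictly decreasing modality count; the lexicographic measure and the explicit extension of $\kappa$ to the disjunctions it produces are just careful bookkeeping that the paper leaves implicit. The one substantive point where you go beyond the paper is the $\Diamond\mapsto\Box$ clause: you are right that $Mod(\varphi_i)\subseteq Mod(\varphi)$ there rests on $Mod(\Box\alpha)\subseteq Mod(\Diamond\alpha)$, hence on seriality of the accessibility relation, which the paper's definition of \textbf{MPL} does not impose (take $R=\emptyset$: every model satisfies $\Box p$ but none satisfies $\Diamond p$, so anti-extensivity fails for $\varphi=\Diamond p$). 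This is not a defect of your argument but a hypothesis the paper silently omits; the analogy with \textbf{FOL}, where $Mod(\forall x.\beta)\subseteq Mod(\exists x.\beta)$ holds because domains are non-empty, breaks down precisely because the standard translation guards the quantifiers by $R$. Your remark on the overlap between the $\Box$-stripping clause and the $E_\varphi\neq\emptyset$ clause is likewise a fair observation about the definition rather than about the proof. So: same approach as the paper, correctly executed, with an explicitly flagged seriality assumption that the proposition actually needs.
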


\begin{proof}
Similarly to Propostion~\ref{retraction pour FOL} and the fact that $Mod(\varphi) \subseteq Mod(\Box \varphi)$, $\kappa$ is anti-extensive and satisfies the vacuum property. 
\end{proof}

\subsection{Explanatory relations based on retraction in {\bf DL}}
\label{explanatory relations in DL}

Abduction in DL can take different forms: concept abduction, TBox abduction, ABox abduction and knowledge base abduction (see e.g.~\cite{elsenbroich2006,HalBriKlaDL14,klarman2011}).

\medskip

\begin{definition}[Abduction types in DL]
Let $\mathcal{L}, \mathcal{L}'$ be two arbitrary description logics, $\K=(\T,\A)$ a knowledge base in $\mathcal{L}$ with $\T$ the TBox and $\A$ the Abox, $C, D$ two concepts in $\mathcal{L}$ satisfiable with respect to $\K$ (i.e. that admit non empty interpretations). Abduction forms in DL are as follows:\\
-- {\em Concept abduction:} given an observation concept $O$ in $\mathcal{L}$ satisfiable w.r.t. $\K$, the set of explanations introduced in Definition~\ref{set of explanations} writes as:  
\[
Expla_\K(O) = \{H \mid H \mbox{ satisfiable w.r.t. } \K 
~\mbox{and}~\K  \models H \sqsubseteq O\} \,.
\] 
The set of concepts in $Expla_\K(O)$ may possibly be expressed in  another description logic $\mathcal{L}'$.\\
-- {\em TBox abduction:} let $C \sqsubseteq D$ be satisfiable w.r.t. $\K$, the set of explanations is made of axioms  defined as:
\[
Expla_\K(C \sqsubseteq D) = \{E\sqsubseteq F  \mid E \sqsubseteq F  \mbox{ satisfiable w.r.t. } \K 
 ~\mbox{and}~\K \cup \{E \sqsubseteq F\}  \models C \sqsubseteq D\} \,.
\] 
-- {\em ABox abduction:}  let $S_a$ be a set of assertions representing the observation, the set of explanations is the set  $S_b$ of
ABox assertions such that $S_b \mbox{ satisfiable w.r.t. } \K ~\mbox{and}~\K \cup S_b \models S_a$.\\
-- {\em KB abduction:} let $\{\varphi\}$ be a consistent set of ABox or TBox assertions w.r.t. $\K$. A solution
of knowledge base abduction, considered as a combination of TBox abduction and ABox abduction, is
any  finite set $S = \{\psi_i, i=1... n \}$ in $\mathcal{L}'$  satisfiable w.r.t. $\K$ and such that $\K \cup S \models \{\varphi\}$.
\end{definition}


As in any other logic, additional constraints can be used to find the preferred explanations in $Expla_\K$ (minimality, etc. (see e.g.~\cite{bienvenu2008}).

Let us illustrate these notions on an example inspired from an image interpretation task.
\begin{figure}[htp]
  \centering
  \subfloat[Complete picture]{\label{fig:edge-a}\includegraphics[scale=0.55]{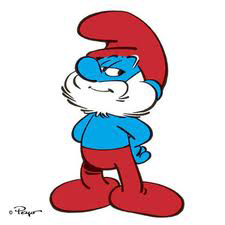}}
  \hspace{5pt}
  \subfloat[Region $a$]{\label{fig:contour-b}\includegraphics[scale=0.55]{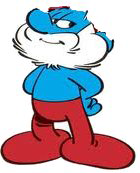}}
  \hspace{5pt}
  \subfloat[Region $b$]{\label{fig:contour-c}\includegraphics[scale=0.55]{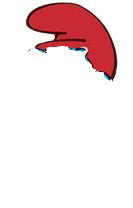}}
   \hspace{5pt}
  \subfloat[Region $c$]{\label{fig:contour-d}\includegraphics[scale=0.55]{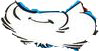}}
  \caption{Picture of Smurf and three regions $a$, $b$, $c$.}
  \label{fig:smurf}
\end{figure}
\begin{example}
\label{ex:smurf}
\footnote{This example results from a discussion with Felix Distel during his visit at LTCI, T\'el\'ecom ParisTech (summer 2013).} Suppose we have an image, in which we have identified three regions $a, b$ and $c$ (Figure~\ref{fig:smurf}). Region $b$ has been identified as \verb+Hat+ and has a color attribute
\verb+Red+, while region $c$ has been identified as \verb+Beard+. There is a spatial relation \verb+hasOnTop+ that
links $a$ to $b$, and a spatial relation \verb+hasPart+ linking $a$ to $c$. Furthermore, the background
knowledge tells us that smurf leaders are smurfs that wear red hats (i.e. have them on
top) and have beards.
In this example, a good approach  should be able to come up with the explanation that region a might be a smurf leader.

The background knowledge is encoded as a TBox:
\[
\begin{aligned}
\T= \{\verb+SmurfLeader+ &\sqsubseteq \exists \verb+hasPart.Beard+ \sqcap  \exists \verb+hasOnTop.RedHat+,\\
\verb+RedHat+ &\equiv \verb+Hat+  \sqcap \exists \verb+hasColor.Red+\}
\end{aligned}
\]
The observation is encoded as an ABox
\[
\begin{aligned}
\A_o= \{(a,b):& \verb+hasOnTop+,\\
            (a,c):&\verb+hasPart+,\\
            b:&\verb+Hat+,\\
            b:&\exists\verb+hasColor.Red+,\\
            c:&\verb+Beard+\}
\end{aligned}
\]
A preferred explanation for this example is $\A_e=\{a:\verb+SmurfLeader+\}$.

In Concept abduction, we need to say which region we want to explain, here region $a$. Then the \emph{most specific concept} for this region is: 
\[
O=msc(a)= \exists hasPart.Beard \sqcap \exists hasOnTop.(Hat \sqcap hasColor.Red) .
\]
Concept Abduction now looks for a concept description $C$ such that
$\mathcal{T} \models C \sqsubseteq O$. A solution that is both length minimal\footnote{The length of a concept is defined as the number of atomic concepts appearing in it.} and $\sqsubseteq$-maximal would
be $C = SmurfLeader$ which is one of the expected solution.
\end{example}

In this paper, we consider the general form of abduction in DL, i.e. KB abduction. The other forms can be seen as particular cases. Explanatory relations  of a KB in DL can be defined in two ways: 
\begin{itemize} 
\item When the logic is equipped with the disjunction and full negation constructors, as it is the case of the logic $\ALC{}$ and its extensions,  the theory is transformed into an \emph{internalized concept} on which the retraction operators act. The internalized concept is defined as follows:   $C_{\T}:=\bigsqcap_{(C\sqsubseteq D) \in \T} (\neg C \sqcup D)$.  When Abox assertions are considered one can also internalize the Abox to an equivalent concept provided that nominals are part of the syntax. Nominals are concept descriptions having as semantics: $(\{o\})^\I=\{o^\I\}$, where $(\_^\I,\Delta^\I)$ is an interpretation. Then the Abox assertions are transformed into concept inclusions as follows: $a:C$ corresponds to $\{a\} \sqsubseteq C$, $(a,b):r$ corresponds to  $\{a\} \sqsubseteq r.\{b\}$, etc. 
\item When the logic does not allow for full negation, a possible workaround consists in retracting all the formulas at the same time. Hence, one needs to define concrete retraction operators both on concepts and on formulas. $\Sigma$-formula retraction can be defined in two ways (other definitions may also exist). For sentences of the form $C\sqsubseteq D$, a first possible approach consists in retracting the set of models of $D$ while the second one amounts to  ``relax'' the set of models of $C$ (see e.g.~\cite{AABH18} for definitions of relaxations in satisfaction systems, with several examples in DL).
\end{itemize}
  
Note also that retracting a concept (or formula) amounts to ``relaxing" its negated form, and can then be seen as its dual operator.  The notion of concept relaxation has been first introduced in description logics to define dissimilarity measures between concepts in~\cite{DAB14a,DAB14b}, and has been extended to define revision operators in arbitrary logics in~\cite{AABH18}. It is extensive and exhaustive, i.e. \ $\exists k \in \nat$ such that $\rho^k(C) \equiv \top$, where $\rho^k$ denotes $k$ iterations of a relaxation $\rho$. Relaxation of formulas have been defined from retraction of concepts to come up with revision operators, in particular within the context of description logics. In~\cite{AABH18}, some retraction operators of DL-concept descriptions have been introduced. These operators, designed for the purpose of revision are too strong, since their aim is to remove formulas in the background knowledge that are inconsistent with the new acquired one. The philosophy behind abduction is slightly different. One should add new knowledge to the set of consequences of the background theory. Hence, the retraction operator should act on the entire KB rather than just seeking a subpart that is more appropriate to revise.Concretely, this means that instead of relaxing the formulas, potentially each one to a different extent, as done for revision, for abduction, all formulas have to be retracted in the same way.

Hence, we need to introduce new retraction operators suited to the purpose of abduction. In what follows, we restrict ourselves to the context of the logic $\ALC{}$, as defined in Section~\ref{institutions}, possibly enriched with nominals.

\medskip

\begin{definition}[Concept Retraction]
\label{concepterosion}
Let $\C{\Sigma}$ be the set of concept descriptions defined over a signature $\Sigma$. A \emph{(concept) retraction} is an operator $\kappa \colon \C{\Sigma}
  \rightarrow \C{\Sigma}$ that satisfies the following two properties for all
  $C \in \C{\Sigma}$ such that $C$ is not equivalent to $\top$:
  \begin{enumerate}
    \item $\kappa$ is \emph{anti-extensive}, i.e.\ $\kappa(C) \sqsubseteq C $, and 
    \item $\kappa$ satisfies the {\em vacuum} property, i.e. $\exists k \in \mathbb{N}, \kappa^k(C) = \bot$ where $\kappa^0$ is the identity mapping, and for all $k > 0$, $\kappa^k(C) = \kappa(\kappa^{k-1}(C))$.
  \end{enumerate}
\end{definition} 
This definition is a direct instantiation to DL of Definition~\ref{retraction}. Now we propose, as an example, the following operator to define a particular retraction in $\ALC$.

\medskip

\begin{definition}
Given an $\ALC$-concept description $C$ we define an operator $\kappa_f$
recursively as follows.
\begin{itemize}
\item For $C = A \in N_C$ (i.e. an atomic concept), $\kappa_f(C) = \bot$.
\item For $C = \neg A $, $\kappa_f(C) = \bot$.
\item For $C=\bot$,   $\kappa_f(C) = \bot$.
\item For $C = \top$  $\kappa_f(C) = \top$.
\item For $C = C_1 \sqcup C_2$,  
$\kappa_f(C_1 \sqcup C_2) = \left(\kappa_f(C_1) \sqcup C_2\right) \sqcap \left(C_1 \sqcup \kappa_f(C_2)\right)$.
\item For $C = C_1 \sqcap C_2$,   $\kappa_f(C_1 \sqcap C_2) = \kappa_f(C_1)\sqcap \kappa_f(C_2)$.
\item For $C=\forall r.D$, with $r\in N_R$,  $\kappa_f(C)=\forall r.\kappa_f(D)$.
\item  For $C=\exists r.D$,  $\kappa_f(C)=(\forall r.D) \sqcup \left(\exists r.\kappa_f(D)\right)$.
\end{itemize}
\label{def:retract_ALC}
\end{definition}
Note that this definition assumes that any concept is rewritten using standard De Morgan rules so that negations apply only on atomic concepts.
\medskip

\begin{proposition}
The operator $ \kappa_f$ is a retraction.
\end{proposition}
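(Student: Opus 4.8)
The plan is to verify the two defining properties of a concept retraction from Definition~\ref{concepterosion}, namely anti-extensivity ($\kappa_f(C) \sqsubseteq C$) and the vacuum property ($\exists k \in \nat,\ \kappa_f^k(C) = \bot$ for $C \not\equiv \top$), both by structural induction on the concept description $C$, exploiting the fact that $\kappa_f$ is defined recursively over exactly that structure and that we may assume negations have been pushed to atomic concepts via De Morgan.

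First I would establish \emph{anti-extensivity} by induction on $C$. The base cases $C = A$, $C = \neg A$, $C = \bot$ are immediate since $\kappa_f(C) = \bot \sqsubseteq C$, and $C = \top$ is immediate since $\kappa_f(\top) = \top \sqsubseteq \top$. For $C = C_1 \sqcap C_2$, the induction hypotheses $\kappa_f(C_i) \sqsubseteq C_i$ give $\kappa_f(C_1) \sqcap \kappa_f(C_2) \sqsubseteq C_1 \sqcap C_2$ by monotonicity of $\sqcap$. For $C = C_1 \sqcup C_2$, from $\kappa_f(C_i) \sqsubseteq C_i$ we get $\kappa_f(C_1) \sqcup C_2 \sqsubseteq C_1 \sqcup C_2$ and $C_1 \sqcup \kappa_f(C_2) \sqsubseteq C_1 \sqcup C_2$, so their intersection is also $\sqsubseteq C_1 \sqcup C_2$. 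For $C = \forall r.D$, semantically $(\forall r.\kappa_f(D))^{\I} \subseteq (\forall r.D)^{\I}$ follows from $\kappa_f(D)^{\I} \subseteq D^{\I}$ (the induction hypothesis) since the $\forall r.\_$ operator is monotone. For $C = \exists r.D$, I must check $(\forall r.D) \sqcup (\exists r.\kappa_f(D)) \sqsubseteq \exists r.D$: the second disjunct is contained in $\exists r.D$ by monotonicity of $\exists r.\_$ and the induction hypothesis; the first disjunct $\forall r.D$ is \emph{not} in general contained in $\exists r.D$ (it fails on elements with no $r$-successor), so this is the subtle point --- but note that if $C = \exists r.D$ is \emph{not equivalent to $\top$}, one should be more careful. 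Actually the correct reading is that anti-extensivity is only required for $C \not\equiv \top$; for $C = \exists r.D$ one checks directly that on any element $x$ with an $r$-successor, $x \in (\exists r.D)^{\I}$ forces the containment, while on an element with no $r$-successor $x \notin (\forall r.D)^{\I}$ and $x \notin (\exists r.\kappa_f(D))^{\I}$, so $x$ is in neither side; hence the containment holds pointwise. I would present this case carefully.

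Next I would establish the \emph{vacuum property} by a second structural induction, proving that for every $C \not\equiv \top$ there is some $k$ with $\kappa_f^k(C) = \bot$. The base cases $A$, $\neg A$, $\bot$ all satisfy $\kappa_f(C) = \bot$, so $k = 1$ (or $k=0$ for $\bot$). For $C = C_1 \sqcap C_2$ with $C \not\equiv \top$, at least one $C_i \not\equiv \top$; in fact if both are $\top$ then $C \equiv \top$, so by induction there are $k_1, k_2$ with $\kappa_f^{k_i}(C_i) \equiv \bot$ (taking $k_i$ large enough, using that once a component hits $\bot$ it stays $\bot$, or that $\kappa_f(\top) = \top$), and since $\kappa_f^k(C_1 \sqcap C_2) = \kappa_f^k(C_1) \sqcap \kappa_f^k(C_2)$ for all $k$, taking $k = \max(k_1,k_2)$ gives $\bot$. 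For $C = C_1 \sqcup C_2$ with $C \not\equiv \top$, \emph{both} $C_i \not\equiv \top$; here I would prove by induction on $k$ that $\kappa_f^k(C_1 \sqcup C_2) \sqsubseteq \kappa_f^{\lceil k/2 \rceil}(C_1) \sqcup \kappa_f^{\lceil k/2 \rceil}(C_2)$ or a similar bookkeeping estimate — the defining equation $\kappa_f(C_1 \sqcup C_2) = (\kappa_f(C_1) \sqcup C_2) \sqcap (C_1 \sqcup \kappa_f(C_2))$ shows each application strictly decreases a suitable complexity measure (e.g. the multiset of "retraction depths remaining" of the two disjuncts), so after enough steps both disjuncts are $\bot$ and the whole concept collapses to $\bot$. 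For $C = \forall r.D$: if $\forall r.D \not\equiv \top$ then $D \not\equiv \top$, so by induction $\kappa_f^k(D) \equiv \bot$ for some $k$, and $\kappa_f^k(\forall r.D) = \forall r.\kappa_f^k(D) \equiv \forall r.\bot$, which is not yet $\bot$ — so one more application (since $\forall r.\bot$ is of the form $\forall r.(\text{atomic-ish})$, actually $\kappa_f(\forall r.\bot) = \forall r.\kappa_f(\bot) = \forall r.\bot$, which does \emph{not} collapse!). This is the main obstacle: $\forall r.\bot$ is a fixed point of $\kappa_f$ that is not $\bot$ and not equivalent to $\top$. I expect the resolution is that $\forall r.\bot$ must be understood as rewritten/normalized (an element is in $(\forall r.\bot)^{\I}$ iff it has no $r$-successor, which is genuinely not $\top$), so either the definition implicitly normalizes $\forall r.\bot$ to something handled by another clause, or the intended reading treats $\forall r.D$ with the convention that the recursion bottoms out appropriately; I would flag this and, following the paper's evident intent, argue the vacuum property using the convention that concepts are kept in a normal form in which such degenerate subconcepts are simplified. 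The analogous case $C = \exists r.D$, $\kappa_f(C) = (\forall r.D) \sqcup (\exists r.\kappa_f(D))$, combines the $\forall$-difficulty with the $\sqcup$-bookkeeping and would be handled the same way. The hard part, in summary, is the termination (vacuum) argument for the role-restriction cases, where a naive depth count does not obviously decrease; the fix is a carefully chosen well-founded complexity measure on normalized concepts together with the observation that $\sqcup$ forces simultaneous progress on both disjuncts.
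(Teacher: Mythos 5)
Your overall strategy---two structural inductions, one for anti-extensivity and one for the vacuum property---is exactly the approach the paper has in mind (its proof is a one-line appeal to induction on the structure of $C$). However, your treatment of the case $C=\exists r.D$ contains a concrete error. You claim that an element $x$ with no $r$-successor satisfies $x\notin(\forall r.D)^{\I}$; under the semantics stated in Section~\ref{institutions} the opposite holds: $(\forall r.D)^{\I}$ contains every successor-less $x$ vacuously. Consequently such an $x$ belongs to $\bigl((\forall r.D)\sqcup(\exists r.\kappa_f(D))\bigr)^{\I}=\kappa_f(\exists r.D)^{\I}$ but not to $(\exists r.D)^{\I}$, so the pointwise containment you assert is false in any interpretation containing an element without $r$-successors. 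Your case analysis therefore does not close the one case you yourself flagged as subtle; making it work requires either a seriality assumption on roles or a repaired clause such as $\kappa_f(\exists r.D)=(\forall r.D\sqcap\exists r.\top)\sqcup\exists r.\kappa_f(D)$, neither of which you supply.

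The second gap is the one you honestly flag but do not resolve: $\forall r.\bot$ is a fixed point of $\kappa_f$ that is neither $\bot$ nor equivalent to $\bot$ (its extension is the set of successor-less elements), so $\kappa_f^k(\forall r.D)=\forall r.\kappa_f^k(D)$ stabilizes at $\forall r.\bot$ and the vacuum property fails for $\forall r.D$ as the definition stands. Appealing to an unspecified normal form ``following the paper's evident intent'' is not a proof step; you would need to state the normalization explicitly (e.g.\ adding the clause $\kappa_f(\forall r.\bot)=\bot$, which is still anti-extensive) and then redo the induction with it, including the $\sqcup$ bookkeeping whose complexity measure you only sketch. As it stands, each of the two required properties has an unproved---and, without extra hypotheses, false---case, so the proposal does not establish the proposition; your analysis is nonetheless valuable in that it shows the paper's ``straightforward'' induction needs these caveats.
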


\begin{proof}
The proof is straightforward, by induction on the structure of $C$.
\end{proof}


\begin{example}
\label{retraction and inconsistencies}
Let us illustrate this retraction operator on Example~\ref{ex:smurf} using the explanatory relations introduced in Section~\ref{sec:retraction}. To ease the reading, we will note the concepts by capital letters and roles by small letters, e.g. B=\verb+Beard+, S=\verb+SmurfLeader+, H=\verb+Hat+, R=\verb+Red+, t=\verb+hasOnTop+, p=\verb+hasPart+,c=\verb+hasColor+.
The unfolded TBox writes as:
\[
\T=\{S\sqsubseteq\exists p.B \sqcap \exists t.(H\sqcap \exists c.R)\}
\]
and the observation writes as:
\[
\varphi = \exists p.B \sqcap \exists t.(H\sqcap \exists c.R)
\]
Note that $\T = \{ S \sqsubseteq \varphi\}$, and $C_\T = \neg S \sqcup \varphi$, where $C_\T$ is the internalized concept of the TBox.

First, note that we have:
\begin{equation}
\kappa_f(\varphi) = \kappa_f(\exists p.B \sqcap \exists t.(H \sqcap \exists c.R)) = \forall p.B \sqcap \forall t.(H \sqcap \exists c.R)
\label{eq:kappa1}
\end{equation}
and 
\begin{equation}
\kappa^2_f(\varphi) =  \kappa_f(\kappa_f(\varphi)) = \kappa_f(\forall p.B \sqcap \forall t.(H \sqcap \exists c.R)) = \bot 
\label{eq:kappa2}
\end{equation}

Let us now consider the two explanatory relations $\rhd_{\mathcal{C}_{lcr}}$ and $\rhd_{\mathcal{C}_{lnr}}$ defined in Section~\ref{sec:retraction}, applied here in $\ALC$ and with $\kappa_f$.

\begin{cas}[$\rhd_{\mathcal{C}_{lcr}}$]
The $\rhd_{\mathcal{C}_{lcr}}$ relation amounts to take the last retraction of $C_\T$ that is still consistent with $\varphi$, i.e.:
\[
\varphi \rhd_{\mathcal{C}_{lcr}} \psi \Leftrightarrow \psi \sqsubseteq \kappa^n_f(\neg S \sqcup \varphi)  \sqcap \varphi
\]
We have:
\[
\begin{split}
\kappa^1_f(\neg S \sqcup \varphi)& = \varphi \sqcap (\neg S \sqcup \kappa^1_f(\varphi))\\
\kappa^2_f(\neg S \sqcup \varphi)&=\kappa^1_f(\varphi) \sqcap (\kappa^1_f(\varphi) \sqcap (\kappa^2_f(\varphi) \sqcup \neg S)) \\
& = \kappa^1_f(\varphi) \sqcap (\kappa^2_f(\varphi) \sqcup \neg S))\\
&=\kappa^1_f(\varphi) \sqcap \neg S, \text{ since }\kappa^2_f(\varphi) = \bot \\
\kappa^3_f(\neg S \sqcup \varphi)&=\bot
\end{split}
\]
Then $n=2$ and
\[
\psi \sqsubseteq (\kappa_f^1(\varphi) \sqcap \neg S) \sqcap \varphi
\]
with $\kappa_f^1(\varphi)=\forall p.B \sqcap \forall t.(H \sqcap \exists c.R)$. Since $\kappa_f$ is anti-extensive, $\kappa_f^1(\varphi) \sqcap \varphi = \kappa_f^1(\varphi)$, and
\[
\psi \sqsubseteq (\forall p.B \sqcap \forall t.(H \sqcap \exists c.R) \sqcap \neg S)
\]
\end{cas}
\begin{cas}[$\rhd_{\mathcal{C}_{lnr}}$]
The $\rhd_{\mathcal{C}_{lnr}}$ relation amounts to take the last non empty retraction of $C_\T \sqcap \varphi$, i.e.:
\[
\varphi \rhd_{\mathcal{C}_{lnr}} \psi \Leftrightarrow \psi \sqsubseteq
 \kappa^n_f((\neg S \sqcup \varphi)\sqcap \varphi) 
\]
with the largest possible value of $n$ such that the retraction is not empty.
Since we have $(\neg S \sqcup \varphi)\sqcap \varphi = \varphi$, according to Equations~\ref{eq:kappa1} and~\ref{eq:kappa2}, $n=1$, and 
then
\[
\psi \sqsubseteq \forall p.B \sqcap \forall t.(H \sqcap \exists c.R) \sqcup (S \sqcap \neg  \varphi)
\]
A possible solution is then $S$ according to subset minimality, which well fits the intuition.
\end{cas}
\end{example}

Several other retractions could be proposed. In particular, several relaxations proposed in~\cite{AABH18} for revision could be modified to become retractions. For instance, relaxing $C \sqsubseteq D$ can be performed either by retracting $C$ or by relaxing $D$. Similarly, retracting $C \sqsubseteq D$ could be performed either by relaxing $C$ or by retracting $D$. 

\section{Conclusion}

In this paper, we proposed a new framework for abduction in satisfaction systems, by introducing the notion of cutting, which provides a structure on the set of models among which an explanation can be found. Inspired by previous work in propositional logic where abduction was defined from morphological erosions, we proposed to define cuttings from the more general notion of retraction, and prove a set of rationality postulates for the derived explanatory relations. The generic feature of the proposed approach is illustrated by providing concrete examples of retractions, cuttings and explanatory relations in various logics.

Future work will aim at further analyzing the structure of the set of cuttings $\mathcal{C}$ for a theory $T$, and the properties of the derived relation $\preceq_{\mathcal{C}}$. The examples in DL could also be further investigated, by considering other types of retractions as well as various fragments of $\ALC$, as done for revision in~\cite{AABH18}. Links between the proposed approach with other abduction methods could also deserve to be investigated, such as with sequent calculus, prime implicants~\cite{quine1955} or equational logic~\cite{echenim2012,echenim2013}. To address the question of uncertainty in the observations, or in the theory, the proposed approach could be extended to the case of fuzzy logic, based on our previous work on mathematical morphology in the framework of institutions~\cite{MA:arXiv-17}. Finally applications will be further developed, in particular for image understanding and spatial reasoning.

\bibliographystyle{elsart-num-sort}
\bibliography{biblio}
\end{document}